\definecolor{linkcolor}{RGB}{83,83,182}
\newtheorem{theorem}{Theorem} 
\newtheorem{lemma}[theorem]{Lemma}
\newtheorem{example}{Example}
\declaretheoremstyle[
    shaded={bgcolor=\color{HTML}{E9F6FF}}
]{shadedtheorem}
\definecolor{shadedBG}{HTML}{EFFEF0}
\declaretheoremstyle[
    shaded={bgcolor=shadedBG!50}
]{shadedlemma}
\definecolor{bggreen}{HTML}{EFFEF0}
\definecolor{bgblue}{HTML}{D5EAFA}
\title{Cohort Squeeze: Beyond a Single Communication Round per Cohort in Cross-Device Federated Learning}
\author{Kai Yi$^{1}$ \qquad Timur Kharisov$^{1,2}$ \qquad Igor Sokolov$^{1}$ \qquad Peter Richt\'{a}rik$^{1,3}$\\
\phantom{xx}
\\
$^1$Computer Science Program, CEMSE Division,\\ King Abdullah University of Science and Technology (KAUST)\\ Thuwal, 23955-6900, Kingdom of Saudi Arabia\\
$^2$Moscow Institute of Physics and Technology (MIPT)\\ 
Moscow, 141701, Russia\\
$^3$SDAIA-KAUST Center of Excellence in Data Science and \\Artificial Intelligence (SDAIA-KAUST AI)
}  
\theoremstyle{plain}
\theoremstyle{definition}
\newtheorem{definition}[theorem]{Definition}
\newtheorem{assumption}[theorem]{Assumption}
\theoremstyle{remark}
\newtheorem{fact}{Fact} 
\DeclareMathOperator*{\argmax}{arg\,max}
\DeclareMathOperator*{\argmin}{arg\,min}
\newcommand{\eqdef}{\coloneqq}
\definecolor{mygreen1}{rgb}{0,0.8,0}
\definecolor{bgcolor}{rgb}{0.93,0.99,1}
\definecolor{bgcolor2}{rgb}{0.8,1,0.8}
\definecolor{bgcolor3}{rgb}{0.50,0.90,0.50}
\definecolor{verylightgray}{rgb}{0.93, 0.93, 0.93}
\definecolor{mydarkgreen}{rgb}{0,0.42,0}
\definecolor{mydarkred}{rgb}{0.75,0,0}
\definecolor{mygreen2}{RGB}{0,120,20}
\newcommand{\algname}[1]{{\sf\color{mydarkred}\scalefont{0.90}{#1}}\xspace}
\newcommand{\sqn}[1]{{\left\lVert#1\right\rVert}^2}
\newcommand{\norm}[1]{\left\| #1 \right\|}
\newcommand{\sqnorm}[1]{\left\| #1 \right\|^2}
\newcommand{\Exp}[1]{\mathbb{E}\!\left[ #1 \right]}
\newcommand\ec[2][]{\ensuremath{\mathbb{E}_{#1} \left[#2\right]}}  
\newcommand\ev[1]{\ensuremath{\left \langle #1 \right \rangle}}  
\newcommand{\mbR}{\mathbb{R}}
\newcommand{\Rd}{\mathbb{R}^d}
\newcommand{\prox}{\mathop{\mathrm{prox}}\nolimits}
\newcommand{\sumin}{\sum_{i=1}^n}
\newcommand{\lt}{\left}
\newcommand{\rt}{\right}
\renewcommand{\sb}[1]{\left[#1\right]}
\newcommand{\cb}[1]{\lt\{#1\rt\}}
\newcommand{\fr}{\frac}
\newcommand{\eps}{\varepsilon}
\definecolor{colabel1}{RGB}{255, 127, 14}
\definecolor{colabel2}{RGB}{44, 160, 44}
\definecolor{colabel3}{RGB}{214, 39, 40}
\definecolor{colabel4}{RGB}{148, 103, 189}
\begin{document}

\maketitle

 \begin{abstract}
  Virtually all federated learning (FL) methods, including FedAvg, operate in the following manner: i) an orchestrating server sends the current model parameters to a cohort of clients selected via certain rule, ii) these clients then independently perform a local training procedure (e.g., via SGD or Adam) using their own training data, and iii) the resulting models are shipped to the server for aggregation. This process is repeated until a model of suitable quality is found. A notable feature of these methods is that each cohort is involved in a single communication round with the server only. In this work we challenge this algorithmic design primitive and investigate whether it is possible to “squeeze more juice” out of each cohort than what is possible in a single communication round. Surprisingly, we find that this is indeed the case, and our approach leads to up to 74\% reduction in the total communication cost needed to train a FL model in the cross-device setting. Our method is based on a novel variant of the stochastic proximal point method (SPPM-AS) which supports a large collection of client sampling procedures some of which lead to further gains when compared to classical client selection approaches. 
\end{abstract}
  
\section{Introduction}
Federated Learning (FL) is increasingly recognized for its ability to enable collaborative training of a global model across heterogeneous clients, while preserving privacy~\citep{FedAvg2016,FedAvg, FL-big,FedProx,SCAFFOLD,ProxSkip,ProxSkip-VR,FedP3}. This approach is particularly noteworthy in cross-device FL, involving the coordination of millions of mobile devices by a central server for training purposes~\citep{FL-big}. This setting is characterized by intermittent connectivity and limited resources. Consequently, only a subset of client devices participates in each communication round. Typically, the server samples a batch of clients (referred to as a \emph{cohort} in FL), and each selected client trains the model received from the server using its local data. Then, the server aggregates the results sent from the selected cohort. 
Another notable limitation of this approach is the constraint that prevents workers from storing states (operating in a stateless regime), thereby eliminating the possibility of employing variance reduction techniques.
We will consider a reformulation of the cross-device objective that assumes a finite number of workers being selected with uniform probabilities. Given that, in practice, only a finite number of devices is considered, i.e. the following finite-sum objective is considered:
\begin{align}\label{eqn:obj_1}
	\min_{x\in \mathbb{R}^d} f(x) \eqdef \frac{1}{n}\sum_{i=1}^n f_i(x).
\end{align}
This reformulation aligns more closely with empirical observations and enhances understanding for illustrative purposes. 
The extension to the expectation form of the following theory can be found in \cref{sec:exp-formulation}.
  
Current representative approaches in the cross-device setting include \algname{FedAvg} and \algname{FedProx}. In our work, we introduce a method by generalizing stochastic proximal point method with arbitray sampling and term as \algname{SPPM-AS}. This new method is inspired by the stochastic proximal point method (\algname{SPPM}), a technique notable for its ability to converge under arbitrarily large learning rates and its flexibility in incorporating various solvers to perform proximal steps. This adaptability makes \algname{SPPM} highly suitable for cross-device FL \citep{FedProx, yuan2022convergence, yuan2023sharper, SPPM, lin2024stochastic}.  
Additionally, we introduce support for an arbitrary cohort sampling strategy, accompanied by a theoretical analysis. We present novel strategies that include support for client clustering, which demonstrate both theoretical and practical improvements.

Another interesting parameter that allows for control is the number of local communications.
Two distinct types of communication, \emph{global} and \emph{local}, are considered.
A \emph{global} iteration is defined as a single round of communication between the server and all participating clients.
On the other hand, \emph{local} communication rounds are synchronizations that take place within a chosen cohort.
Additionally, we introduce the concept of total communication cost, which includes both local and global communication iterations, to measure the overall efficiency of the communication process.
The total communication cost naturally depends on several factors. These include the local algorithm used to calculate the prox, the global stepsize, and the sampling technique.
  
\begin{figure*}[!tb]
	\centering
	\begin{subfigure}[b]{0.45\textwidth}
		\centering
		\includegraphics[width=1.0\textwidth, trim=0 0 0 0, clip]{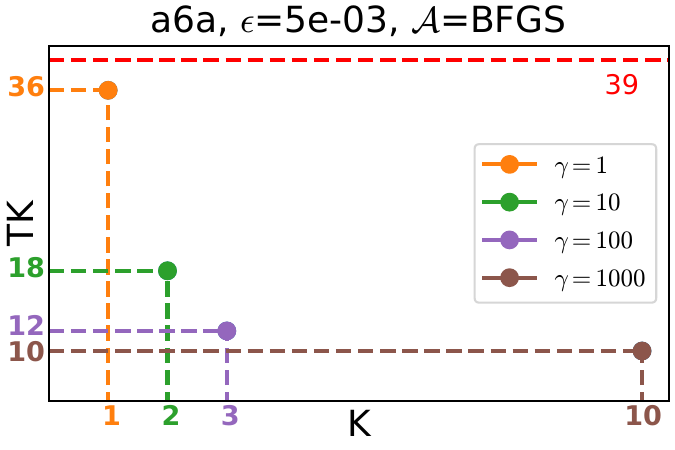}
	\end{subfigure}
	\hspace{5mm}
	\begin{subfigure}[b]{0.45\textwidth}
		\centering
		\includegraphics[width=1.0\textwidth, trim=0 0 0 0, clip]{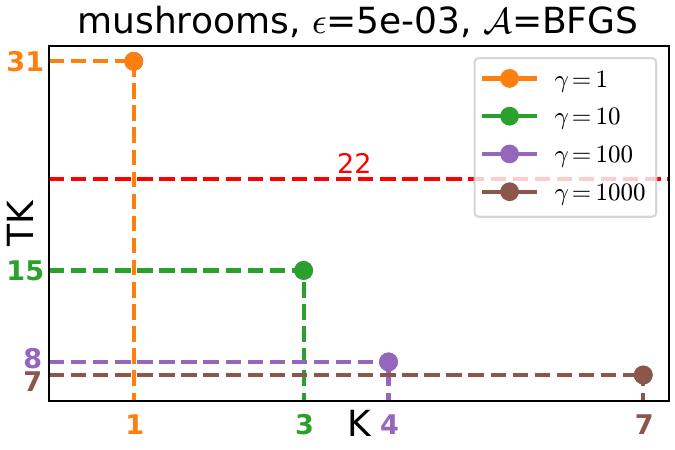}
	\end{subfigure}    
	\hfill  
	\vspace{-3mm}
	\caption{The total communication cost (defined as $TK$) with the number of local communication rounds $K$ needed to reach the target accuracy $\epsilon$ for the chosen cohort in each global iteration. The dashed red line depicts the communication cost of the \algname{FedAvg} algorithm. Markers indicate the $TK$ value for different learning rates $\gamma$ of our algorithm \algname{SPPM-AS}.}
	\label{fig:tissue0}
  \vspace{-4mm}
\end{figure*}  

  Previous results on cross-device settings consider only one local communication round for the selected cohort \citep{Li-local-bounded-grad-norms--ICLR2020, reddi2020adaptive, FedProx, wang2021novel, wang2021local, xu2021fedcm, malinovsky2023server, jhunjhunwala2023fedexp, FedSpeed, sun2024role}. 
 Our experimental findings reveal that \emph{increasing the number of local communication rounds within a chosen cohort per global iteration can indeed lower the total communication cost needed to reach a desired global accuracy level}, which we denote as $\eps$.
\cref{fig:tissue0} illustrates the relationship between total communication costs and the number of local communication rounds.
Assume that the cost of communication per round is $1$ unit. $K$ represents the number of local communication rounds per global iteration for the selected cohort, while $T$ signifies the \emph{minimum} number of global iterations needed to achieve the accuracy threshold $\epsilon$. Then, the total cost incurred by our method can be expressed as $TK$.
For comparison, the dashed line in the figure shows the total cost for the \algname{FedAvg} algorithm, which always sets $K$ to $1$, directly equating the number of global iterations to total costs. Our results across various datasets identify the optimal $K$ for each learning rate to achieve $\epsilon$-accuracy.
 \cref{fig:tissue0} shows that adding more local communication rounds within each global iteration can lead to a significant reduction in the overall communication cost. For example, when the learning rate is set to $1000$, the optimal cost is reached with $10$ local communication rounds, making $K=10$ a more efficient choice compared to a smaller number. On the other hand, at a lower learning rate of $100$, the optimal cost of $12$ is reached with $K=3$. This pattern indicates that as we increase the number of local communication rounds, the total cost can be reduced, and the optimal number of local communication rounds tends to increase with higher learning rates.

Our key \emph{contributions} are summarized as follows:

\noindent $\bullet$  We present and analyze \algname{SPPM-AS}, a novel approach within the stochastic proximal point method framework tailored for cross-device federated learning, which supports arbitrary sampling strategies. Additionally, we provide an analysis of standard sampling techniques and introduce new techniques based on clustering approaches. These novel techniques are theoretically analyzed, offering a thorough comparison between different methods.

\noindent $\bullet$  Our numerical experiments, conducted on both convex logistic regression models and non-convex neural networks, demonstrate that the introduced framework enables fine-tuning of parameters to surpass existing state-of-the-art cross-device algorithms. Most notably, we found that increasing the number of local communication rounds within the selected cohort is an effective strategy for reducing the overall communication costs necessary to achieve a specified target accuracy threshold.

\noindent $\bullet$  We offer practical guidance on the proper selection of parameters for federated learning applications. Specifically, we examine the potential choices of solvers for proximal operations, considering both convex and non-convex optimization regimes. Our experiments compare first-order and second-order solvers to identify the most effective ones.

\section{Method}\label{sec:method}
In this section, we explore efficient stochastic proximal point methods with arbitrary sampling for cross-device FL to optimize the objective \eqref{eqn:obj_1}. Throughout the paper, we denote $[n]\eqdef \cb{1,\dots,n}$. Our approach builds on the following assumptions.

\begin{assumption}\label{asm:differential}
    Function $f_{i}: \Rd \to \mbR$  is differentiable for all samples $i \in [n]$.
\end{assumption}

This implies that the function $f$ is differentiable. The order of differentiation and summation can be interchanged due to the additive property of the gradient operator.
\begin{align*}
    \nabla f(x) \stackrel{Eqn.~(\ref{eqn:obj_1})}{=} \nabla \left[\frac{1}{n}\sum_{i=1}^n f_{i}(x)\right] = \frac{1}{n}\sum_{i=1}^n\nabla f_{i}(x).
\end{align*}

\begin{assumption}\label{asm:strongly_convex}
    Function $f_i: \Rd \to \mbR$  is $\mu$-strongly convex for all samples $i\in \left[n\right]$, where $\mu > 0$. That is, 
    $
        f_{i}(y) + \ev{\nabla f_i(y), x - y} + \frac{\mu}{2}\sqn{x - y} \leq f_{i}(x),
    $ for all $x, y \in \Rd$.
\end{assumption}
This implies that $f$ is $\mu$-strongly convex and hence has a unique minimizer, which we denote by $x_\star$. We know that $\nabla f(x_\star) = 0$. Notably, we do \emph{not} assume $f$ to be $L$-smooth.

\subsection{Sampling Distribution}
Let $\mathcal{S}$ be a probability distribution over the $2^n$ subsets of $[n]$. Given a random set $S\sim \mathcal{S}$, we define 
\begin{align*}
	p_i \eqdef \operatorname{Prob}(i \in {S}), \quad i \in [n].
\end{align*}

We restrict our attention to proper and nonvacuous random sets. 

\begin{assumption}\label{asm:valid_sampling}
	$\mathcal{S}$ is proper (i.e., $p_i > 0$ for all $i\in [n]$) and nonvacuous (i.e., $\operatorname{Prob}({S} = \emptyset) = 0$).
\end{assumption}

Let $C$ be the selected cohort. Given $\emptyset \neq C \subseteq[n]$ and $i \in[n]$, we define
\begin{align}\label{eqn:main_8001}
	v_i(C):= \begin{cases}\frac{1}{p_i} & i \in C \\ 0 & i \notin C\end{cases}\ \Rightarrow\ f_C(x):=\frac{1}{n} \sum_{i=1}^n v_i(C) f_i(x) {=} \sum_{i \in C} \frac{1}{n p_i} f_i(x) .
\end{align}

Note that $v_i(S)$ is a random variable and $f_S$ is a random function. By construction, $\mathrm{E}_{S \sim \mathcal{S}}\left[v_i(S)\right]=1$ for all $i \in[n]$, and hence
{
	\begin{align*}
		\mathrm{E}_{{S} \sim \mathcal{S}}\left[f_{{S}}(x)\right] =\mathrm{E}_{{S} \sim \mathcal{S}}\left[\frac{1}{n} \sum_{i=1}^n v_i(S) f_i(x)\right] =\frac{1}{n} \sum_{i=1}^n \mathrm{E}_{S \sim \mathcal{S}}\left[v_i(S)\right] f_i(x)=\frac{1}{n} \sum_{i=1}^n f_i(x)=f(x).
\end{align*}}

Therefore, the optimization problem in \cref{eqn:obj_1} is equivalent to the stochastic optimization problem
\begin{align}\label{eqn:obj_exp}
	\min _{x \in \mathbb{R}^d}\left\{f(x):=\mathrm{E}_{S \sim \mathcal{S}}\left[f_S(x)\right]\right\} .
\end{align}

Further, if for each $C \subset[n]$ we let $p_C:=\operatorname{Prob}(S=C)$, then $f$ can be written in the equivalent form
{\small
	\begin{align}\label{eqn:8010}
		f(x)=\ec[S \sim \mathcal{S}]{f_S(x)}=\sum_{C \subseteq[n]} p_C f_C(x)=\sum_{C \subseteq[n], p_C>0} p_C f_C(x).
\end{align}}

\subsection{Core Algorithm}\label{sec:algorithm}
\noindent
\begin{minipage}[t]{0.44\textwidth}
Applying \algname{SPPM}~\citep{SPPM} to \cref{eqn:obj_exp}, we arrive at stochastic proximal point method with arbitrary sampling (\algname{SPPM-AS}, \cref{alg:sppm_as}):
\[
x_{t+1}=\operatorname{prox}_{\gamma f_{S_t}}\left(x_t\right),
\]
where $S_t \sim \mathcal{S}$.
\end{minipage}%
\hfill
\begin{minipage}[t]{0.54\textwidth}
\vspace{-14mm}
\begin{algorithm}[H]
    \caption{Stochastic Proximal Point Method with Arbitrary Sampling (\algname{SPPM-AS})}\label{alg:sppm_as}
    \begin{algorithmic}[1]
        \STATE \textbf{Input:} starting point $x^0\in \mathbb{R}^d$, distribution $\mathcal{S}$ over the subsets of $[n]$, learning rate $\gamma > 0$
        \FOR{$t = 0, 1, 2, \ldots$}
            \STATE Sample $S_t \sim \mathcal{S}$
            \STATE $x_{t+1} = \prox_{\gamma f_{S_t}}(x_t)$ 
        \ENDFOR
    \end{algorithmic}
\end{algorithm}
\end{minipage}
 
\begin{theorem}[Convergence of \algname{SPPM-AS}]\label{thm:sppm_as}
	Let \cref{asm:differential} (differentiability) and \cref{asm:strongly_convex} (strong convexity) hold. Let $\mathcal{S}$ be a sampling satisfying \cref{asm:valid_sampling}, and define
	{
		\begin{align}\label{eqn:main_8003}
			\mu_{\mathrm{AS}}:=\min _{C \subseteq[n], p_C>0} \sum_{i \in C} \frac{\mu_i}{n p_i}, \quad 			\sigma_{\star, \mathrm{AS}}^2 :=\sum_{C \subseteq[n], p_C>0} p_C\left\|\nabla f_C\left(x_{\star}\right)\right\|^2 .
	\end{align}}
	
	Let $x_0 \in \mathbb{R}^d$ be an arbitrary starting point. Then for any $t \geq 0$ and any $\gamma>0$, the iterates of \algname{SPPM-AS} (\cref{alg:sppm_as}) satisfy
	{\small
		$$
		\mathrm{E}\left[\left\|x_t-x_{\star}\right\|^2\right] \leq\left(\frac{1}{1+\gamma \mu_{\mathrm{AS}}}\right)^{2t}\left\|x_0-x_{\star}\right\|^2+\frac{\gamma \sigma_{\star, \mathrm{AS}}^2}{\gamma \mu_{\mathrm{AS}}^2+2 \mu_{\mathrm{AS}}} .
		$$
	}
\end{theorem}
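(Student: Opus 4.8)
The plan is to reduce the analysis to a one-step contraction estimate for the proximal operator and then unroll a scalar recursion. First I would record the structural fact that, for any cohort $C$ with $p_C>0$, the scaled loss $f_C=\sum_{i\in C}\frac{1}{np_i}f_i$ inherits strong convexity from its summands: since each $f_i$ is $\mu_i$-strongly convex (with $\mu_i=\mu$ under \cref{asm:strongly_convex}), $f_C$ is $\mu_C$-strongly convex with $\mu_C=\sum_{i\in C}\frac{\mu_i}{np_i}\ge \mu_{\mathrm{AS}}>0$ by \eqref{eqn:main_8003}. Consequently $f_C(\cdot)+\frac{1}{2\gamma}\|\cdot-x_t\|^2$ is strongly convex and differentiable, so the prox step in \cref{alg:sppm_as} is well defined and is equivalent to the implicit identity $x_{t+1}=x_t-\gamma\nabla f_{S_t}(x_{t+1})$.

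Next I would prove the key lemma: if $\phi$ is $\mu$-strongly convex and differentiable, then $\prox_{\gamma\phi}$ is a $\tfrac{1}{1+\gamma\mu}$-contraction. Writing $u=\prox_{\gamma\phi}(a)$, $v=\prox_{\gamma\phi}(b)$, one has $a-b=(u-v)+\gamma(\nabla\phi(u)-\nabla\phi(v))$; taking the inner product with $u-v$, using $\mu$-monotonicity of $\nabla\phi$, and applying Cauchy--Schwarz gives $\|a-b\|\ge(1+\gamma\mu)\|u-v\|$. The second ingredient is the fixed-point identity $x_\star=\prox_{\gamma f_{S_t}}\!\big(x_\star+\gamma\nabla f_{S_t}(x_\star)\big)$, valid because $z=\prox_{\gamma\phi}(y)\iff y=z+\gamma\nabla\phi(z)$. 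Applying the lemma with $a=x_t$ and $b=x_\star+\gamma\nabla f_{S_t}(x_\star)$ yields $\|x_{t+1}-x_\star\|^2\le \frac{1}{(1+\gamma\mu_{S_t})^2}\|x_t-x_\star-\gamma\nabla f_{S_t}(x_\star)\|^2$.

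Then I would pass to expectations. Since the right-hand side is a nonnegative quantity scaled by $\frac{1}{(1+\gamma\mu_{S_t})^2}$ and $\mu_{S_t}\ge\mu_{\mathrm{AS}}$, the coefficient can be replaced deterministically by $\frac{1}{(1+\gamma\mu_{\mathrm{AS}})^2}$. Taking the conditional expectation over $S_t$ given $x_t$, expanding the square, and using $\mathrm{E}_{S_t}[\nabla f_{S_t}(x_\star)]=\nabla f(x_\star)=0$ (which kills the cross term) together with $\mathrm{E}_{S_t}[\|\nabla f_{S_t}(x_\star)\|^2]=\sigma_{\star,\mathrm{AS}}^2$ (by \eqref{eqn:8010} and \eqref{eqn:main_8003}), then taking total expectation, gives $r_{t+1}\le q\,r_t+q\gamma^2\sigma_{\star,\mathrm{AS}}^2$ with $r_t:=\mathrm{E}[\|x_t-x_\star\|^2]$ and $q:=(1+\gamma\mu_{\mathrm{AS}})^{-2}\in(0,1)$.

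Finally I would unroll: $r_t\le q^t r_0+q\gamma^2\sigma_{\star,\mathrm{AS}}^2\sum_{j=0}^{t-1}q^j\le q^t r_0+\tfrac{q}{1-q}\gamma^2\sigma_{\star,\mathrm{AS}}^2$, and simplify $\tfrac{q}{1-q}=\frac{1}{(1+\gamma\mu_{\mathrm{AS}})^2-1}=\frac{1}{\gamma^2\mu_{\mathrm{AS}}^2+2\gamma\mu_{\mathrm{AS}}}$, which is exactly the claimed bound. I expect the only genuine subtlety to be handling the random contraction factor $\mu_{S_t}$: it must be lower-bounded by $\mu_{\mathrm{AS}}$ \emph{before} taking expectations (legitimate precisely because it multiplies a nonnegative term), and $\mu_{\mathrm{AS}}$ being a minimum over all cohorts with $p_C>0$ is what makes this bound uniform. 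Everything else is the standard \algname{SPPM} argument transported to the arbitrary-sampling reformulation \eqref{eqn:obj_exp}.
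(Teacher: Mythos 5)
Your proposal is correct and follows essentially the same route as the paper: the fixed-point identity $x_\star=\prox_{\gamma f_{S_t}}(x_\star+\gamma\nabla f_{S_t}(x_\star))$, the $\tfrac{1}{1+\gamma\mu}$-contraction of the prox of a strongly convex function, the vanishing cross term via $\mathrm{E}_{S_t}[\nabla f_{S_t}(x_\star)]=\nabla f(x_\star)=0$, and unrolling the resulting recursion with $\tfrac{q}{1-q}=\tfrac{1}{\gamma^2\mu_{\mathrm{AS}}^2+2\gamma\mu_{\mathrm{AS}}}$. The only cosmetic difference is that the paper first proves a separate lemma showing each $f_C$ (a conic combination of the $f_i$) is $\mu_{\mathrm{AS}}$-strongly convex and then invokes its generic \algname{SPPM} recursion lemma, whereas you keep the random constant $\mu_{S_t}$ and bound it by $\mu_{\mathrm{AS}}$ before taking expectations — an equivalent step.
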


\paragraph{Theorem interpretation.}
In the theorem presented above, there are two main terms: $\left(\nicefrac{1}{(1+\gamma \mu_{\mathrm{AS}})}\right)^{2t}$ and $\nicefrac{\gamma \sigma_{\star, \mathrm{AS}}^2}{(\gamma \mu_{\mathrm{AS}}^2+2 \mu_{\mathrm{AS}})}$, which define the convergence speed and neighborhood, respectively. Additionally, there are three hyperparameters to control the behavior: $\gamma$ (the global learning rate), $\mathrm{AS}$ (the sampling type), and $T$ (the number of global iterations). In the following paragraphs, we will explore special cases to provide a clear intuition of how the \algname{SPPM-AS} theory works.

\paragraph{Interpolation regime.} Consider the interpolation regime, characterized by $\sigma_{\star, \mathrm{AS}}^2 = 0$ . Since we can use arbitrarily large $\gamma > 0$, we obtain an arbitrarily fast convergence rate.
Indeed, $\left(\nicefrac{1}{(1+\gamma \mu_{\mathrm{AS}})} \right)^{2t}$ can be made arbitrarily small for any fixed $t \geq 1$, even $t=1$, by choosing $\gamma$ large enough. However, this is not surprising, since now $f$ and all functions $f_{\xi}$ share a single minimizer, $x_\star$, and hence it is possible to find it by sampling a small batch of functions even a single function $f_{\xi}$, and minimizing it, which is what the prox does, as long as $\gamma$ is large enough.

\paragraph{A single step travels far.} Observe that for $\gamma = \nicefrac{1}{\mu_{\mathrm{AS}}}$, we have $\nicefrac{\gamma \sigma^2_{\star, \mathrm{AS}}}{(\gamma \mu_{\mathrm{AS}}^2 + 2\mu_{\mathrm{AS}})} = \nicefrac{\sigma^2_{\star, \mathrm{AS}}}{3\mu_{\mathrm{AS}}^2}$. In fact, the convergence neighborhood $\nicefrac{\gamma \sigma^2_{\star, \mathrm{AS}}}{(\gamma \mu_{\mathrm{AS}}^2 + 2\mu_{\mathrm{AS}})}$ is bounded above by three times this quantity irrespective of the choice of the stepsize. Indeed, 
$
	\frac{\gamma \sigma^2_{\star, \mathrm{AS}}}{\gamma \mu_{\mathrm{AS}}^2 + 2\mu_{\mathrm{AS}}} \leq \min\left\{\frac{\sigma^2_{\star, \mathrm{AS}}}{\mu_{\mathrm{AS}}^2}, \frac{\gamma\sigma^2_{\star, \mathrm{AS}}}{\mu_{\mathrm{AS}}}\right\} \leq \frac{\sigma^2_{\star, \mathrm{AS}}}{\mu_{\mathrm{AS}}^2}.
$
That means that no matter how far the starting point $x_0$ is from the optimal solution $x_\star$, if we choose the stepsize $\gamma$ to be large enough, then we can get a decent-quality solution after a single iteration of \algname{SPPM-AS} already! Indeed, if we choose $\gamma$ large enough so that
$
	\left(\nicefrac{1}{1+\gamma\mu_{\mathrm{AS}}} \right)^2 \sqn{x_0 - x_\star} \leq \delta, 
$
where $\delta > 0$ is chosen arbitrarily, then for $t=1$ we get 
$
	\ec{\sqn{x_1 - x_\star}} \leq \delta + \nicefrac{\sigma^2_{\star, \mathrm{AS}}}{\mu_{\mathrm{AS}}^2}.
$

\paragraph{Iteration complexity.} We have seen above that an accuracy arbitrarily close to (but not reaching) $\nicefrac{\sigma^2_{\star, \mathrm{AS}}}{\mu_{\mathrm{AS}}^2}$ can be achieved via a single step of the method, provided that the stepsize $\gamma$ is large enough. 
Assume now that we aim for $\epsilon$ accuracy, where $\epsilon \leq \nicefrac{\sigma^2_{\star, \mathrm{AS}}}{\mu_{\mathrm{AS}}^2}$. We can show that with the stepsize $\gamma= \nicefrac{\varepsilon\mu_{\mathrm{AS}}}{\sigma^2_{\star, \mathrm{AS}}}$, we get
$
\mathrm{E}\left[\left\|x_t-x_{\star}\right\|^2\right] \leq \varepsilon
$ 
provided that 
$
t \geq \left(\frac{\sigma^2_{\star, \mathrm{AS}}}{2 \varepsilon \mu_{\mathrm{AS}}^2}+\frac{1}{2}\right) \log \left(\frac{2\left\|x_0-x_{\star}\right\|^2}{\varepsilon}\right).
$
We provide the proof in \cref{sec:proof_iteration_complexity}.
To ensure thoroughness, we present in \cref{sec:fedavg-sppm} the lemma of the inexact formulation for \algname{SPPM-AS}, which offers greater practicality for empirical experimentation. Further insights are provided in the subsequent experimental section.

\paragraph{General Framework.}
With freedom to choose arbitrary algorithms for solving the proximal operator one can see that \algname{SPPM-AS} is generalization for such renowned methods as \algname{FedProx} \citep{FedProx} and \algname{FedAvg} \citep{FedAvg2016}. A more particular overview of \algname{FedProx-SPPM-AS} is presented in further \cref{sec:Avg-SPPM-baselines}.

\subsection{Arbitrary Sampling Examples}\label{sec:as}
Details on simple Full Sampling (FS) and Nonuniform Sampling (NS) are provided in \cref{sec:samplings_table}. In this section, we focus more intently on the sampling strategies that are of particular interest to us.

\paragraph{Nice Sampling (NICE).} Choose $\tau \in[n]$ and let $S$ be a random subset of $[n]$ of size $\tau$ chosen uniformly at random. Then $p_i=\nicefrac{\tau}{n}$ for all $i \in[n]$. Moreover, let $\binom{n}{\tau}$ represents the number of combinations of $n$ taken $\tau$ at a time, $p_C=\frac{1}{\binom{n}{\tau}}$ whenever $|C|=\tau$ and $p_C=0$ otherwise. So,
{\small
$$
\mu_{\mathrm{AS}}=\mu_{\mathrm{NICE}}(\tau):=\min _{C \subseteq[n], p_C>0} \sum_{i \in C} \frac{\mu_i}{n p_i}=\min _{C \subseteq[n],|C|=\tau} \frac{1}{\tau} \sum_{i \in C} \mu_i,
$$}

\begin{align*}
    \sigma_{\star, \mathrm{AS}}^2&=\sigma_{\star, \mathrm{NICE}}^2(\tau):=\sum_{C \subseteq[n], p_C>0} p_C\left\|\nabla f_C\left(x_{\star}\right)\right\|^2\stackrel{Eqn.~(\ref{eqn:main_8001})}{=} \sum_{C \subseteq[n],|C|=\tau} \frac{1}{\binom{n}{\tau}}\left\|\frac{1}{\tau} \sum_{i \in C} \nabla f_i\left(x_{\star}\right)\right\|^2 .
\end{align*}

It can be shown that $\mu_{\mathrm{NICE}}(\tau)$ is a \emph{nondecreasing} function of $\tau$ (\cref{sec:proof_nice}). So, as the minibatch size $\tau$ increases, the strong convexity constant $\mu_{\mathrm{NICE}}(\tau)$ can only improve. Since $\mu_{\mathrm{NICE}}(1)=\min _i \mu_i$ and $\mu_{\mathrm{NICE}}(n)=$ $\frac{1}{n} \sum_{i=1}^n \mu_i$, the value of $\mu_{\mathrm{NICE}}(\tau)$ interpolates these two extreme cases as $\tau$ varies between 1 and $n$. Conversely, $\sigma_{\star, \mathrm{NICE}}^2(\tau)=\frac{\nicefrac{n}{\tau}-1}{n-1}\sigma_{\star, \mathrm{NICE}}^2(1)$ is a nonincreasing function, reaching a value of $\sigma_{\star, \mathrm{NICE}}^2(n) = 0$, as explained in \cref{sec:proof_nice}.

\paragraph{Block Sampling (BS).} Let $C_1, \ldots, C_b$ be a partition of $[n]$ into $b$ nonempty blocks. For each $i \in[n]$, let $B(i)$ indicate which block $i$ belongs to. In other words, $i \in C_j$ if $B(i)=j$. Let $S=C_j$ with probability $q_j>0$, where $\sum_j q_j=1$. Then $p_i=q_{B(i)}$, and hence \cref{eqn:main_8003} takes on the form
\begin{align*}
    \mu_{\mathrm{AS}}=\mu_{\mathrm{BS}}:=\min _{j \in[b]} \frac{1}{n q_j} \sum_{i \in C_j} \mu_i,\quad
    \sigma_{\star, \mathrm{AS}}^2=\sigma_{\star, \mathrm{BS}}^2:=\sum_{j \in[b]} q_j\left\|\sum_{i \in C_j} \frac{1}{n p_i} \nabla f_i\left(x_{\star}\right)\right\|^2 .
\end{align*}

\emph{Considering two extreme cases:} If $b=1$, then \algname{SPPM-BS} = \algname{SPPM-FS} = \algname{PPM}. So, indeed, we recover the same rate as \algname{SPPM-FS}. If $b=n$, then \algname{SPPM-BS} = \algname{SPPM-NS}. So, indeed, we recover the same rate as \algname{SPPM-NS}. We provide the detailed analysis in \cref{sec:appendix_extreme_bs}.

\paragraph{Stratified Sampling (SS).} Let $C_1, \ldots, C_b$ be a partition of $[n]$ into $b$ nonempty blocks, as before. For each $i \in[n]$, let $B(i)$ indicate which block does $i$ belong to. In other words, $i \in C_j$ iff $B(i)=j$. Now, for each $j \in[b]$ pick $\xi_j \in C_j$ uniformly at random, and define $S=\cup_{j \in[b]}\left\{\xi_j\right\}$. Clearly, $p_i=\frac{1}{\left|C_{B(i)}\right|}$. Let's denote $\mathbf{i}_b\eqdef (i_1, \cdots, i_b), \mathbf{C}_b \eqdef C_1\times\cdots \times C_b$. Then, \cref{eqn:main_8003} take on the form
{\small
\begin{align*}
\mu_{\mathrm{AS}}=\mu_{\mathrm{SS}}:=\min_{\mathbf{i}_b \in \mathbf{C}_b} \sum_{j=1}^b \frac{\mu_{i_j}\left|C_j\right|}{n}, \quad
    \sigma_{\star, \mathrm{AS}}^2=\sigma_{\star, \mathrm{SS}}^2:=\sum_{\mathbf{i}_b \in \mathbf{C}_b}\left(\prod_{j=1}^b \frac{1}{\left|C_j\right|}\right)\left\|\sum_{j=1}^b \frac{\left|C_j\right|}{n} \nabla f_{i_j}\left(x_{\star}\right)\right\|^2.
\end{align*}
}

\emph{Considering two extreme cases:} If $b=1$, then \algname{SPPM-SS} = \algname{SPPM-US}. So, indeed, we recover the same rate as \algname{SPPM-US}. If $b=n$, then \algname{SPPM-SS} = \algname{SPPM-FS}. So, indeed, we recover the same rate as \algname{SPPM-FS}. We provide the detailed analysis in \cref{sec:appendix_extreme_ss}.

\begin{lemma}[Variance Reduction Due to Stratified Sampling]\label{lem:vr_ss}
        Consider the stratified sampling. For each $j \in[b]$, define
        \[
        \sigma_j^2:=\max _{i \in C_j}\left\|\nabla f_i\left(x_{\star}\right)-\frac{1}{\left|C_j\right|} \sum_{l \in C_j} \nabla f_l\left(x_{\star}\right)\right\|^2 .
        \]
    
        In words, $\sigma_j^2$ is the maximal squared distance of a gradient (at the optimum) from the mean of the gradients (at optimum) within cluster $C_j$. Then 
    \begin{align*}
        \sigma_{\star, \mathrm{SS}}^2 \leq \frac{b}{n^2} \sum_{j=1}^b\left|C_j\right|^2 \sigma_j^2 \leq b \max \left\{\sigma_1^2, \ldots, \sigma_b^2\right\}.
    \end{align*}
\end{lemma}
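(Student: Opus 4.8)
The plan is to recognize $\sigma_{\star,\mathrm{SS}}^2$ as the second moment of a zero‑mean random vector and to exploit the first‑order optimality condition $\nabla f(x_\star)=0$ to center the per‑block contributions. Let $\xi_1,\dots,\xi_b$ be independent, with $\xi_j$ drawn uniformly from $C_j$, and set $S=\{\xi_1,\dots,\xi_b\}$ as in the definition of stratified sampling, so that $p_i=1/|C_{B(i)}|$. Then, by \eqref{eqn:main_8001}, $\nabla f_C(x_\star)=\sum_{j=1}^b \frac{|C_j|}{n}\nabla f_{\xi_j}(x_\star)$ for $C=\{\xi_1,\dots,\xi_b\}$, and since the probability of a given tuple $\mathbf{i}_b=(i_1,\dots,i_b)\in\mathbf{C}_b$ is $\prod_{j=1}^b\frac{1}{|C_j|}$, the definition of $\sigma_{\star,\mathrm{SS}}^2$ reads exactly
$$
\sigma_{\star,\mathrm{SS}}^2 = \mathbb{E}\left[\left\| \sum_{j=1}^b \frac{|C_j|}{n}\nabla f_{\xi_j}(x_\star) \right\|^2\right].
$$

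The first step is the centering observation. Since the $C_1,\dots,C_b$ partition $[n]$,
$$
\sum_{j=1}^b \frac{|C_j|}{n}\cdot\frac{1}{|C_j|}\sum_{l\in C_j}\nabla f_l(x_\star) = \frac{1}{n}\sum_{l=1}^n \nabla f_l(x_\star) = \nabla f(x_\star) = 0 ,
$$
so we may subtract this (zero) vector and write $\sum_{j=1}^b \frac{|C_j|}{n}\nabla f_{\xi_j}(x_\star) = \sum_{j=1}^b \frac{|C_j|}{n} W_j$, where $W_j := \nabla f_{\xi_j}(x_\star) - \frac{1}{|C_j|}\sum_{l\in C_j}\nabla f_l(x_\star)$ satisfies $\|W_j\|^2 \le \sigma_j^2$ for every realization of $\xi_j$, directly by the definition of $\sigma_j^2$.

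The second step is the elementary inequality $\left\| \sum_{j=1}^b a_j \right\|^2 \le b\sum_{j=1}^b \|a_j\|^2$ (Cauchy–Schwarz), applied to $a_j = \frac{|C_j|}{n}W_j$. This gives, pathwise,
$$
\left\| \sum_{j=1}^b \frac{|C_j|}{n}\nabla f_{\xi_j}(x_\star) \right\|^2 \le b\sum_{j=1}^b \frac{|C_j|^2}{n^2}\|W_j\|^2 \le \frac{b}{n^2}\sum_{j=1}^b |C_j|^2\sigma_j^2 ,
$$
and taking expectations yields the first claimed inequality. (One could sharpen the constant $b$ to $1$ by instead using independence of the $W_j$ together with the fact that each $W_j$ has zero conditional mean, so all cross terms vanish; but the statement as phrased only needs the cruder bound above.) For the second inequality, factor out $\max_j \sigma_j^2$ and use $\sum_{j=1}^b |C_j|^2 \le \left(\sum_{j=1}^b |C_j|\right)^2 = n^2$, since the $|C_j|$ are nonnegative and sum to $n$; this gives $\frac{b}{n^2}\sum_{j=1}^b|C_j|^2\sigma_j^2 \le b\max\{\sigma_1^2,\dots,\sigma_b^2\}$.

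There is no serious obstacle here: the only real idea is the centering step, which is also the only place optimality enters (solely through $\nabla f(x_\star)=0$, a consequence of \cref{asm:strongly_convex}). The remaining work — identifying the defining sum as an expectation over the independent block choices, plus two textbook inequalities — is routine bookkeeping.
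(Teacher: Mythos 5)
Your proof is correct and follows essentially the same route as the paper's: you center each block's gradient by its cluster mean (using $\nabla f(x_\star)=0$, exactly the paper's $\hat z_j - z_j$ decomposition), apply $\bigl\|\sum_{j=1}^b a_j\bigr\|^2 \le b\sum_{j=1}^b\|a_j\|^2$, bound each centered term by $|C_j|^2\sigma_j^2$, and finish with $\sum_j |C_j|^2 \le \bigl(\sum_j |C_j|\bigr)^2 = n^2$. Your parenthetical remark that independence and zero mean of the centered terms would sharpen the factor $b$ to $1$ is also a valid observation, though not used by the paper.
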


In Lemma \ref{lem:vr_ss}, we demonstrate that stratified sampling outperforms block sampling due to reduced variance. Note that in the scenario of complete inter-cluster homogeneity, where $\sigma_j^2=0$ for all $j$, both bounds imply that $0 = \sigma^2_{\star, \mathrm{SS}} \leq \sigma^2_{\star, \mathrm{BS}}$.

\paragraph{Stratified Sampling Outperforms Block Sampling and Nice Sampling in Convergence Neighborhood.}

We theoretically compare stratified sampling with block sampling and nice sampling, advocating for stratified sampling as the superior method for future clustering experiments due to its optimal variance properties. We begin with the assumption of $b$ clusters of uniform size $b$ (\cref{asm:uniform-clustering}), which simplifies the analysis by enabling comparisons of various sampling methods, all with the same sampling size, $b$: $b$-nice sampling, stratified sampling with $b$ clusters, and block sampling where all clusters are of uniform size $b$. Furthermore, we introduce the concept of optimal clustering for stratified sampling (noted as $\mathcal{C}_{b, \mathrm{SS}}$, \cref{def:SS-clustering}) in response to a counterexample where block sampling and nice sampling achieve lower variance than stratified sampling (\cref{example:SS_worse_than_BS_NICE}). Finally, we compare neighborhoods using the stated assumption.

\begin{lemma}\label{lem:SS_vs_NICE}
	Given \cref{asm:uniform-clustering}, the following holds: $\sigma_{\star, \mathrm{SS}}^2\left(\mathcal{C}_{b, \mathrm{SS}}\right) \leq \sigma_{\star, \mathrm{NICE}}^2$ for arbitrary $b$.
\end{lemma}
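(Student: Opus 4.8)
The plan is to dominate the \emph{optimal} stratified clustering by the \emph{average} stratified clustering. By \cref{def:SS-clustering}, $\mathcal{C}_{b,\mathrm{SS}}$ is the partition of $[n]$ (with $n=b^2$ under \cref{asm:uniform-clustering}) into $b$ blocks of size $b$ that minimizes $\sigma_{\star,\mathrm{SS}}^2$; hence for \emph{any} probability distribution $\mathcal{D}$ supported on the set of such partitions,
\[
\sigma_{\star,\mathrm{SS}}^2(\mathcal{C}_{b,\mathrm{SS}}) \;=\; \min_{\mathcal{C}} \sigma_{\star,\mathrm{SS}}^2(\mathcal{C}) \;\le\; \mathrm{E}_{\mathcal{C}\sim\mathcal{D}}\!\left[\sigma_{\star,\mathrm{SS}}^2(\mathcal{C})\right].
\]
So it suffices to take $\mathcal{D}$ to be the uniform distribution over all partitions of $[b^2]$ into $b$ labelled blocks of size $b$ and show $\mathrm{E}_{\mathcal{C}}[\sigma_{\star,\mathrm{SS}}^2(\mathcal{C})] \le \sigma_{\star,\mathrm{NICE}}^2$ — and in fact I expect equality.

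First I would rewrite $\sigma_{\star,\mathrm{SS}}^2(\mathcal{C})$ in a usable form. Write $g_i := \nabla f_i(x_{\star})$; strong convexity gives a unique $x_{\star}$ with $\nabla f(x_{\star})=0$, i.e. $\sum_{i=1}^n g_i = 0$. Specializing the $\sigma^2_{\star,\mathrm{SS}}$ formula from \cref{sec:as} to $|C_j|=b$, $n=b^2$, $p_i=\nicefrac1b$ gives $\sigma_{\star,\mathrm{SS}}^2(\mathcal{C}) = \mathrm{E}\big[\big\lVert\tfrac1b\sum_{j=1}^b g_{\xi_j}\big\rVert^2\big]$ with the $\xi_j$ independent and uniform on $C_j$. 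Expanding the square and using independence with $\mathrm{E}[g_{\xi_j}] = \bar g_j := \tfrac1b\sum_{i\in C_j}g_i$ and $\sum_j\bar g_j = \tfrac1b\sum_i g_i = 0$ yields
\[
\sigma_{\star,\mathrm{SS}}^2(\mathcal{C}) \;=\; \frac{1}{b^2}\Big(\frac{1}{b}\sum_{i=1}^n\lVert g_i\rVert^2 \;-\; \sum_{j=1}^b\lVert \bar g_j\rVert^2\Big).
\]

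Only $\sum_j\lVert\bar g_j\rVert^2$ depends on $\mathcal{C}$, so by symmetry $\mathrm{E}_{\mathcal{C}}[\sum_j\lVert\bar g_j\rVert^2] = b\,\mathrm{E}_{\mathcal{C}}[\lVert\bar g_1\rVert^2]$, where $\bar g_1 = \tfrac1b\sum_{i\in C_1}g_i$ with $C_1$ a uniformly random size-$b$ subset of $[b^2]$. Plugging in the standard second-moment identity for sampling without replacement, $\mathrm{E}\big[\big\lVert\sum_{i\in C_1} g_i\big\rVert^2\big] = \tfrac{\tau(n-\tau)}{n(n-1)}\sum_i\lVert g_i\rVert^2$ (using $\sum_i g_i=0$) with $\tau=b$, $n=b^2$, one gets $\mathrm{E}_{\mathcal{C}}[\sum_j\lVert\bar g_j\rVert^2] = \tfrac1{b(b+1)}\sum_i\lVert g_i\rVert^2$, hence $\mathrm{E}_{\mathcal{C}}[\sigma_{\star,\mathrm{SS}}^2(\mathcal{C})] = \tfrac{1}{b^2(b+1)}\sum_i\lVert g_i\rVert^2$. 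Comparing with $\sigma_{\star,\mathrm{NICE}}^2 = \sigma_{\star,\mathrm{NICE}}^2(b) = \tfrac{n/b-1}{n-1}\cdot\tfrac1n\sum_i\lVert g_i\rVert^2 = \tfrac{1}{b^2(b+1)}\sum_i\lVert g_i\rVert^2$ (from the Nice Sampling paragraph), the two are equal, which together with the displayed inequality closes the argument.

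There is no serious obstacle beyond careful bookkeeping: the key move is dominating the optimum by an average over random clusterings, after which the whole thing collapses to the second moment of a subset sum. A conceptually cleaner route that avoids the moment computation is a short combinatorial symmetry argument: the two-stage procedure ``draw a uniform partition of $[b^2]$ into $b$ blocks of size $b$, then draw one index from each block'' produces $S$ with exactly the uniform distribution over size-$b$ subsets, and the induced weights $\tfrac1{np_i}$ match those of $b$-nice sampling; therefore $\mathrm{E}_{\mathcal{C}}[\sigma_{\star,\mathrm{SS}}^2(\mathcal{C})] = \sigma_{\star,\mathrm{NICE}}^2$ identically. I would present the averaging computation as the main line and flag this distributional identity as the reason it works; the one point that needs checking is that \cref{def:SS-clustering} indeed defines $\mathcal{C}_{b,\mathrm{SS}}$ as a global minimizer of $\sigma_{\star,\mathrm{SS}}^2$ over equal-size partitions, so that $\min \le \mathrm{E}$ applies verbatim.
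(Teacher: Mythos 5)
Your proposal is correct and follows essentially the same route as the paper's proof: both bound the optimal clustering by the average of $\sigma_{\star,\mathrm{SS}}^2(\mathcal{C})$ over uniformly random equal-size clusterings and then identify that average with $\sigma_{\star,\mathrm{NICE}}^2$ (your closing "two-stage sampling" remark is exactly the paper's symmetry/counting argument). The only difference is cosmetic: you verify the averaging identity by an explicit without-replacement second-moment computation, obtaining the common value $\tfrac{1}{b^2(b+1)}\sum_i\lVert\nabla f_i(x_\star)\rVert^2$, whereas the paper argues it by noting each size-$b$ selection is counted equally often across clusterings.
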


Lemma \ref{lem:SS_vs_NICE} demonstrates that, under specific conditions, the stratified sampling neighborhood is preferable to that of nice sampling. One might assume that, under the same assumptions, a similar assertion could be made for showing that block sampling is inferior to stratified sampling . However, this has only been verified for the simplified case where both the block size and the number of blocks are $b=2$, as detailed in  \cref{sec:ss_vs_bs_nice}.

\section{Experiments} 
\noindent
\begin{minipage}[t]{0.5\textwidth}
\paragraph{Practical Decision-Making with \algname{SPPM-AS}.}
In our analysis of \algname{SPPM-AS}, guided by theoretical foundations of Theorem \ref{thm:sppm_as} and empirical evidence summarized in \cref{tab:alg_comparison}, we explore practical decision-making for varying scenarios. This includes adjustments in hyperparameters within the framework $KT(\epsilon, \mathcal{S}, \gamma, \mathcal{A}\left(K\right))$. Here, $\epsilon$ represents accuracy goal, $\mathcal{S}$ represents the sampling distribution, $\gamma$ is representing global learning rate (proximal operator parameter), $\mathcal{A}$ denotes the proximal optimization algorithm, while $K$ denotes the number of local communication rounds. In table \ref{tab:alg_comparison}, 
\end{minipage}%
\hfill
\begin{minipage}[t]{0.45\textwidth}
    \centering
    \captionof{table}{$KT(\epsilon, \mathcal{S}, \gamma, \mathcal{A}\left(K\right))$}
    \label{tab:alg_comparison}
    \resizebox{\textwidth}{!}{%
        \begin{threeparttable}
            \centering
            \begin{tabular}{c m{0.5\textwidth} m{0.30\textwidth} c}
                \hline
                HP & Control & $KT(\cdots)$ & Exp.\\ 
                \hline
                \addlinespace
                \multirow{2}{*}{$\gamma$} & $\gamma\uparrow$ & $KT\downarrow$, $\epsilon\uparrow\tnote{\color{blue}(1)} $ & \ref{sec:trade_off} \\  \cline{2-4} \addlinespace
                ~ & optimal $\left(\gamma, K\right)\uparrow$ & $\downarrow$  & \ref{sec:exp1} \\ 
                \hline \addlinespace
                \multirow{3}{*}{$\mathcal{A}$} & $\mu$-convex + \algname{BFGS}/\algname{CG} & $\downarrow$ compared to \algname{LocalGD} & \ref{sec:exp1}\\  \cline{2-4} \addlinespace
                ~ & NonCVX and Hierarchical FL + \algname{Adam} with tuned lr & $\downarrow$ compared to \algname{LocalGD} & \ref{sec:nn}\\  
                \bottomrule
            \end{tabular}
            \begin{tablenotes}
                \item  [{\color{blue}(1)}] $\epsilon$ is a convergence neighborhood or accuracy. 
            \end{tablenotes}
        \end{threeparttable}}
\end{minipage}
\vspace{-1mm}

we summarize how changes on following hyperparameters will influence target metric. With increasing learning rate $\gamma$ one achieves faster convergence with smaller accuracy, also noted as accuracy-rate tradeoff. Our primary observation that with an increase in both the learning rate, $\gamma$, and the number of local steps, $K$, leads to an improvement in the convergence rate. Employing various local solvers for proximal operators also shows an improvement in the convergence rate compared to \algname{FedAvg} in both convex and non-convex cases.

\subsection{Objective and Datasets}
Our analysis begins with logistic regression with a convex $l_2$ regularizer, which can be represented as:
  
$$
f_i(x) \eqdef \frac{1}{n_i}\sum_{j=1}^{n_i} \log \left(1 + \exp(-b_{i, j}x^T a_{i, j})\right) + \frac{\mu}{2}\|x\|^2,
$$

where $\mu$ is the regularization parameter, $n_i$ denotes the total number of data points at client $i$, $a_{i, j}$ are the feature vectors, and $b_{i, j} \in \{-1, 1\}$ are the corresponding labels. Each function $f_i$ exhibits $\mu$-strong convexity and $L_i$-smoothness, with $L_i$ computed as $\frac{1}{4n_i}\sum_{j=1}^{n_i}\|a_{i,j}\|^2 + \mu$. For our experiments, we set $\mu$ to 0.1.

Our study utilized datasets from the LibSVM repository \citep{chang2011libsvm}, including \texttt{mushrooms}, \texttt{a6a}, \texttt{ijcnn1.bz2}, and \texttt{a9a}. We divided these into feature-wise heterogeneous non-iid splits for FL, detailed in \cref{sec:data_generation}, with a default cohort size of $10$.
We primarily examined logistic regression, finding results consistent with our theoretical framework, as discussed extensively in \cref{sec:exp1} through \cref{sec:trade_off}. Additional neural network experiments are detailed in \cref{sec:nn} and \cref{sec:nn_additional}. 

\begin{figure*}[!tb]
    \centering
    \begin{subfigure}[b]{0.24\textwidth}
        \centering
        \includegraphics[width=\textwidth, trim=0 0 0 0, clip]{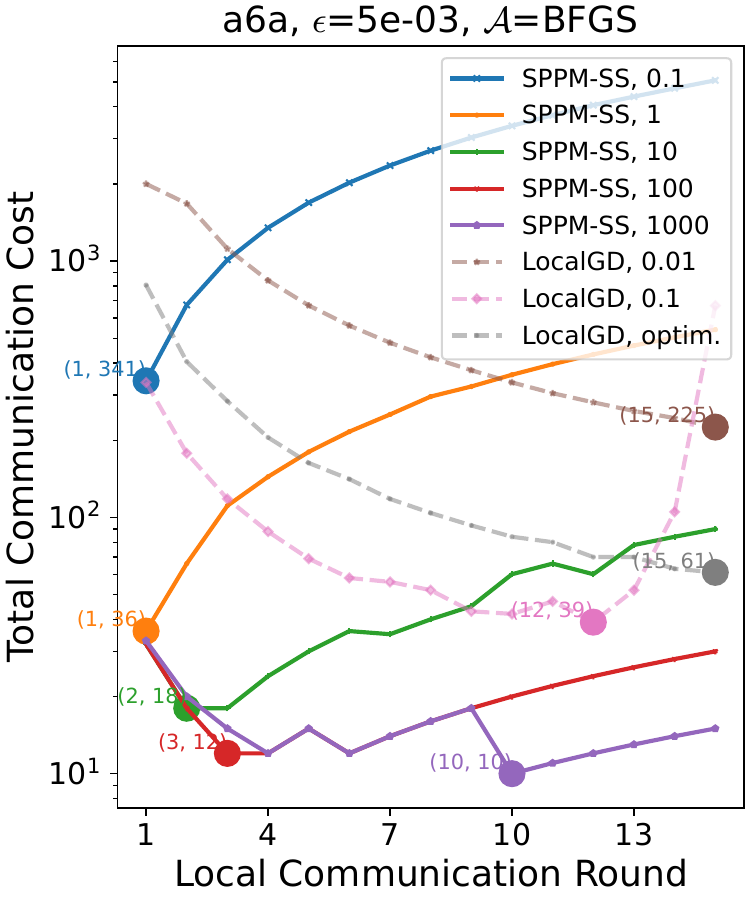}
        \caption{base}\label{fig:abs1_base}
    \end{subfigure}
    \hfill
    \begin{subfigure}[b]{0.24\textwidth}
        \centering
        \includegraphics[width=\textwidth, trim=0 0 0 0, clip]{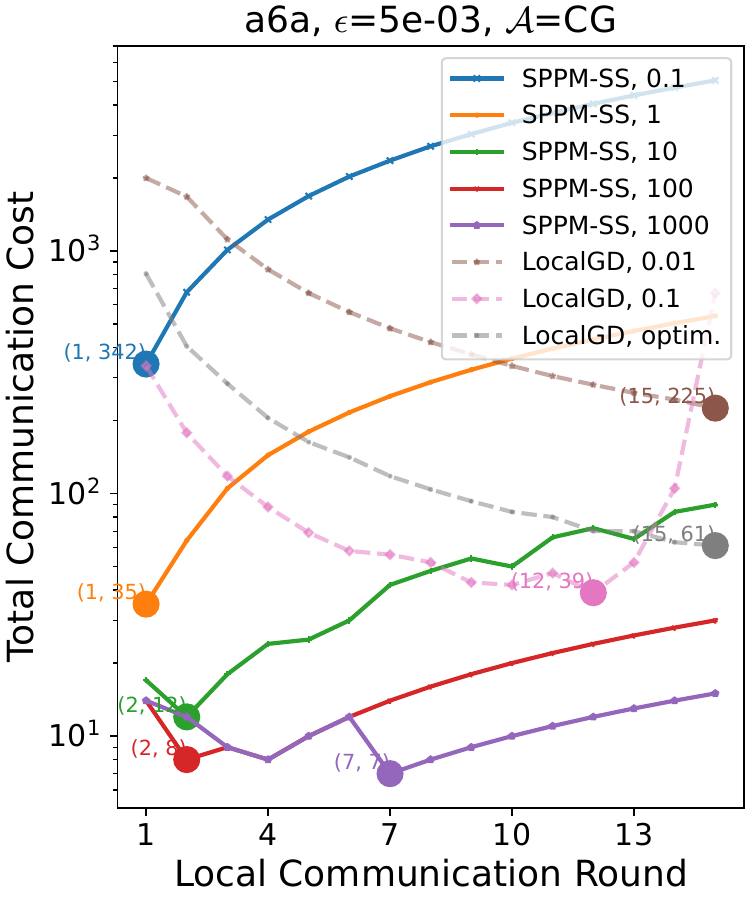}
        \caption{different prox solver}\label{fig:abs1_solver}
    \end{subfigure}
    \hfill
    \begin{subfigure}[b]{0.24\textwidth}
        \centering
        \includegraphics[width=\textwidth, trim=0 0 0 0, clip]{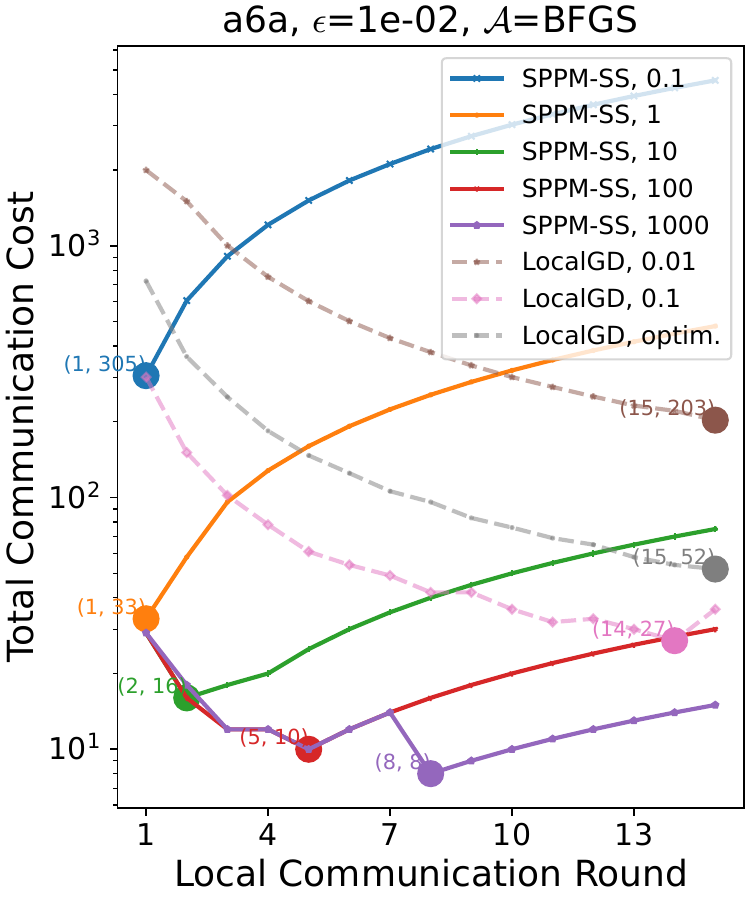}
        \caption{varying $\epsilon$}\label{fig:abs1_epsilon}
    \end{subfigure}
    \begin{subfigure}[b]{0.24\textwidth}
        \centering
        \includegraphics[width=\textwidth, trim=0 0 0 0, clip]{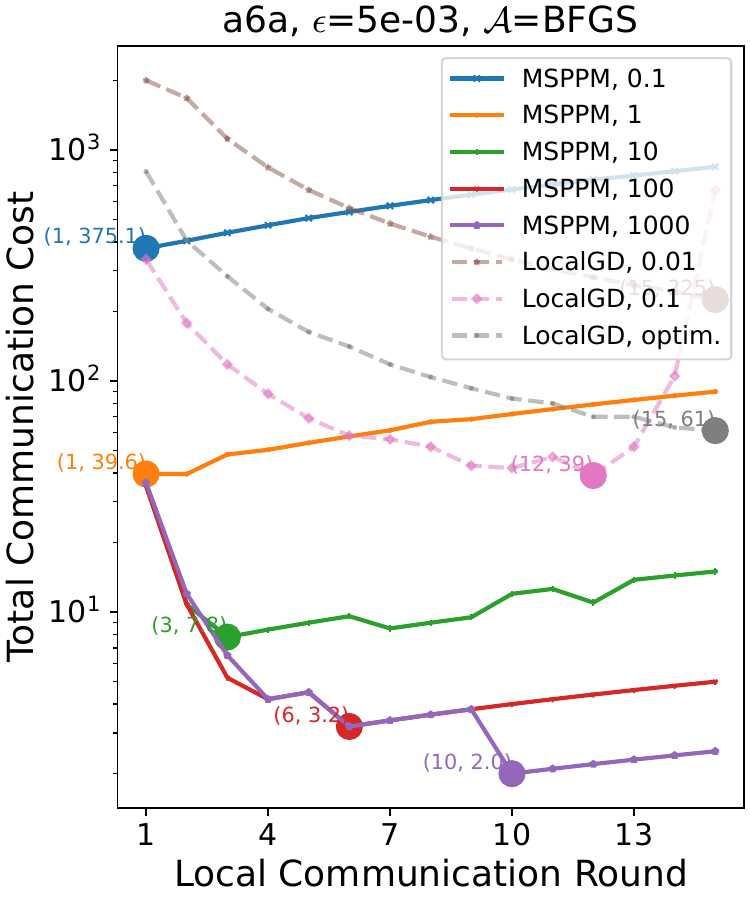}
        \caption{hierarchical: $c_1=0.1$}\label{fig:abs1_hierarchical}
    \end{subfigure}
    \caption{Analysis of total communication costs against local communication rounds for computing the proximal operator. For \algname{LocalGD}, we align the x-axis to the total local iterations, highlighting the absence of local communication. The aim is to minimize total communication for achieving a predefined global accuracy $\epsilon$, where $\sqn{x_T - x_\star}<\epsilon$. The optimal step size and minibatch sampling setup for \algname{LocalGD} are denoted as \algname{LocalGD}, optim. This showcases a comparison across varying $\epsilon$ values and proximal operator solvers (\algname{CG} and \algname{BFGS}).}
    \label{fig:abs1}
\end{figure*}

\subsection{On Choosing Sampling Strategy}
As shown in Section \ref{sec:as}, multiple sampling techniques exist. We propose using clustering approach in conjuction with \algname{SPPM-SS} as the default sampling strategy for all our experiments. The stratified sampling optimal clustering is impractical due to the difficulty in finding $x_\star$; therefore, we employ a clustering heuristic that aligns with the concept of creating homogeneous worker groups. One such method is \algname{K-means}, which we use by default. More details on our clustering approach can be found in the Appendix \refeq{sec:data_generation}.
We compare various sampling techniques in the left panel of Figure \ref{fig:abs2}. Extensive ablations verified the efficiency of stratified sampling over other strategies, due to variance reduction (Lemma \ref{lem:vr_ss}).

\begin{figure*}[!tb]
    \centering
    \begin{subfigure}{0.32\textwidth}
        \centering
        \includegraphics[width=0.94\textwidth, trim=0 0 0 0, clip]{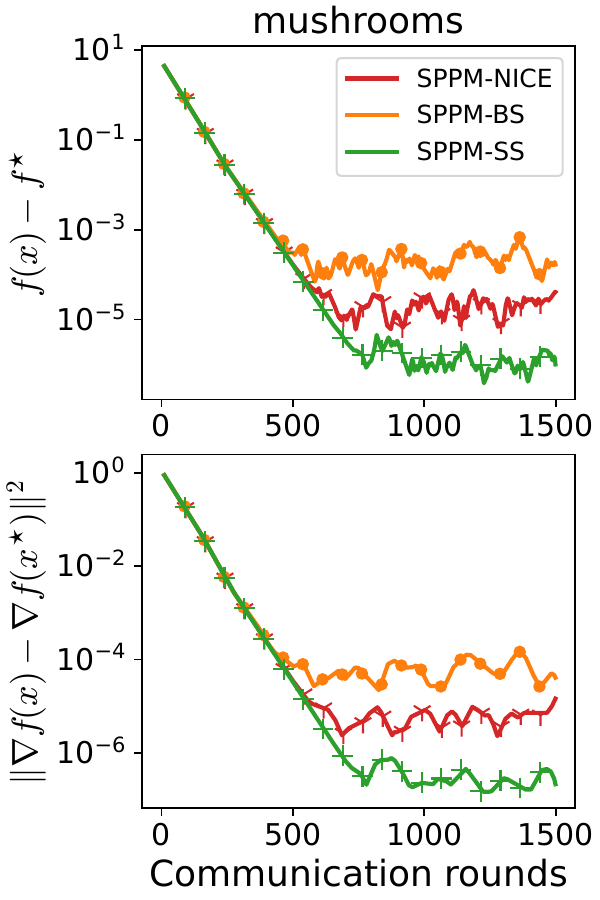}
    \end{subfigure}
    \begin{subfigure}{0.32\textwidth}
        \centering
        \includegraphics[width=1.0\textwidth, trim=0 0 0 0, clip]{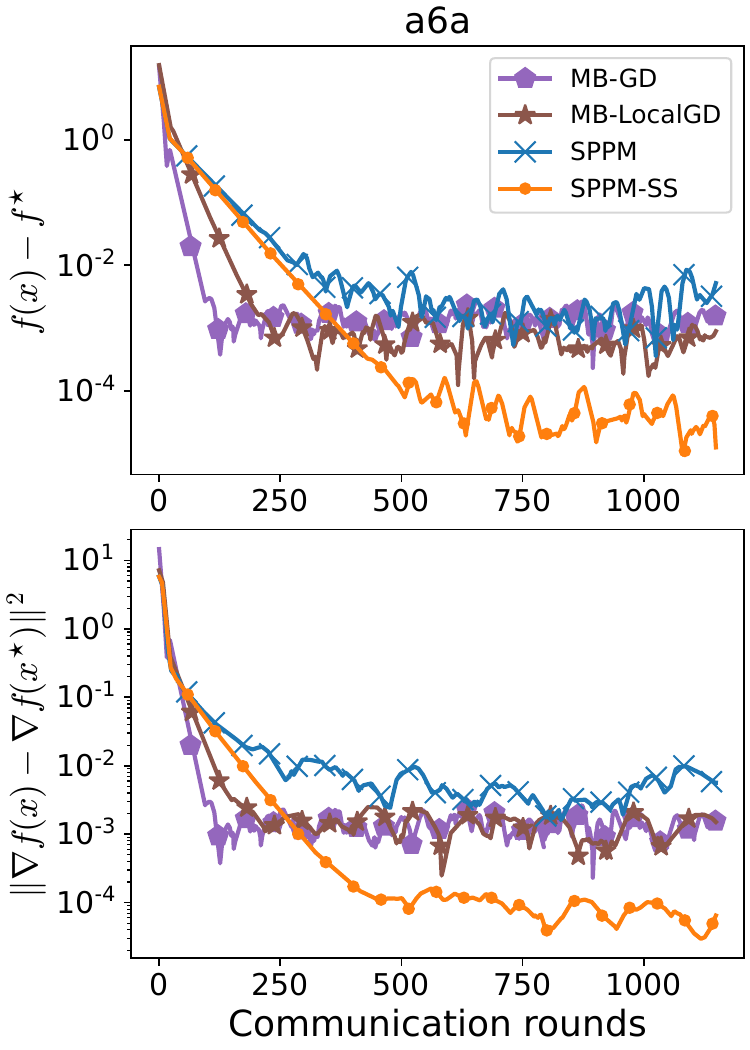}
    \end{subfigure}
    \begin{subfigure}{0.32\textwidth}
        \centering
        \includegraphics[width=1.0\textwidth, trim=0 0 0 0, clip]{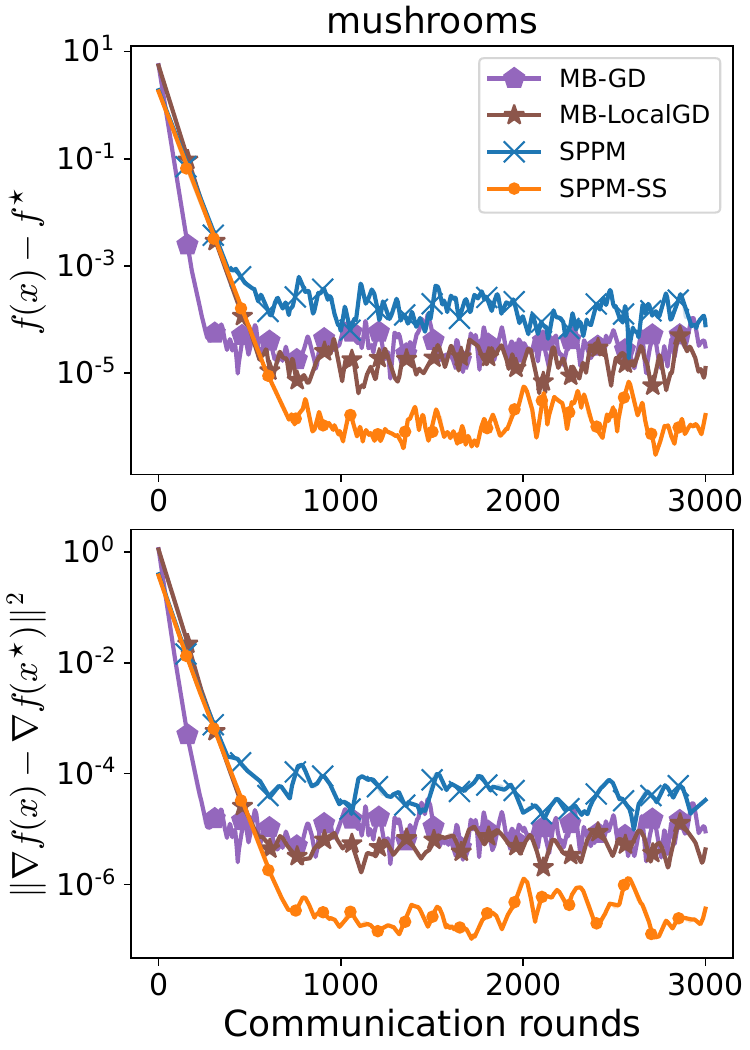}
    \end{subfigure}
    \vspace{-2mm}
    \caption{The first column compares sampling methods, while the right two columns analyze convergence relative to popular baselines. $\gamma=1.0$.}
    \label{fig:abs2}
\end{figure*}

\subsection{Communication Cost Reduction through Increased Local Communication Rounds}\label{sec:exp1}
In this study, we investigate whether increasing the number of local communication rounds, denoted as $K$, in our proposed algorithm \algname{SPPM-SS}, can lead to a decrease in the total communication cost required to converge to a predetermined global accuracy $\epsilon > 0$. In \cref{fig:tissue0}, we analyzed various datasets, including \texttt{a6a} and \texttt{mushrooms}, confirming that higher local communication rounds reduce communication costs, especially with larger learning rates. Our study includes both self-ablation of \algname{SPPM-SS} across different learning rate scales and comparisons with the widely-used cross-device FL method \algname{LocalGD} (or \algname{FedAvg}) on the selected cohort. Ablation studies were conducted with a large empirical learning rate of $0.1$, a smaller rate of $0.01$, and an optimal rate as discussed by \cite{khaled2020better}, alongside minibatch sampling described by \cite{gower2019sgd}.

In \cref{fig:abs1}, we present more extensive ablations. Specifically, we set the \texttt{base} method (\cref{fig:abs1_base}) using the dataset a6a, a proximal solver \algname{BFGS}, and $\epsilon=5\cdot10^{-3}$. In \cref{fig:abs1_solver}, we explore the use of an alternative solver, \algname{CG} (Conjugate Gradient), noting some differences in outcomes. For instance, with a learning rate $\gamma=1000$, the optimal $K$ with \algname{CG} becomes 7, lower than 10 in the \texttt{base} setting using \algname{BFGS}. In \cref{fig:abs1_epsilon}, we investigate the impact of varying $\epsilon=10^{-2}$. Our findings consistently show \algname{SPPM-SS}'s significant performance superiority over \algname{LocalGD}.

\subsection{Evaluating the Performance of Various Solvers $\mathcal{A}$}
We further explore the impact of various solvers on optimizing the proximal operators, showcasing representative methods in \cref{tab:comparison-solvers} in the \cref{sec:local_solvers}. A detailed overview and comparison of local optimizers listed in the table are provided in Section \ref{sec:local_solvers}, given the extensive range of candidate options available. To emphasize key factors, we compare the performance of first-order methods, such as the Conjugate Gradient (\algname{CG}) method \citep{conjugate_gradients}, against second-order methods, like the Broyden-Fletcher-Goldfarb-Shanno (\algname{BFGS}) algorithm \citep{broyden1967quasi, shanno1970conditioning}, in the context of strongly convex settings. For non-convex settings, where first-order methods are prevalent in deep learning experiments, we examine an ablation among popular first-order local solvers, specifically choosing \algname{MimeLite} \citep{MIME} and \algname{FedOpt} \citep{reddi2020adaptive}. The comparisons of different solvers for strongly convex settings are presented in Figure \ref{fig:abs1_solver}, with the non-convex comparison included in the appendix. Upon comparing first-order and second-order solvers in strongly convex settings, we observed that \algname{CG} outperforms \algname{BFGS} for our specific problem. In neural network experiments, \algname{MimeLite-Adam} was found to be more effective than \algname{FedOpt} variations. However, it is important to note that all these solvers are viable options that have led to impressive performance outcomes.


\subsection{Comparative Analysis with Baseline Algorithms}
In this section, we conduct an extensive comparison with several established cross-device FL baseline algorithms. Specifically, we examine \algname{MB-GD} (MiniBatch Gradient Descent with partial client participation), and \algname{MB-LocalGD}, which is the local gradient descent variant of \algname{MB-GD}. We default the number of local iterations to 5 and adopt the optimal learning rate as suggested by \cite{gower2019sgd}. To ensure a fair comparison, the cohort size $\left|C\right|$ is fixed at 10 for all minibatch methods, including our proposed \algname{SPPM-SS}. The results of this comparative analysis are depicted in \cref{fig:abs2}.
Our findings reveal that \algname{SPPM-SS} consistently achieves convergence within a significantly smaller neighborhood when compared to the existing baselines. Notably, in contrast to \algname{MB-GD} and \algname{MB-LocalGD}, \algname{SPPM-SS} is capable of utilizing arbitrarily large learning rates. This attribute allows for faster convergence, although it does result in a larger neighborhood size.

\begin{figure}[!tb]
    \centering
    \begin{minipage}{0.40\textwidth}
        \centering
        \includegraphics[width=\textwidth, trim=0 0 0 0, clip]{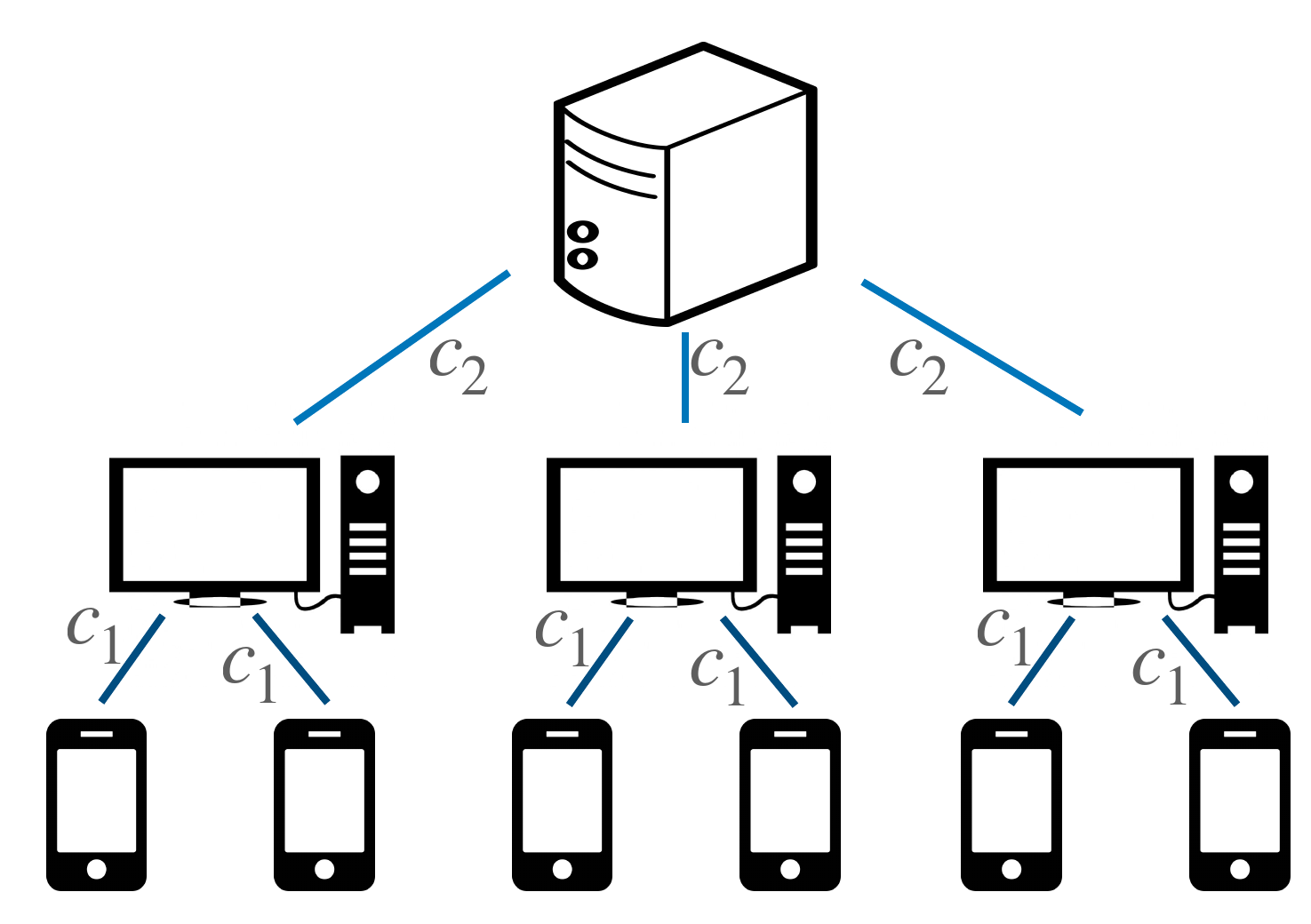}
    \end{minipage}
    \hfill 
    \begin{minipage}{0.58\textwidth}
        \centering
        \begin{subfigure}{0.49\textwidth}
            \centering
            \includegraphics[width=1.0\textwidth]{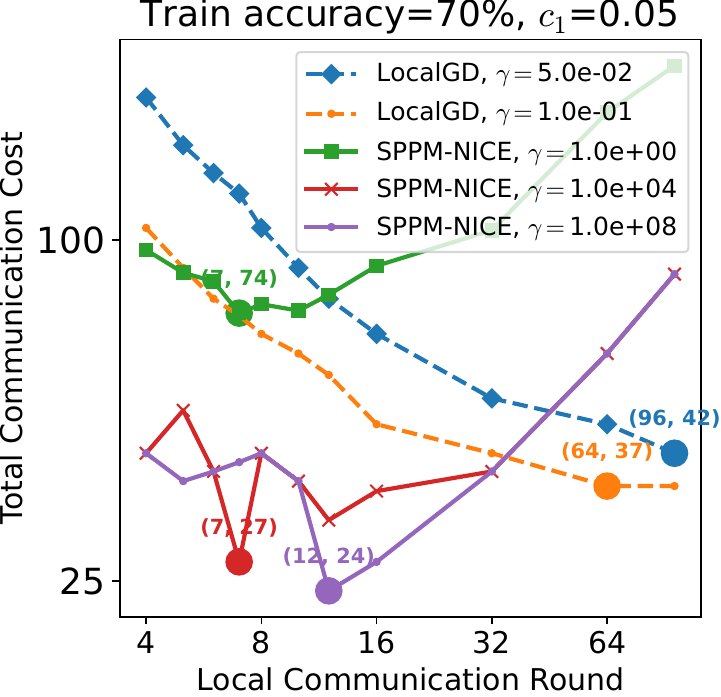}
        \end{subfigure}
        \begin{subfigure}{0.47\textwidth}
            \centering
            \includegraphics[width=1.0\textwidth]{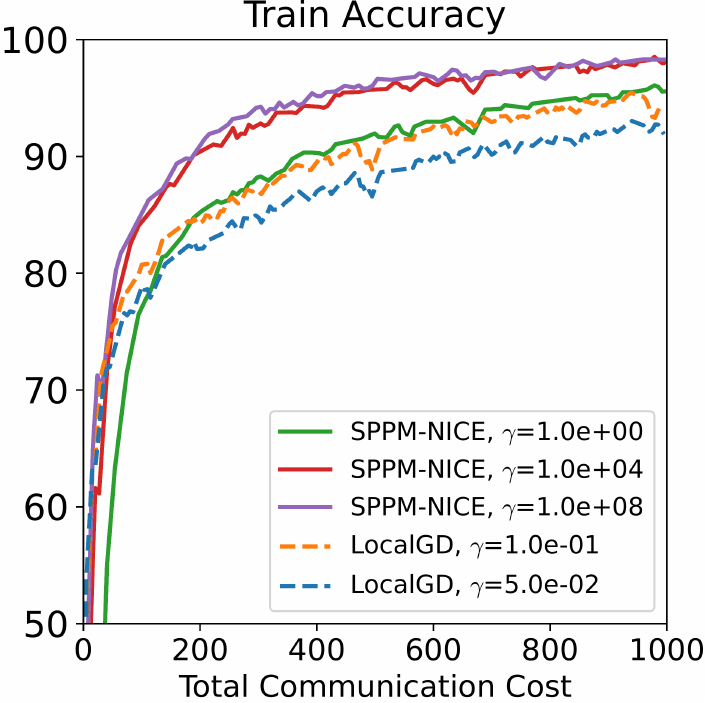}
        \end{subfigure}
    \end{minipage}
    \caption{The left column shows the Server-hub-client hierarchical FL architecture. For the right two columns: on the left, communication cost for achieving 70\% accuracy in hierarchical FL ($c_1=0.05$, $c_2=1$); on the right, convergence with optimal hyperparameters ($c_1=0.05$, $c_2=1$).}
    \label{fig:dl1-convergence}
\end{figure}

\subsection{Hierarchical Federated Learning}
We extend our analysis to a hub-based hierarchical FL structure, as conceptualized in the left part of \cref{fig:dl1-convergence}. This structure envisions a cluster directly connected to $m$ hubs, with each hub $m_i$ serving $n_i$ clients. The clients, grouped based on criteria such as region, communicate exclusively with their respective regional hub, which in turn communicates with the central server. Given the inherent nature of this hierarchical model, the communication cost $c_1$ from each client to its hub is consistently lower than the cost $c_2$ from each hub to the server. We define communication from clients to hubs as \textit{local communication} and from hubs to the server as \textit{global communication}. Under \algname{SPPM-SS}, the total cost is expressed as $(c_1 K + c_2) T_{\operatorname{SPPM-SS}}$, while for \algname{LocalGD}, it is $(c_1 + c_2)T_{\operatorname{LocalGD}}$. As established in Section \ref{sec:exp1}, $T_{\operatorname{SPPM-SS}}$ demonstrates significant improvement in total communication costs compared to \algname{LocalGD} within a hierarchical setting. Our objective is to illustrate this by contrasting the standard FL setting, depicted in Figure \ref{fig:abs1_base} with parameters $c_1=1$ and $c_2=0$, against the hierarchical FL structure, which assumes $c_1=0.1$ and $c_2=1$, as shown in Figure \ref{fig:abs1_hierarchical}. Given the variation in $c_1$ and $c_2$ values between these settings, a direct comparison of absolute communication costs is impractical. Therefore, our analysis focuses on the ratio of communication cost reduction in comparison to \algname{LocalGD}. For the \texttt{base} setting, \algname{LocalGD}'s optimal total communication cost is 39 with 12 local iterations, whereas for \algname{SPPM-SS} ($\gamma=1000$), it is reduced to 10 with 10 local and 1 global communication rounds, amounting to a 74.36\% reduction. With the hierarchical FL structure in \cref{fig:abs1_hierarchical}, \algname{SPPM-SS} achieves an even more remarkable communication cost reduction of 94.87\%. Further ablation studies on varying local communication cost $c_1$ in the \cref{sec:hierarchy-fl} corroborate these findings.

\subsection{Neural Network Evaluations}\label{sec:nn}
Our empirical analysis includes experiments on Convolutional Neural Networks (CNNs) using the \texttt{FEMNIST} dataset, as described by \cite{leaf}. We designed the experiments to include a total of 100 clients, with each client representing data from a unique user, thereby introducing natural heterogeneity into our study. We employed the Nice sampling strategy with a cohort size of 10. In contrast to logistic regression models, here we utilize training accuracy as a surrogate for the target accuracy $\epsilon$. For the optimization of the proximal operator, we selected the Adam optimizer, with the learning rate meticulously fine-tuned over a linear grid. Detailed descriptions of the training procedures and the CNN architecture are provided in the Appendix \ref{sec:nn_additional}.

Our analysis primarily focuses on the hierarchical FL structure. Initially, we draw a comparison between our proposed method, \algname{SPPM-AS}, and \algname{LocalGD}. The crux of our investigation is the total communication cost required to achieve a predetermined level of accuracy, with findings detailed in the right part of \cref{fig:dl1-convergence}. Significantly, \algname{SPPM-AS} demonstrates enhanced performance with the integration of multiple local communication rounds. Notably, the optimal number of these rounds tends to increase alongside the parameter $\gamma$. For each configuration, the convergence patterns corresponding to the sets of optimally tuned hyperparameters are depicted in Figure \ref{fig:dl1-convergence}.

\section{Conclusion}

Our research challenges the conventional single-round communication model in federated learning by presenting a novel approach where cohorts participate in multiple communication rounds. This adjustment leads to a significant 74\% reduction in communication costs, underscoring the efficacy of extending cohort engagement beyond traditional limits. Our method, \algname{SPPM-AS}, equipped with diverse client sampling procedures, contributes substantially to this efficiency. This foundational work showcases a pivotal shift in federated learning strategies. Future work could focus on improving algorithmic robustness and ensuring privacy compliance.


\bibliographystyle{icml2024}
\bibliography{egbib}

\newpage
\appendix

\tableofcontents
\newpage 

\section{Related Work}
\subsection{Cross-Device Federated Learning}
This paper delves into the realm of Federated Learning (FL), focusing on the cross-device variant, which presents unique and significant challenges. In FL, two predominant settings are recognized: cross-silo and cross-device scenarios, as detailed in Table 1 of \citealp{FL-big}. The primary distinction lies in the nature of the clients: cross-silo FL typically involves various organizations holding substantial data, whereas cross-device FL engages a vast array of mobile or IoT devices.
In cross-device FL, the complexity is heightened by the inability to maintain a persistent hidden state for each client, unlike in cross-silo environments. This factor renders certain approaches impractical, particularly those reliant on stateful clients participating consistently across all rounds. Given the sheer volume of clients in cross-device FL, formulating and analyzing outcomes in an expectation form is more appropriate, but more complex than in finite-sum scenarios.

The pioneering and perhaps most renowned algorithm in cross-device FL is \algname{FedAvg} \citep{FedAvg} and implemented in applications like Google's mobile keyboard \citep{hard2018federated, yang2018applied, ramaswamy2019federated}. However, it is noteworthy that popular accelerated training algorithms such as \algname{Scaffold} \citep{SCAFFOLD} and \algname{ProxSkip} \citep{ProxSkip} are not aligned with our focus due to their reliance on memorizing the hidden state for each client, which is applicable for cross-device FL.
Our research pivots on a novel variant within the cross-device framework. Once the cohort are selected for each global communication round, these cohorts engage in what we term as `local communications' multiple times. The crux of our study is to investigate whether increasing the number of local communication rounds can effectively reduce the total communication cost to converge to a targeted accuracy.

\subsection{Stochastic Proximal Point Method}
Our exploration in this paper centers on the Stochastic Proximal Point Method (\algname{SPPM}), a method extensively studied for its convergence properties. Initially termed as the incremental proximal point method by \cite{bertsekas2011incremental}, it was shown to converge nonasymptotically under the assumption of Lipschitz continuity for each $f_i$. Following this, \cite{RyuBoy:16} examined the convergence rates of \algname{SPPM}, noting its resilience to inaccuracies in learning rate settings, contrasting with the behavior of Stochastic Gradient Descent (\algname{SGD}).
Further developments in \algname{\algname{SPPM}}'s application were seen in the works of \cite{patrascu2018nonasymptotic}, who analyzed its effectiveness in constrained optimization, incorporating random projections. \cite{asi2019stochastic} expanded the scope of \algname{SPPM} by studying a generalized method, \algname{AProx}, providing insights into its stability and convergence rates under convex conditions. The research by \cite{asi2020minibatch} and \cite{chadha2022accelerated} further extended these findings, focusing on minibatching and convergence under interpolation in the \algname{AProx} framework.

In the realm of federated learning, particularly concerning non-convex optimization, \algname{SPPM} is also known as \algname{FedProx}, as discussed in works like those of \cite{FedProx} and \cite{yuan2022convergence}. However, it is noted that in non-convex scenarios, the performance of \algname{FedProx/SPPM} in terms of convergence rates does not surpass that of \algname{SGD}. Beyond federated learning, the versatility of \algname{SPPM} is evident in its application to matrix and tensor completion such as in the work of \cite{bumin2021efficient}. Moreover, \algname{SPPM} has been adapted for efficient implementation in a variety of optimization problems, as shown by \cite{shtoff2022efficient}.
While non-convex \algname{SPPM} analysis presents significant challenges, with a full understanding of its convex counterpart still unfolding, recent studies such as the one by \cite{SPPM} have reported enhanced convergence by leveraging second-order similarity. Diverging from this approach, our contribution is the development of an efficient minibatch \algname{SPPM} method \algname{SPPM-AS} that shows improved results without depending on such assumptions. Significantly, we also provide the first empirical evidence that increasing local communication rounds in finding the proximal point can lead to a reduction in total communication costs. 

\subsection{Local Solvers}\label{sec:local_solvers}
\begin{table}[!htb]
    \centering
    \resizebox{0.7\textwidth}{!}{%
    \begin{threeparttable}
        \begin{tabular}{lll}
            \toprule
            \textbf{Setting} & \textbf{1st order} & \textbf{2nd order} \\
            \midrule
            Strongly-Convex & \begin{tabular}{@{}l@{}}
                    \algname{Conjugate Gradients (CG)} \\
                    \algname{Accelerated GD} \\
                    \algname{Local GD} \\
                    \algname{Scaffnew} \\
                \end{tabular}  &  \begin{tabular}{@{}l@{}}
                    \algname{BFGS} \\
                    \algname{AICN} \\
                    \algname{LocalNewton} \\
                \end{tabular}\\
            \midrule
            Nonconvex & \begin{tabular}{@{}l@{}}
                    \algname{Mime-Adam}\\
                    \algname{FedAdam-AdaGrad}\\
                    \algname{FedSpeed}\\ 
                \end{tabular}  & \begin{tabular}{@{}l@{}}
                    \algname{Apollo}\\
                    \algname{OASIS}\\
                \end{tabular} \\
            \bottomrule
        \end{tabular}
    \end{threeparttable}
    }
    \caption{Local optimizers for solving the proximal subproblem.}
    \label{tab:comparison-solvers}
\end{table}

In the exploration of local solvers for the \algname{SPPM-AS} algorithm, the focus is on evaluating the performance impact of various inexact proximal solvers within federated learning settings, spanning both strongly convex and non-convex objectives. Here’s a simple summary of the algorithms discussed:

\algname{FedAdagrad-AdaGrad} \citep{wang2021local}: Adapts \algname{AdaGrad} for both client and server sides within federated learning, introducing local and global corrections to address optimizer state handling and solution bias.

\algname{BFGS} \citep{broyden1967quasi, fletcher1970new, goldfarb1970family, shanno1970conditioning}: A quasi-Newton method that approximates the inverse Hessian matrix to improve optimization efficiency, particularly effective in strongly convex settings but with limitations in distributed implementations.

\algname{AICN} \citep{hanzely2022damped}: Offers a global $O(1/k^2)$ convergence rate under a semi-strong self-concordance assumption, streamlining Newton's method without the need for line searches.

\algname{LocalNewton} \citep{bischoff2023second}: Enhances local optimization steps with second-order information and global line search, showing efficacy in heterogeneous data scenarios despite a lack of extensive theoretical grounding.

\algname{Fed-LAMB} \citep{karimi2022layerwise}: Extends the \algname{LAMB} optimizer to federated settings, incorporating layer-wise and dimension-wise adaptivity to accelerate deep neural network training.

\algname{FedSpeed} \citep{FedSpeed}: Aims to overcome non-vanishing biases and client-drift in federated learning through prox-correction and gradient perturbation steps, demonstrating effectiveness in image classification tasks.

\algname{Mime-Adam} \citep{MIME}: Mitigates client drift in federated learning by integrating global optimizer states and an \algname{SVRG}-style correction term, enhancing the adaptability of \algname{Adam} to distributed settings.

\algname{OASIS} \citep{jahani2021doubly}: Utilizes local curvature information for gradient scaling, providing an adaptive, hyperparameter-light approach that excels in handling ill-conditioned problems.

\algname{Apollo} \citep{ma2020apollo}: A quasi-Newton method that dynamically incorporates curvature information, showing improved efficiency and performance over first-order methods in deep learning applications.

Each algorithm contributes uniquely to the landscape of local solvers in federated learning, ranging from enhanced adaptivity and efficiency to addressing specific challenges such as bias, drift, and computational overhead.

\section{Theoretical Overview and Recommendations}
\subsection{Parameter Control}
We have explored the effects of changing the hyperparameters of \algname{SPPM-AS} on its theoretical properties, as summarized in \cref{tab:theory_parameters}. This summary shows that as the learning rate increases, the number of iterations required to achieve a target accuracy decreases, though this comes with an increase in neighborhood size. Focusing on sampling strategies, for \algname{SPPM-NICE} employing NICE sampling, an increase in the sampling size $\tau_{\mathcal{S}}$ results in fewer iterations ($T$) and a smaller neighborhood. Furthermore, given that stratified sampling outperforms both block sampling and NICE sampling, we recommend adopting stratified sampling, as advised by Lemma \ref{lem:vr_ss}.

\begin{table}[!tb]
	\centering
	\caption{Theoretical summary}
	\label{tab:theory_parameters}
	\resizebox{0.9\textwidth}{!}{%
		\begin{threeparttable}
			\begin{tabular}{cm{0.4\textwidth}cm{0.25\textwidth}cm{0.25\textwidth}cm{0.25\textwidth}}
				\toprule
				Hyperparameter & Control & Rate (T) & Neighborhood \\ 
				\midrule
				$\gamma$ & $\uparrow$ & $\downarrow$ & $\uparrow$ \\ 
				\hline
				\multirow{2}{*}{$\mathcal{S}$} & $\tau_{\mathcal{S}}\uparrow$\tnote{\color{blue}(1)} & $\downarrow$ & $\downarrow$ \\ \cline{2-4}
				~ &  Stratified sampling optimal clustering instead of $\mathrm{BS}$ or $\mathrm{NICE}$ sampling & $\downarrow$ & Lemma \ref{lem:vr_ss} \\ 
				\bottomrule
			\end{tabular}
			\begin{tablenotes}
				\item [{\color{blue}(1)}] We define $\tau_{\mathcal{S}} := \mathbb{E}_{S\sim \mathcal{S}}\sb{\left|S\right|}.$
			\end{tablenotes}
	\end{threeparttable}}
\end{table}

\subsection{Comparison of Sampling Strategies}\label{sec:samplings_table}
\paragraph{Full Sampling (FS).} Let $S=[n]$ with probability 1. Then \algname{SPPM-AS} applied to \cref{eqn:obj_4} becomes \algname{PPM}~\citep{moreau1965proximite, martinet1970regularisation} for minimizing $f$. 
Moreover, in this case, we have $p_i=1$ for all $i \in[n]$ and \cref{eqn:main_8003} takes on the form 
\begin{equation*} 
	\mu_{\mathrm{AS}}=\mu_{\mathrm{FS}}:=\frac{1}{n} \sum_{i=1}^n \mu_i, \quad \sigma_{\star, \mathrm{AS}}^2=\sigma_{\star, \mathrm{FS}}^2:=0 . 
	\end{equation*}

Note that $\mu_{\text {FS }}$ is the strong convexity constant of $f$, and that the neighborhood size is zero, as we would expect.

\paragraph{Nonuniform Sampling (NS).} Let $S=\{i\}$ with probability $p_i>0$, where $\sum_i p_i=1$. Then \cref{eqn:main_8003} takes on the form
\begin{align*}
    \mu_{\mathrm{AS}}=\mu_{\mathrm{NS}}:=\min _i \frac{\mu_i}{n p_i}, \quad
    \sigma_{\star, \mathrm{AS}}^2=\sigma_{\star, \mathrm{NS}}^2:=\frac{1}{n} \sum_{i=1}^n \frac{1}{n p_i}\left\|\nabla f_i\left(x_{\star}\right)\right\|^2.
\end{align*}

If we take $p_i=\frac{\mu_i}{\sum_{j=1}^n \mu_j}$ for all $i \in[n]$, we shall refer to Algorithm \ref{alg:sppm_as} as \algname{SPPM} with importance sampling (\algname{SPPM-IS}). In this case,
$$
\mu_{\mathrm{NS}}=\mu_{\mathrm{IS}}:=\frac{1}{n} \sum_{i=1}^n \mu_i, \quad \sigma_{\star, \mathrm{NS}}^2=\sigma_{\star, \mathrm{IS}}^2:=\frac{\sum_{i=1}^n \mu_i}{n} \sum_{i=1}^n \frac{\left\|\nabla f_i\left(x_{\star}\right)\right\|^2}{n \mu_i} .
$$

This choice maximizes the value of $\mu_{\mathrm{NS}}$ (and hence minimizes the first part of the convergence rate) over the choice of the probabilities. 

\cref{tab:samplings-comparison} summarizes the parameters associated with various sampling strategies, serving as a concise overview of the methodologies discussed in the main text. This summary facilitates a quick comparison and reference.

\begin{table}[!tb]
	\centering
	\caption{Arbitrary samplings comparison.}
	\label{tab:samplings-comparison}
	\renewcommand{\arraystretch}{1.5}
	\resizebox{1.0\textwidth}{!}{%
		\begin{threeparttable}
			\begin{tabular}{ccc}
				\toprule
				Setting/Requirement & $\mu_{\mathrm{AS}}$ & $\sigma_{\star, \mathrm{AS}}$ \\
				\midrule
				Full & $\frac{1}{n}\sum_{i=1}^n\mu_i$ & 0 \\ \hline
				Non-Uniform & $\min_{i}\frac{\mu_i}{np_i}$ & $ \frac{1}{n} \sum_{i=1}^n \frac{1}{n p_i}\left\|\nabla f_i\left(x_{\star}\right)\right\|^2$ \\ \hline
				Nice & $\min_{C \subseteq[n],|C|=\tau} \frac{1}{\tau} \sum_{i \in C} \mu_i$ & $\sum_{C \subseteq[n],|C|=\tau} \frac{1}{\binom{n}{\tau}}\left\|\frac{1}{\tau} \sum_{i \in C} \nabla f_i\left(x_{\star}\right)\right\|^2$ \\ \hline
				Block & $\min_{j \in[b]} \frac{1}{n q_j} \sum_{i \in C_j} \mu_i$ & $\sum_{j \in[b]} q_j\left\|\sum_{i \in C_j} \frac{1}{n p_i} \nabla f_i\left(x_{\star}\right)\right\|^2$\\ \hline
				\multirow{2}{*}{Stratified} & $\min_{\mathbf{i}_b \in \mathbf{C}_b} \sum_{j=1}^b \frac{\mu_{i_j}\left|C_j\right|}{n}$ & $\sum_{\mathbf{i}_b \in \mathbf{C}_b}\left(\prod_{j=1}^b \frac{1}{\left|C_j\right|}\right)\left\|\sum_{j=1}^b \frac{\left|C_j\right|}{n} \nabla f_{i_j}\left(x_{\star}\right)\right\|^2$\\
				~ &~& Upper bound: $\frac{b}{n^2} \sum_{j=1}^b\left|C_j\right|^2 \sigma_j^2$\\
				\bottomrule
			\end{tabular}
	\end{threeparttable}}
\end{table}

\subsection{Extreme Cases of Block Sampling and Stratified Sampling}
\paragraph{Extreme cases of block sampling.}\label{sec:appendix_extreme_bs}
We now consider two extreme cases:
\begin{itemize}
	\item If $b=1$, then \algname{SPPM-BS} = \algname{SPPM-FS} = \algname{PPM}. Let's see, as a sanity check, whether we recover the right rate as well. We have $q_1=1, C_1=[n], p_i=1$ for all $i \in[n]$, and the expressions for $\mu_{\mathrm{AS}}$ and $\sigma_{\star, \text { BS }}^2$ simplify to
	\begin{align*}
		\mu_{\mathrm{BS}}=\mu_{\mathrm{FS}}:=\frac{1}{n} \sum_{i=1}^n \mu_i, \sigma_{\star, \mathrm{BS}}^2=\sigma_{\star, \mathrm{FS}}^2:=0 .
	\end{align*}
	
	So, indeed, we recover the same rate as \algname{SPPM-FS}.
	
	\item  If $b=n$, then \algname{SPPM-BS} = \algname{SPPM-NS}. Let's see, as a sanity check, whether we recover the right rate as well. We have $C_i=\{i\}$ and $q_i=p_i$ for all $i \in[n]$, and the expressions for $\mu_{\mathrm{AS}}$ and $\sigma_{\star, \mathrm{BS}}^2$ simplify to
	\begin{align*}
		\mu_{\mathrm{BS}}=\mu_{\mathrm{NS}}:=\min _{i \in[n]} \frac{\mu_i}{n p_i},\quad
		\sigma_{\star, \mathrm{BS}}^2=\sigma_{\star, \mathrm{NS}}^2:=\frac{1}{n} \sum_{i=1}^n \frac{1}{n p_i}\left\|\nabla f_i\left(x_{\star}\right)\right\|^2 .
	\end{align*}
	So, indeed, we recover the same rate as \algname{SPPM-NS}.
\end{itemize}

\paragraph{Extreme cases of stratified sampling.}\label{sec:appendix_extreme_ss}

We now consider two extreme cases:
\begin{itemize}
	\item If $b=1$, then \algname{SPPM-SS} = \algname{SPPM-US}. Let's see, as a sanity check, whether we recover the right rate as well. We have $C_1=[n],\left|C_1\right|=n,\left(\prod_{j=1}^b \frac{1}{\left|C_j\right|}\right)=\frac{1}{n}$ and hence
	\begin{align*}
		\mu_{\mathrm{SS}}=\mu_{\mathrm{US}}:=\min _i \mu_i, \quad \sigma_{\star, \mathrm{SS}}^2=\sigma_{\star, \mathrm{US}}^2:=\frac{1}{n} \sum_{i=1}^n\left\|\nabla f_i\left(x_{\star}\right)\right\|^2 .
	\end{align*} 
	So, indeed, we recover the same rate as \algname{SPPM-US}.
	
	\item If $b=n$, then \algname{SPPM-SS} = \algname{SPPM-FS}. Let's see, as a sanity check, whether we recover the right rate as well. We have $C_i=\{i\}$ for all $i \in[n],\left(\prod_{j=1}^b \frac{1}{\left|C_j\right|}\right)=1$, and hence
	\begin{align*}
		\mu_{\mathrm{SS}}=\mu_{\mathrm{FS}}:=\frac{1}{n} \sum_{i=1}^n \mu_i, \quad \sigma_{\star, \mathrm{SS}}^2=\sigma_{\star, \mathrm{FS}}^2:=0 .
	\end{align*}
	So, indeed, we recover the same rate as \algname{SPPM-FS}.
\end{itemize}

\subsection{Federated Averaging SPPM Baselines}\label{sec:Avg-SPPM-baselines}
In this section we propose two new algorithms based on Federated Averaging principle. Since to the best of our knowledge there are no federated averaging analyses within the same assumptions, we provide analysis of modified versions of \algname{SPPM-AS}.
\paragraph{Averaging on $\prox_{\gamma f_i}$.}
We introduce \algname{FedProx-SPPM-AS} (see  \cref{alg:FedProx-SPPM-AS}), which is inspired by the principles of \algname{FedProx} \citep{FedProx}. Unlike the traditional approach where a proximal operator is computed for the chosen cohort as a whole, in \algname{FedProx-SPPM-AS}, we compute and then average the proximal operators calculated for each member within the cohort. However, this algorithm is not a simple case of \algname{SPPM-AS} because it does not directly estimate the proximal operator at each step.

\begin{minipage}{.47\textwidth}
	\begin{algorithm}[H]
		\caption{Proximal Averaging SPPM-AS (\algname{FedProx-SPPM-AS})}\label{alg:FedProx-SPPM-AS}
		\begin{algorithmic}[1]
			\STATE \textbf{Input:} starting point \(x_{0, 0} \in \mathbb{R}^d\), arbitrary sampling distribution \(\mathcal{S}\), learning rate \(\gamma > 0\), local communication rounds \(K\).
			\FOR{$t = 0, 1, 2, \cdots, T - 1$}
			\STATE Sample $S_t \sim \mathcal{S}$
			\FOR{$k= 0, 1, 2, \cdots K - 1$}
			\STATE $x_{k + 1, t} = \sum_{i\in S_t}\frac{1}{\left|S_t\right|}\prox_{\gamma f_{i}}(x_{k, t})$ 
			\ENDFOR
			\STATE $x_{0, t + 1} \gets x_{K, t}$
			\ENDFOR
			\STATE \textbf{Output: $x_{0, T}$}
		\end{algorithmic}
	\end{algorithm}
\end{minipage}%
\hfill
\begin{minipage}{.50\textwidth}
	\begin{algorithm}[H]
		\caption{Federated Averaging SPPM-AS (\algname{FedAvg-SPPM-AS})}\label{alg:fedavg-sppm-as}
		\begin{algorithmic}[1]
			\STATE \textbf{Input:} starting point $x_{0,0}\in \mathbb{R}^d$, arbitrary sampling distribution $\mathcal{S}$, global learning rate $\gamma > 0$, local learning rate $\alpha > 0$, local communication rounds $K$
			\FOR{$t = 0, 1, 2, \cdots, T - 1$}
			\STATE Sample $S_t \sim \mathcal{S}$
			\STATE $\forall i\in S_t \ \tilde{f}_{i,t}(x)\gets f_i(x) + \frac{1}{2\gamma}\left\|x-x_t\right\|^2$
			\FOR{$k= 0, 1, 2, \cdots K - 1$}
			\STATE $x_{k + 1, t} = \sum_{i\in S_t}\frac{1}{\left|S_t\right|}\prox_{\alpha \tilde{f}_{i,t}}(x_{k, t})$ 
			\ENDFOR
			\STATE $x_{0, t + 1} \gets x_{K, t}$
			\ENDFOR
			\STATE \textbf{Output: $x_{0, T}$}
		\end{algorithmic}
	\end{algorithm}
\end{minipage}

Here, we employ a proof technique similar to that of Theorem \ref{thm:sppm_as} and obtain the following convergence.

\begin{theorem}[FedProx-SPPM-AS convergence]label{thm:FedProx-SPPM-AS}
		Let the number of local iterations $K=1$, and assume that  \cref{asm:differential} (differentiability) and \cref{asm:strongly_convex} (strong convexity) hold. Let $x_0 \in \mathbb{R}^d$ be an arbitrary starting point. Then, for any $t \geq 0$ and any $\gamma > 0$, the iterates of \algname{FedProx-SPPM} (as described in \cref{alg:FedProx-SPPM-AS}) satisfy:
		
		\[
		\ec{\sqn{x_t - x_\star}} \leq A_{\mathcal{S}}^{t}\left\|x_0-x_{\star}\right\|^2+\frac{B_{\mathcal{S}}}{1-A_{\mathcal{S}}},
		\] where $A_{\mathcal{S}}\eqdef \ec[S_t\sim\mathcal{S}]{\frac{1}{\left|S_{t}\right|}\sum_{i\in S_{t}}\frac{1}{1+\gamma \mu_{i}}}$ and $B_{\mathcal{S}}\eqdef \ec[S_t\sim\mathcal{S}]{\frac{1}{\left|S_{t}\right|}\sum_{i\in S_{t}}\frac{\gamma}{(1 + \gamma\mu_i)\mu_i}\sqn{\nabla f_{i}(x_\star))}}$.
	\end{theorem}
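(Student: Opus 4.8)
The plan is to reduce the $K=1$ iteration to a per‑client one‑step estimate for a single proximal operator, aggregate over the sampled cohort using convexity of the squared norm, and then unroll the resulting linear recursion. The restriction $K=1$ is exactly what makes this route available, since the update is then a single average of individual proxes rather than an iterated averaged‑prox map.

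\textbf{Step 1: single-prox contraction.} Fix $i\in[n]$ and $x\in\mathbb{R}^d$, and set $z_i\eqdef\prox_{\gamma f_i}(x)$. By the first‑order optimality condition of the prox together with \cref{asm:differential}, $\nabla f_i(z_i)=\gamma^{-1}(x-z_i)$. Adding the two $\mu_i$‑strong convexity inequalities (\cref{asm:strongly_convex}) for the pair $(z_i,x_\star)$ gives the monotonicity bound $\ev{\nabla f_i(z_i)-\nabla f_i(x_\star),\,z_i-x_\star}\ge\mu_i\sqn{z_i-x_\star}$; substituting the optimality condition and rearranging yields
\[
(1+\gamma\mu_i)\sqn{z_i-x_\star}\le\ev{x-x_\star-\gamma\nabla f_i(x_\star),\,z_i-x_\star}.
\]
I then split the inner product and apply Young's inequality twice — to $\ev{x-x_\star,\,z_i-x_\star}$ with weight $1$, and to $-\gamma\ev{\nabla f_i(x_\star),\,z_i-x_\star}$ with weight $\gamma\mu_i$ — so that, after absorbing the two resulting $\sqn{z_i-x_\star}$ terms on the left, the coefficients collapse to exactly
\[
\sqn{z_i-x_\star}\le\frac{1}{1+\gamma\mu_i}\sqn{x-x_\star}+\frac{\gamma}{(1+\gamma\mu_i)\mu_i}\sqn{\nabla f_i(x_\star)}.
\]

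\textbf{Step 2: aggregation and unrolling.} For $K=1$ the update of \cref{alg:FedProx-SPPM-AS} is $x_{t+1}=\frac{1}{|S_t|}\sum_{i\in S_t}\prox_{\gamma f_i}(x_t)$, so Jensen's inequality for $\sqn{\cdot}$ over the uniform average on $S_t$ gives $\sqn{x_{t+1}-x_\star}\le\frac{1}{|S_t|}\sum_{i\in S_t}\sqn{\prox_{\gamma f_i}(x_t)-x_\star}$. Inserting the Step 1 bound with $x=x_t$ and taking the expectation over $S_t\sim\mathcal{S}$ conditional on $x_t$ produces $\ec{\sqn{x_{t+1}-x_\star}\mid x_t}\le A_{\mathcal{S}}\sqn{x_t-x_\star}+B_{\mathcal{S}}$ with $A_{\mathcal{S}},B_{\mathcal{S}}$ exactly as in the statement. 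Taking full expectations, using the tower property, iterating, and summing the geometric series gives $\ec{\sqn{x_t-x_\star}}\le A_{\mathcal{S}}^t\sqn{x_0-x_\star}+B_{\mathcal{S}}\sum_{k=0}^{t-1}A_{\mathcal{S}}^k\le A_{\mathcal{S}}^t\sqn{x_0-x_\star}+\frac{B_{\mathcal{S}}}{1-A_{\mathcal{S}}}$, where the last inequality uses $A_{\mathcal{S}}<1$ (each $\tfrac{1}{1+\gamma\mu_i}<1$ since $\gamma>0$ and $\mu_i\ge\mu>0$, hence any cohort average, and so the expectation, is below $1$).

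\textbf{Main obstacle.} The delicate point is Step 1: the Young's‑inequality weights must be chosen so that the coefficients simplify to precisely $\tfrac{1}{1+\gamma\mu_i}$ and $\tfrac{\gamma}{(1+\gamma\mu_i)\mu_i}$, because a careless split leaves a cross term $\ev{\nabla f_i(x_\star),\,x_t-x_\star}$ that cannot be discarded — unlike $\nabla f(x_\star)$, the individual gradients $\nabla f_i(x_\star)$ need not vanish and need not average to zero over a random cohort, which is also why the neighborhood term $B_{\mathcal{S}}$ does not collapse. It is worth noting that \algname{FedProx-SPPM-AS} averages the individual proxes rather than evaluating $\prox_{\gamma f_{S_t}}$, so \cref{thm:sppm_as} does not apply directly; the convexity reduction in Step 2 is exactly what lets the per‑client estimate do the work, and it is also why the restriction $K=1$ is imposed — for $K\ge 2$ one would iterate the averaged‑prox map, whose fixed point is not $x_\star$ and requires a separate argument.
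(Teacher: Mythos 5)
Your proposal is correct and follows essentially the same route as the paper: Jensen's inequality over the cohort average, a per-client one-step bound whose Young weight $\gamma\mu_i$ produces exactly the coefficients $\frac{1}{1+\gamma\mu_i}$ and $\frac{\gamma}{(1+\gamma\mu_i)\mu_i}$, conditional expectation and the tower property giving the recursion with $A_{\mathcal{S}}$ and $B_{\mathcal{S}}$, and a geometric unrolling (using $A_{\mathcal{S}}\le\frac{1}{1+\gamma\mu}<1$). The only cosmetic difference is that you derive the per-client inequality from scratch via the prox optimality condition and strong monotonicity of $\nabla f_i$, whereas the paper obtains the identical bound by combining the fixed-point identity $x_\star=\prox_{\gamma f_i}(x_\star+\gamma\nabla f_i(x_\star))$ with the prox-contractivity fact before applying the same Young step with $\alpha_i=\gamma\mu_i$, and then invokes its recurrence lemma instead of summing the geometric series directly.
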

	
	\paragraph{Federated averaging for $\prox$ approximation.}
	An alternative method involves estimating the proximal operator by averaging the proximal operators calculated for each worker's function. We call it \emph{Federated Averaging Stochastic Proximal Point Method} (\algname{FedAvg-SPPM-AS}, see \cref{alg:fedavg-sppm-as}).
 (\algname{FedAvg-SPPM-AS}, see \cref{alg:fedavg-sppm-as}).

After selecting and fixing a sample of workers $S_k$, the main objective is to calculate the proximal operator. This can be accomplished by approximating the proximal calculation with the goal of minimizing $\tilde{f}_S(x)=f_S(x)+\frac{2}{\gamma}\left\|x-x_t\right\|^2$. It can be observed that this approach is equivalent to \algname{FedProx-SPPM-AS}, as at each local step we calculate
\[
\prox_{\alpha\tilde{f_i}}(x_{k, t})\eqdef \argmin_{z\in\Rd}\left[\tilde{f_i}(z)+\frac{2}{\alpha}\sqn{z-x_{k, t}}\right]=\argmin_{z\in\Rd}\left[f_i(z)+\left(\frac{2}{\gamma}+\frac{2}{\alpha}\right)\sqn{z-x_{k, t}}\right].
\]
\section{Training Details}
\subsection{Non-IID Data Generation}\label{sec:data_generation}
In our study, we validate performance and compare the benefits of \algname{SPPM-AS} over \algname{SPPM} using well-known datasets such as \texttt{mushrooms}, \texttt{a6a}, \texttt{w6a}, and \texttt{ijcnn1.bz2} from LibSVM~\citep{chang2011libsvm}. To ensure relevance to our research focus, we adopt a feature-wise non-IID setting, characterized by variation in feature distribution across clients. This variation is introduced by clustering the features using the K-means algorithm, with the number of clusters set to $10$ and the number of clients per cluster fixed at $10$ for simplicity. We visualize the clustered data using t-SNE in \cref{fig:tSNE1}, where we observe that the data are divided into 10 distinct clusters with significantly spaced cluster centers.

\begin{figure}[!tb]
	\centering
	\begin{subfigure}[b]{0.475\textwidth}
		\centering
		\includegraphics[width=1.0\textwidth, trim=90 46 30 50, clip]{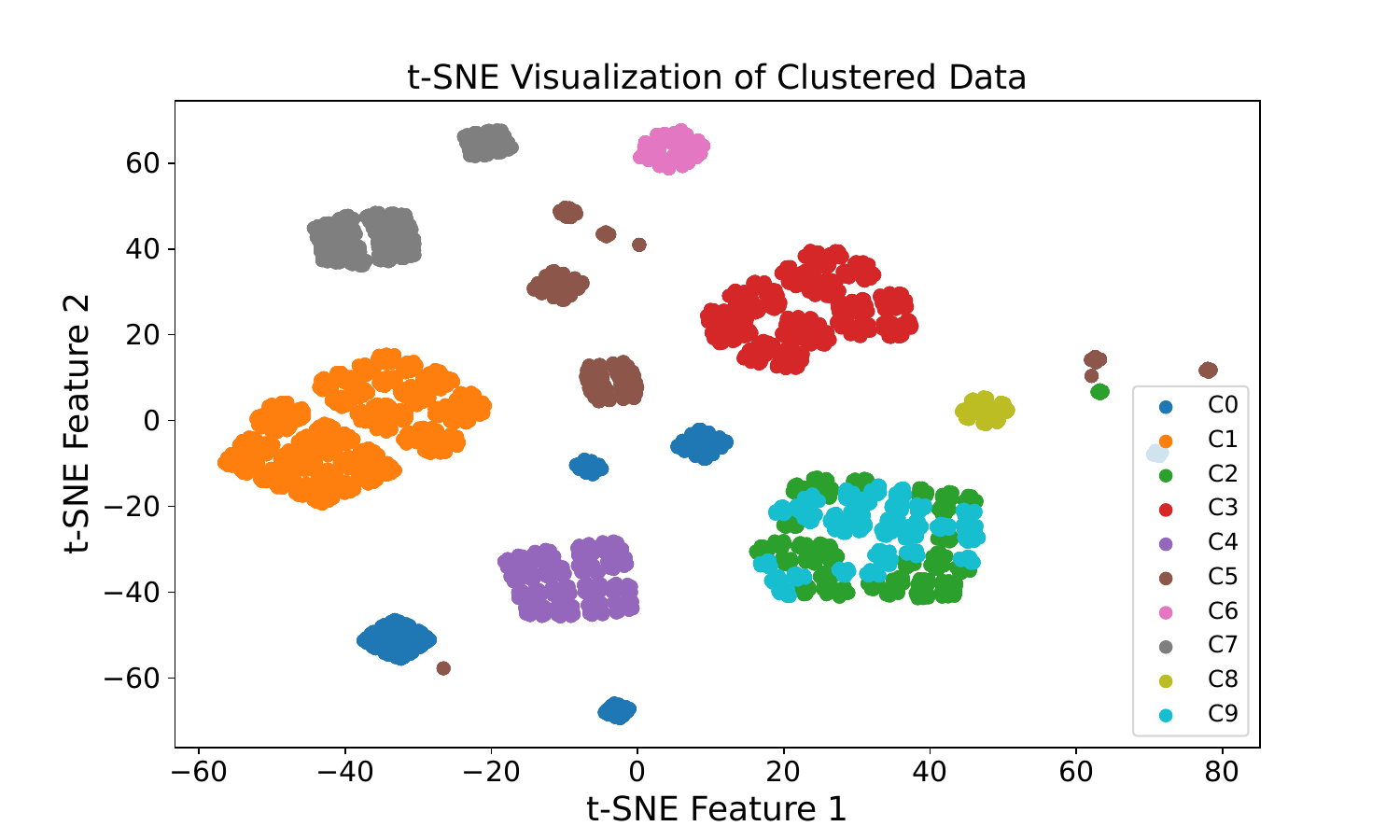}
		\caption{mushrooms, 10 clusters}
	\end{subfigure}
	\begin{subfigure}[b]{0.475\textwidth}
		\centering
		\includegraphics[width=1.0\textwidth, trim=90 46 30 50, clip]{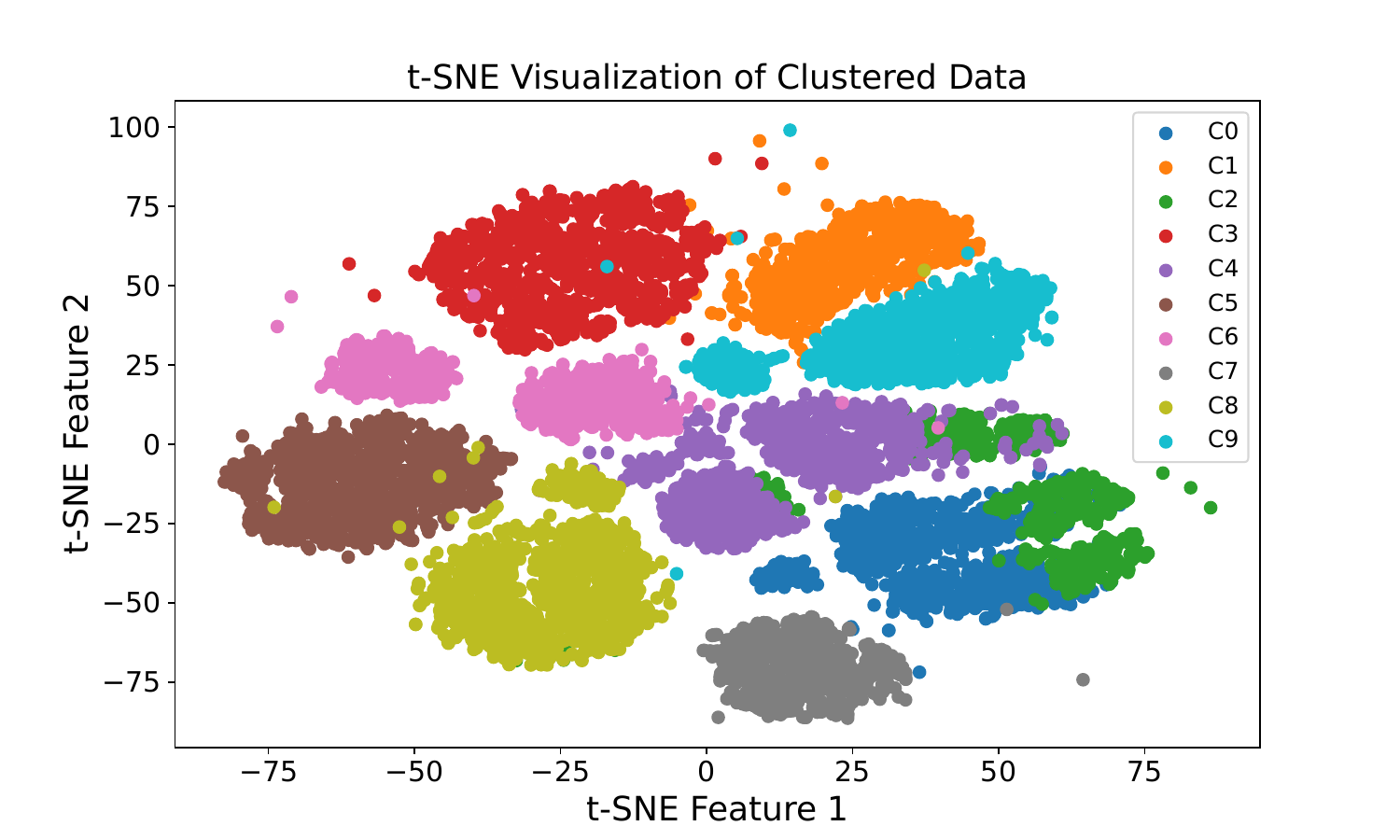}
		\caption{a6a, 10 clusters}
	\end{subfigure}
	\caption{t-SNE visualization of cluster-features across data samples on clients.} \label{fig:tSNE1}
\end{figure}

\subsection{Sampling}
To simulate random sampling among clients within these 10 clusters, where each cluster comprises 10 clients, we consider two contrasting scenarios:

\begin{itemize}
	\item \textit{Case I - \algname{SPPM-BS}:} Assuming clients within the same cluster share similar features and data distributions, sampling all clients from one cluster (i.e., $C=10$ clients) results in a homogeneous sample.
	\item \textit{Case II - \algname{SPPM-SS}:} Conversely, by traversing all 10 clusters and randomly sampling one client from each, we obtain a group of 10 clients representing maximum heterogeneity. 
\end{itemize}

We hypothesize that any random sampling from the 100 clients will yield performance metrics lying between these two scenarios. In \cref{fig:setting_comp_1}, we examine the impact of sampling clients with varying degrees of heterogeneity using a fixed learning rate of $0.1$. Our findings indicate that heterogeneous sampling results in a significantly smaller convergence neighborhood $\sigma^2_{\star}$. This outcome is attributed to the broader global information captured through heterogeneous sampling, in contrast to homogeneous sampling, which increases the data volume without contributing additional global insights. As these two sampling strategies represent the extremes of arbitrary sampling, any random selection will fall between them in terms of performance. Given their equal cost and the superior performance of the \algname{SPPM-SS} strategy in heterogeneous FL environments, we designate \algname{SPPM-SS} as our default sampling approach.

\begin{figure}[!tb]
	\centering
	\begin{subfigure}[b]{0.31\textwidth}
		\centering
		\includegraphics[width=1.0\textwidth, trim=0 0 0 0, clip]{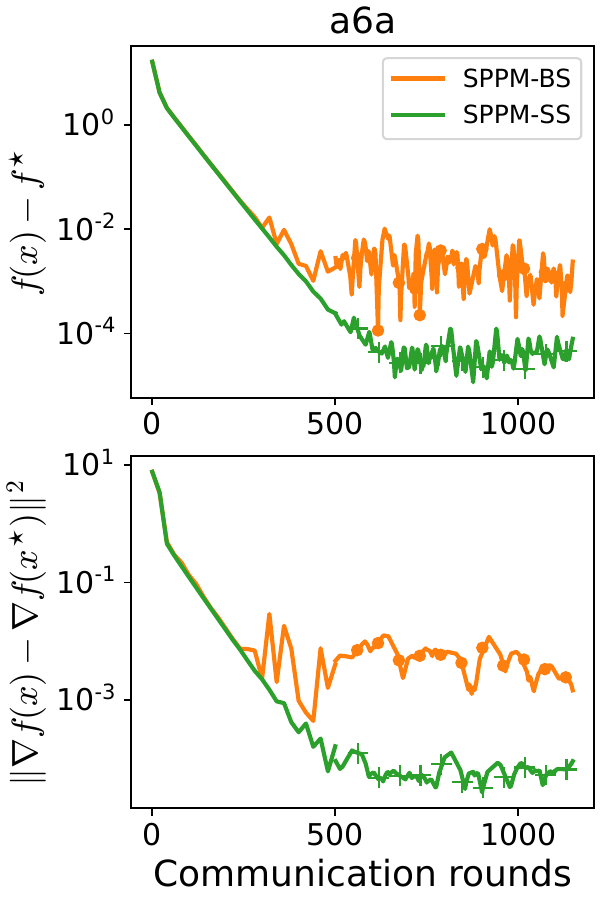}
	\end{subfigure}
	\hfill
	\begin{subfigure}[b]{0.31\textwidth}
		\centering
		\includegraphics[width=1.0\textwidth, trim=0 0 0 0, clip]{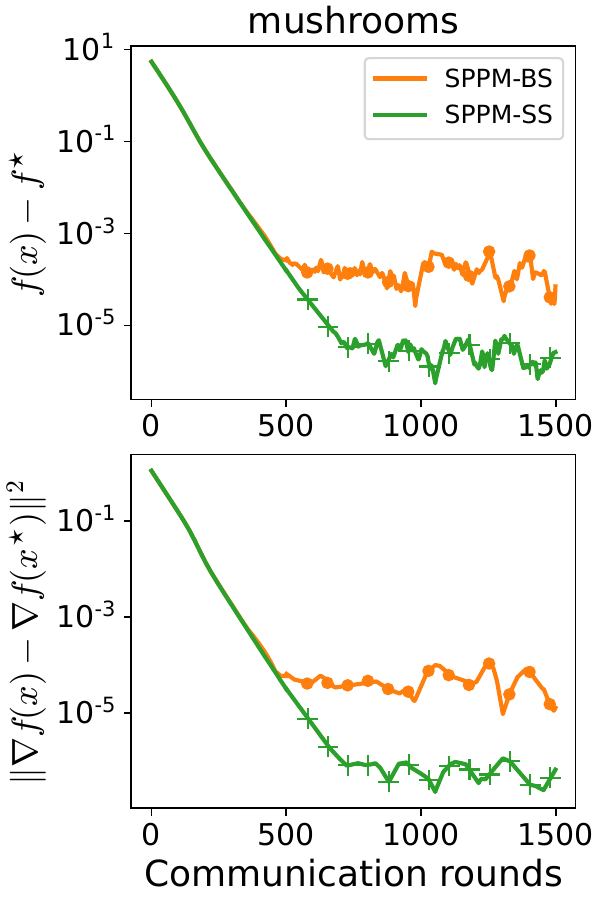}
	\end{subfigure}
	\hfill
	\begin{subfigure}[b]{0.31\textwidth}
		\centering
		\includegraphics[width=1.0\textwidth, trim=0 0 0 0, clip]{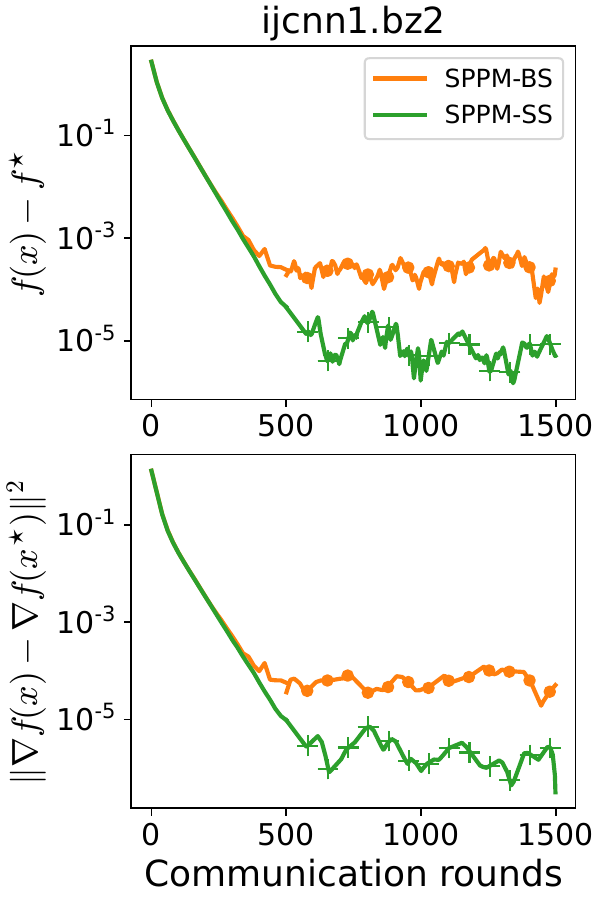}
	\end{subfigure}
	\caption{Comparison with \algname{SPPM-SS} and \algname{SPPM-BS} samplings.} \label{fig:setting_comp_1}
\end{figure}

\begin{algorithm}[!tb]
	\caption{\algname{SPPM-AS} Adaptation for Federated Learning}\label{alg:SPPMFL}
	\begin{algorithmic}[1]
		\STATE \textbf{Input:} Initial point $x^0\in \Rd$, cohort size $C\geq 1$, learning rate $\gamma > 0$, clusters $q\geq C$, local communication rounds $K$
		\FOR{$t = 0, 1, 2, \cdots$}
		\STATE \algname{SPPM-BS}:
		\STATE \quad Server samples a cluster $q_i$ from $[q]$
		\STATE \quad Server samples $C$ clients, denoted as $[C]$ from cluster $q_i$
		\STATE \algname{SPPM-SS}:
		\STATE \quad Server samples $C$ clusters from $[q]$
		\STATE \quad Server sample 1 client from each selected cluster to construct $C$ clients
		\STATE Server broadcasts the model $x_t$ to each $C_i \in [C]$
		\STATE All selected clients in parallel construct $F_{\xi_t^1, \cdots, \xi_t^C} (x_t)$
		\STATE All selected clients together evaluate the prox for $K$ local communication rounds to obtain 
		\STATE $$x_{t+1} \simeq \prox_{\gamma F_{\xi_t^1, \cdots, \xi_t^C}}(x_t)$$
		\STATE All selected clients send the updated model $x_{t+1}$ to the server
		
		\ENDFOR
	\end{algorithmic}
\end{algorithm}

\subsection{{SPPM-AS} Algorithm Adaptation for Federated Learning}
In the main text, \cref{alg:sppm_as} outlines the general form of \algname{SPPM-AS}. For the convenience of implementation in FL contexts and to facilitate a better understanding, we introduce a tailored version of the \algname{SPPM-AS} algorithm specific to FL, designated as \cref{alg:SPPMFL}. Notably, as block sampling is adopted as our default method, this adaptation of the algorithm specifically addresses the nuances of the block sampling approach. We also conducted arbitrary sampling on synthetic datasets and neural networks to demonstrate the algorithm's versatility.

\newpage
\clearpage
\section{Additional Experiments on Logistic Regression}

\subsection{Communication Cost on Various Datasets to a Target Accuracy}
In \cref{fig:tissue0}, we presented the total communication cost relative to the number of rounds required to achieve the target accuracy for the selected cohort. In this section, we provide more details on how is this figure was obtained and present additional results for various datasets. 
\begin{figure*}[!htbp]
	\centering
	\begin{subfigure}[b]{0.31\textwidth}
		\centering
		\includegraphics[width=1.0\textwidth, trim=0 0 0 0, clip]{{img/main/BFGS_clusters_100_a6a_False_True_1e-06_15_1000.0_minibatch_sppm_round_0.005main}.pdf}
	\end{subfigure}
	\hfill
	\centering
	\begin{subfigure}[b]{0.31\textwidth}
		\centering
		\includegraphics[width=1.0\textwidth, trim=0 0 0 0, clip]{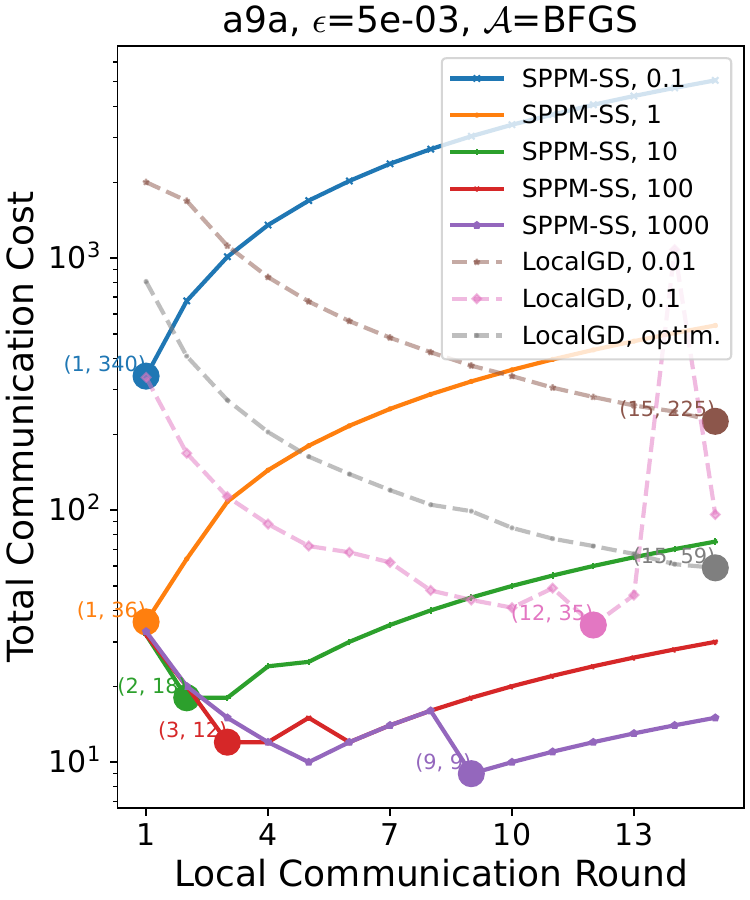}
	\end{subfigure}
	\hfill
	\begin{subfigure}[b]{0.31\textwidth}
		\centering
		\includegraphics[width=1.0\textwidth, trim=0 0 0 0, clip]{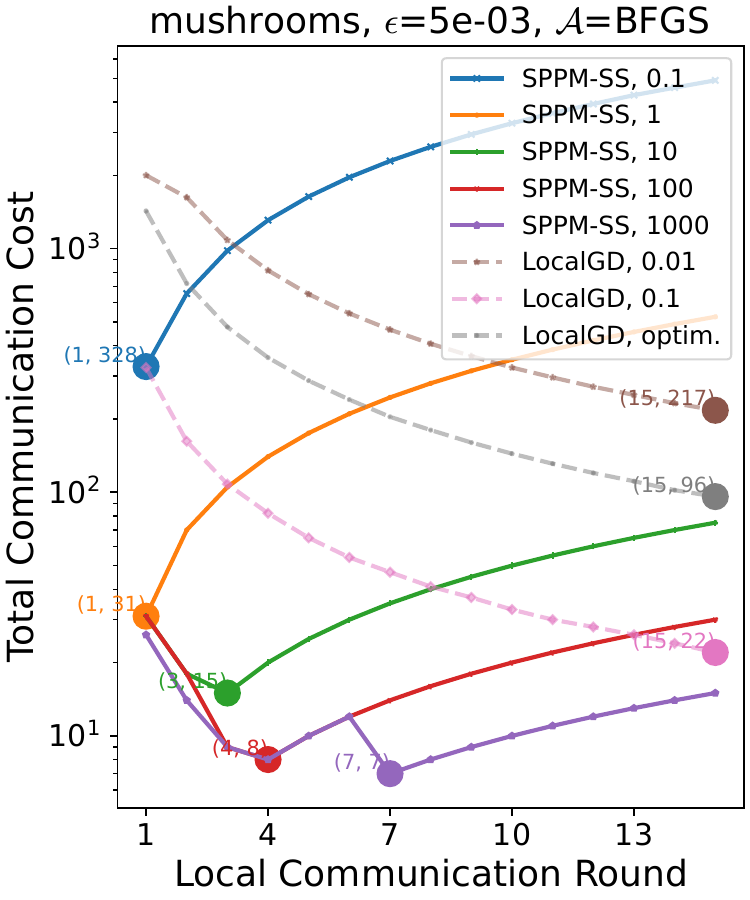}
	\end{subfigure}
	\hfill
	\caption{Total communication cost with respect to the local communication round. For \algname{LocalGD}, $K$ represents the local communication round $K$ for finding the prox of the current model. For \algname{LocalGD}, we slightly abuse the x-axis, which represents the total number of local iterations, no local communication is required. We calculate the total communication cost to reach a fixed global accuracy $\epsilon$ such that $\sqn{x_t - x_\star}<\epsilon$. \algname{LocalGD, optim} represents using the theoretical optimal stepsize of \algname{LocalGD} with minibatch sampling.}
	\label{fig:tissue1_arch}
\end{figure*} 

\begin{figure}[!htbp]
	\centering
	\begin{minipage}{0.48\textwidth}
		\centering
		\begin{subfigure}[b]{0.48\textwidth}
			\centering
			\includegraphics[width=\textwidth, trim=0 0 0 0, clip]{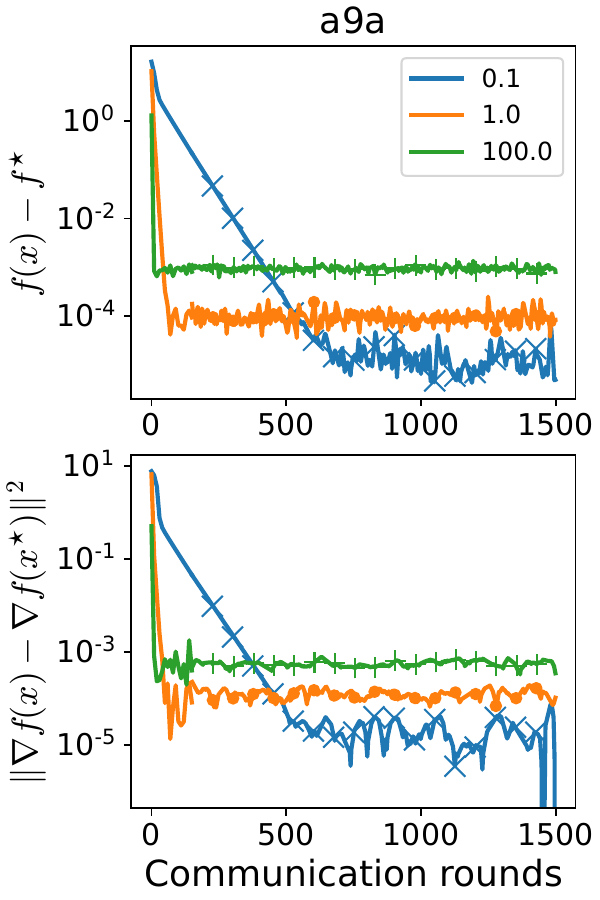}
		\end{subfigure}
		\hfill
		\begin{subfigure}[b]{0.48\textwidth}
			\centering
			\includegraphics[width=\textwidth, trim=0 0 0 0, clip]{{img/gammas/BFGS_clusters_100_a9a_False_True_1e-06_4_100.0_minibatch_sppm_round_}.pdf}
		\end{subfigure}
		\caption{$K=4$.}\label{fig:abs3_1}
	\end{minipage}
	\hfill
	\begin{minipage}{0.48\textwidth}
		\centering
		\begin{subfigure}[b]{0.48\textwidth}
			\centering
			\includegraphics[width=\textwidth, trim=0 0 0 0, clip]{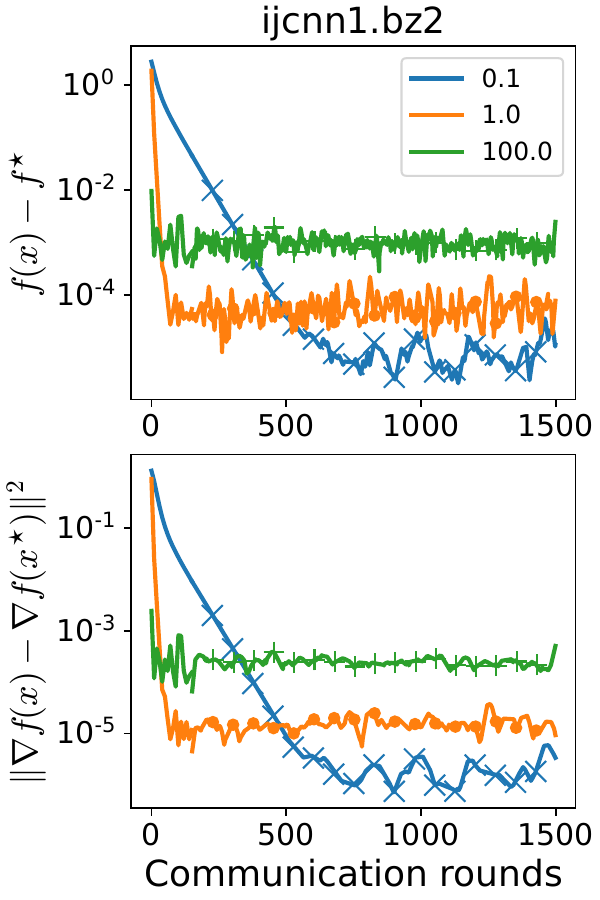}
		\end{subfigure}
		\hfill
		\begin{subfigure}[b]{0.48\textwidth}
			\centering
			\includegraphics[width=\textwidth, trim=0 0 0 0, clip]{{img/gammas/BFGS_clusters_100_ijcnn1.bz2_False_True_1e-06_16_100.0_minibatch_sppm_round_}.pdf}
		\end{subfigure}
		\caption{$K=16$.}\label{fig:abs3_2}
	\end{minipage}
\end{figure}

\subsection{Convergence Speed and $\sigma^2_{\star, \mathrm{SS}}$ Trade-Off}\label{sec:trade_off}
Unlike \algname{SGD}-type methods such as \algname{MB-GD} and \algname{MB-LocalGD}, in which the largest allowed learning rate is $1/A$, where $A$ is a constant proportion to the smoothness of the function we want to optimize~\citep{gower2019sgd}. For larger learning rate, \algname{SGD}-type method may not converge and exploding. However, for stochastic proximal point methods, they have a very descent benefit of allowing arbitrary learning rate. In this section, we verify whether our proposed method can allow arbitrary learning rate and whether we can find something interesting. We considered different learning rate scale from 1e-5 to 1e+5. We randomly selected three learning rates [0.1, 1, 100] for visual representation with the results presented in \cref{fig:abs3_1} and \cref{fig:abs3_2}. We found that a larger learning rate leads to a faster convergence rate but results in a much larger neighborhood, \(\sigma^2_{\star, \mathrm{SS}} / \mu^2_{\mathrm{SS}}\). This can be considered a trade-off between convergence speed and neighborhood size, \(\sigma^2_{\star, \mathrm{SS}}\). By default, we consider setting the learning rate to $1.0$ which has a good balance between the convergence speed and the neighborhood size.

In this section, we extend our analysis by providing additional results across a broader range of datasets and varying learning rates. Specifically, \cref{fig:abs3_1} illustrates the outcomes using 4 local communication rounds ($K=4$), while \cref{fig:abs3_2} details the results for 16 local communication rounds ($K=16$). Previously, in \cref{fig:tissue0}, we explored the advantages of larger $K$ values. Here, our focus shifts to determining if similar trends are observable across different $K$ values. Through comprehensive evaluations on various datasets and multiple $K$ settings, we have confirmed that lower learning rates in \algname{SPPM-AS} result in slower convergence speeds; however, they also lead to a smaller final convergence neighborhood.

\subsection{Additional Experiments on Hierarchical Federated Learning}\label{sec:hierarchy-fl}
In \cref{fig:abs1_hierarchical} of the main text, we detail the total communication cost for hierarchical Federated Learning (FL) utilizing parameters $c_1=0.1$ and $c_2=1$ on the \texttt{a6a} dataset. Our findings reveal that \algname{SPPM-AS} achieves a significant reduction in communication costs, amounting to $94.87\%$, compared with the conventional FL setting where $c_1=1$ and $c_2=1$, which shows a 74.36\% reduction. In this section, we extend our analysis with comprehensive evaluations on additional datasets, namely \texttt{ijcnn1.bz2}, \texttt{a9a}, and \texttt{mushrooms}. Beyond considering $c_1=0.1$, we further explore the impact of reducing the local communication cost from each client to the corresponding hub to $c_1=0.05$. The results, presented in \cref{fig:tissue11} and the continued \cref{fig:tissue11_2}, reinforce our observation: hierarchical FL consistently leads to further reductions in communication costs. A lower $c_1$ parameter correlates with even greater savings in communication overhead. These results not only align with our expectations but also underscore the efficacy of our proposed \algname{SPPM-AS} in cross-device FL settings.

\begin{figure*}[!htbp]
	\centering
	\begin{subfigure}[b]{0.31\textwidth}
		\centering
		\includegraphics[width=1.0\textwidth, trim=0 0 0 0, clip]{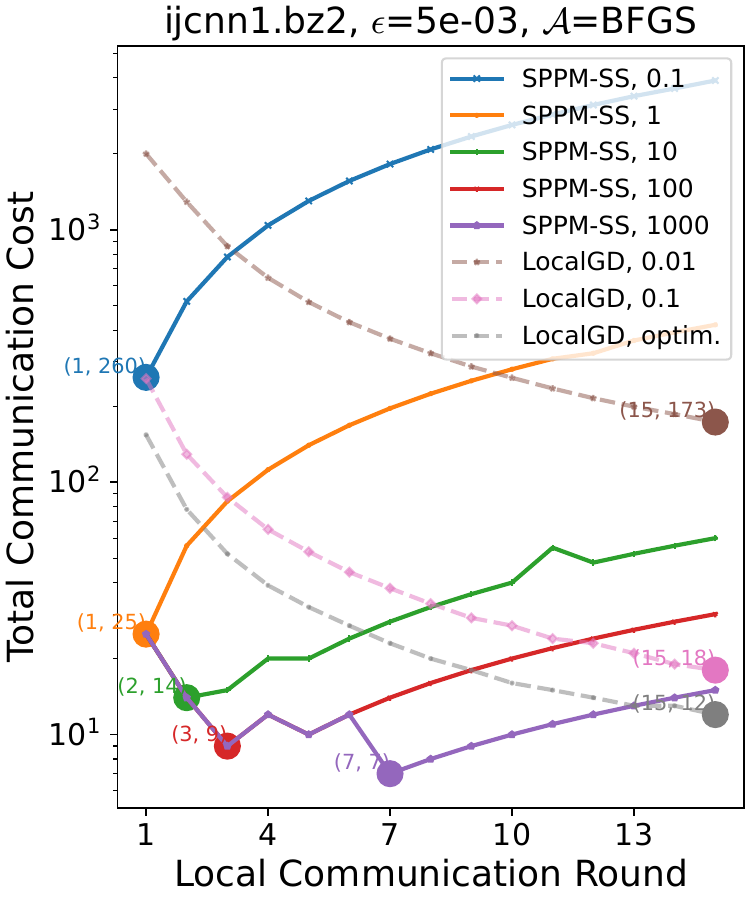}
	\end{subfigure}
	\hfill
	\centering
	\begin{subfigure}[b]{0.31\textwidth}
		\centering
		\includegraphics[width=1.0\textwidth, trim=0 0 0 0, clip]{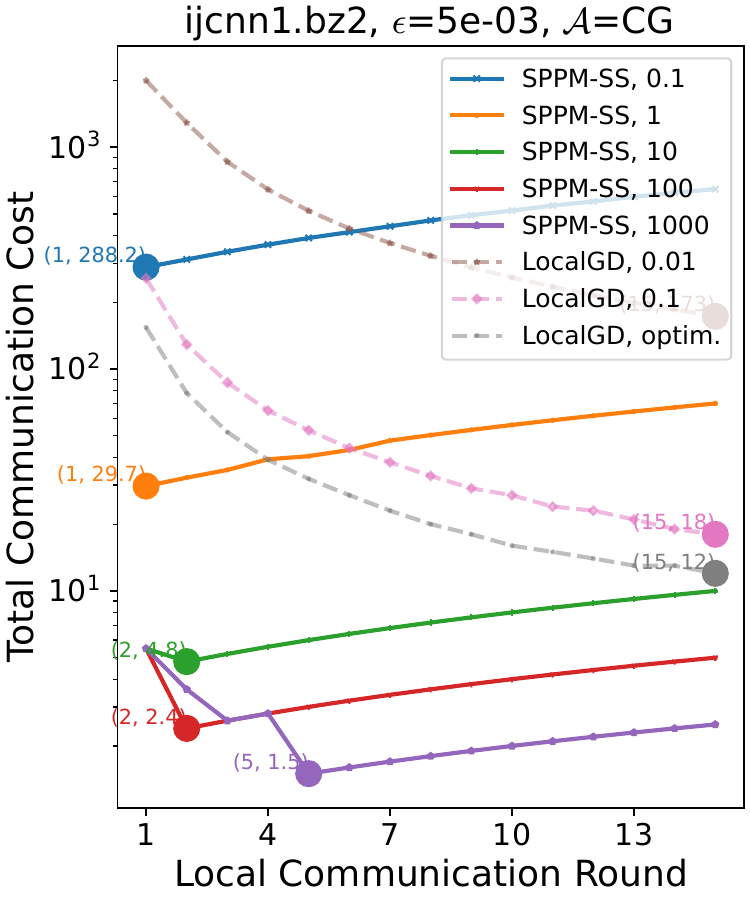}
	\end{subfigure}
	\hfill
	\begin{subfigure}[b]{0.31\textwidth}
		\centering
		\includegraphics[width=1.0\textwidth, trim=0 0 0 0, clip]{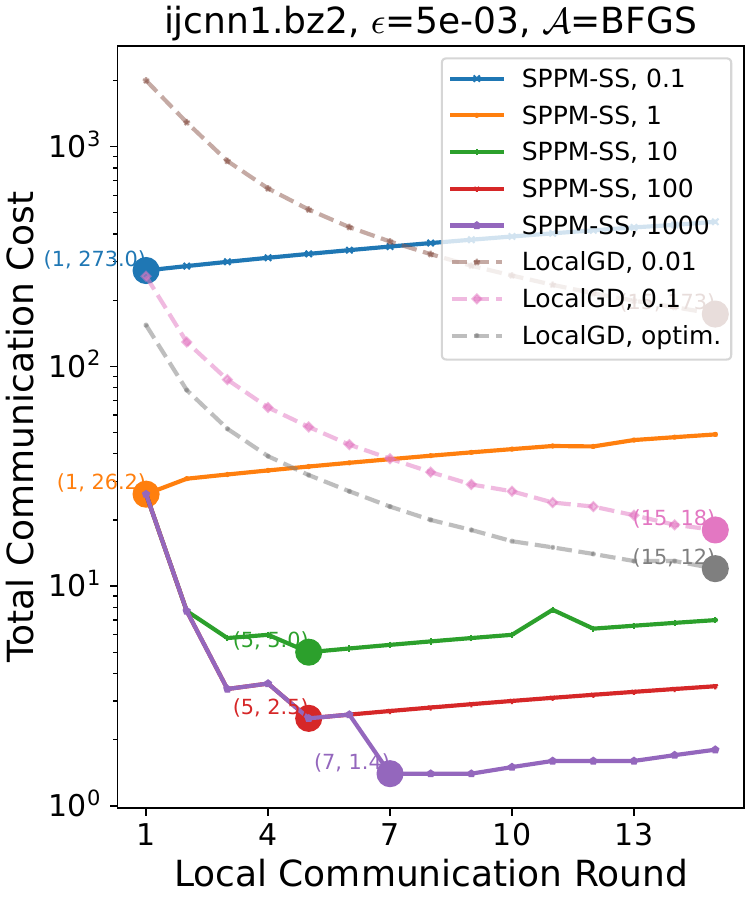}
	\end{subfigure}
	\hfill
	\caption{The total communication cost is analyzed with respect to the number of local communication rounds. For \algname{LocalGD}, $K$ represents the local communication round used for finding the prox of the current model. In the case of \algname{LocalGD}, we slightly abuse the x-axis to represent the total number of local iterations, as no local communication is required. We calculate the total communication cost needed to reach a fixed global accuracy $\epsilon$, such that $\sqn{x_t - x_\star} < \epsilon$. \algname{LocalGD, optim} denotes the use of the theoretically optimal stepsize for \algname{LocalGD} with minibatch sampling. Comparisons are made between different prox solvers (\algname{CG} and \algname{BFGS}).
	}
	\label{fig:tissue11}
\end{figure*}

\begin{figure*}[!htbp]
	\centering
	\begin{subfigure}[b]{0.30\textwidth}
		\centering
		\includegraphics[width=1.0\textwidth]{{img/main/BFGS_clusters_100_a9a_False_True_1e-06_15_1000.0_minibatch_sppm_round_0.005main}.pdf}
	\end{subfigure}
	\hfill
	\centering
	\begin{subfigure}[b]{0.30\textwidth}
		\centering
		\includegraphics[width=1.0\textwidth]{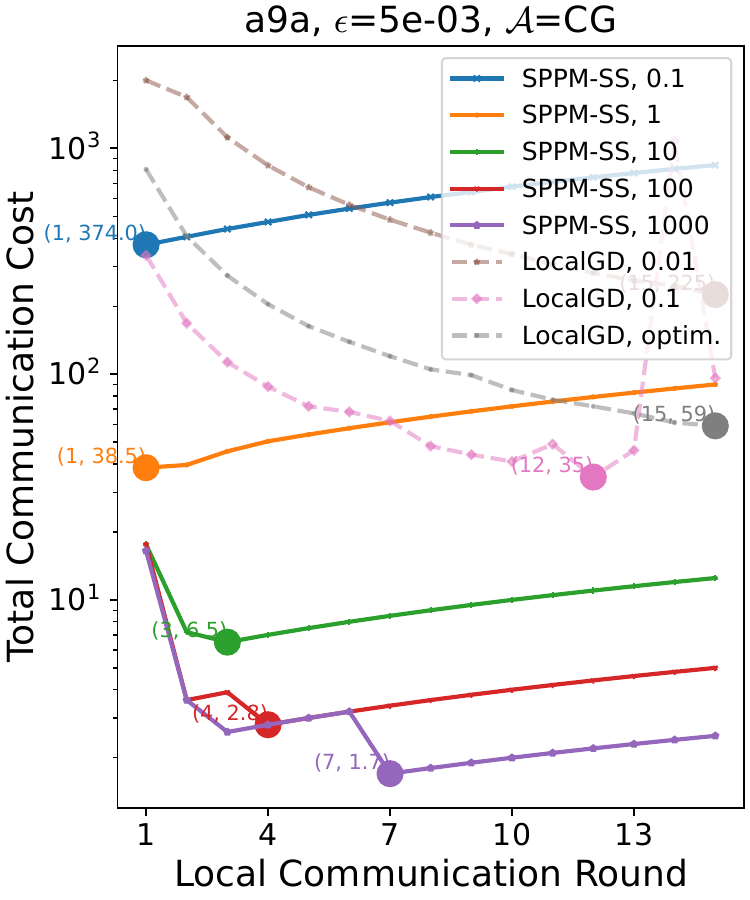}
	\end{subfigure}
	\hfill
	\begin{subfigure}[b]{0.30\textwidth}
		\centering
		\includegraphics[width=1.0\textwidth]{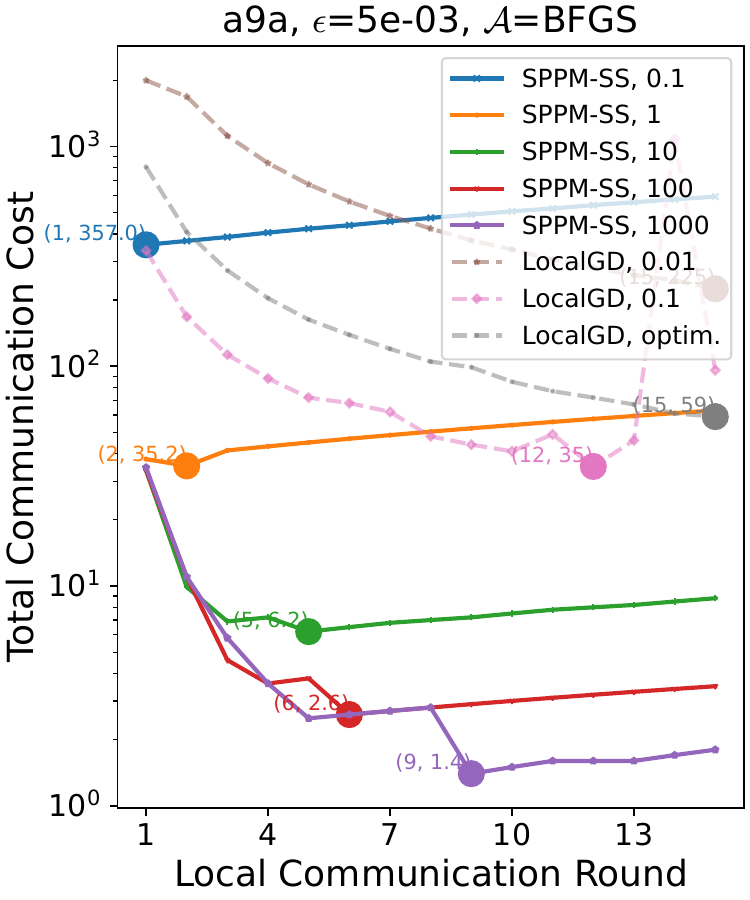}
	\end{subfigure}	
	\centering
	\begin{subfigure}[b]{0.30\textwidth}
		\centering
		\includegraphics[width=1.0\textwidth]{{img/main/BFGS_clusters_100_mushrooms_False_True_1e-06_15_1000.0_minibatch_sppm_round_0.005main}.pdf}
		\caption{standard FL, $c_1=1, c_2=0$}
	\end{subfigure}
	\hfill
	\centering
	\begin{subfigure}[b]{0.30\textwidth}
		\centering
		\includegraphics[width=1.0\textwidth]{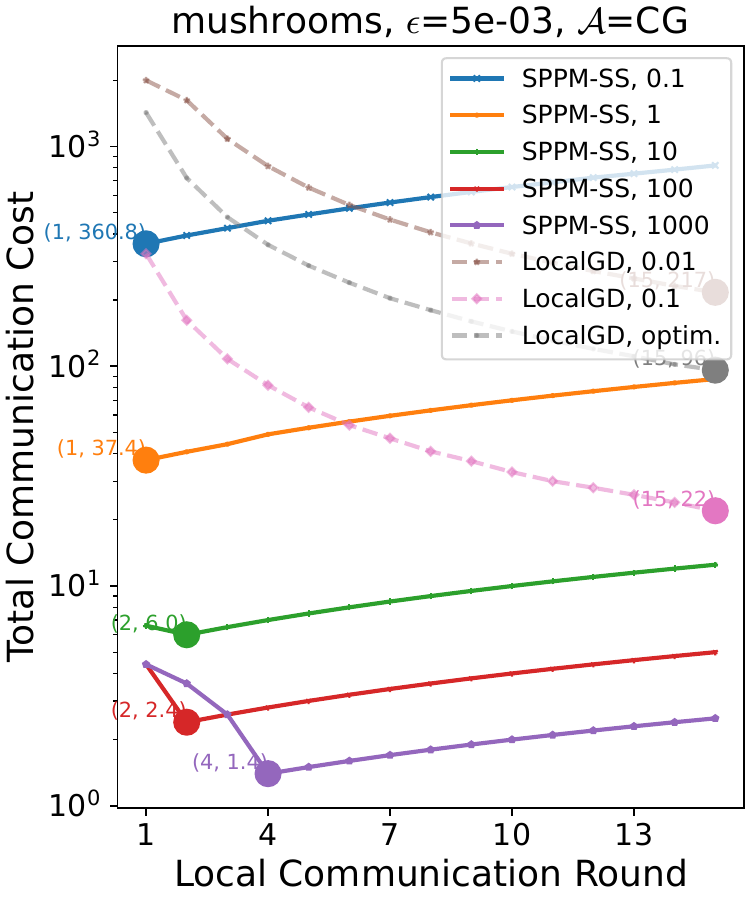}
		\caption{hierarchical FL, $c_1=0.1, c_2=1$}
	\end{subfigure}
	\hfill
	\begin{subfigure}[b]{0.30\textwidth}
		\centering
		\includegraphics[width=1.0\textwidth]{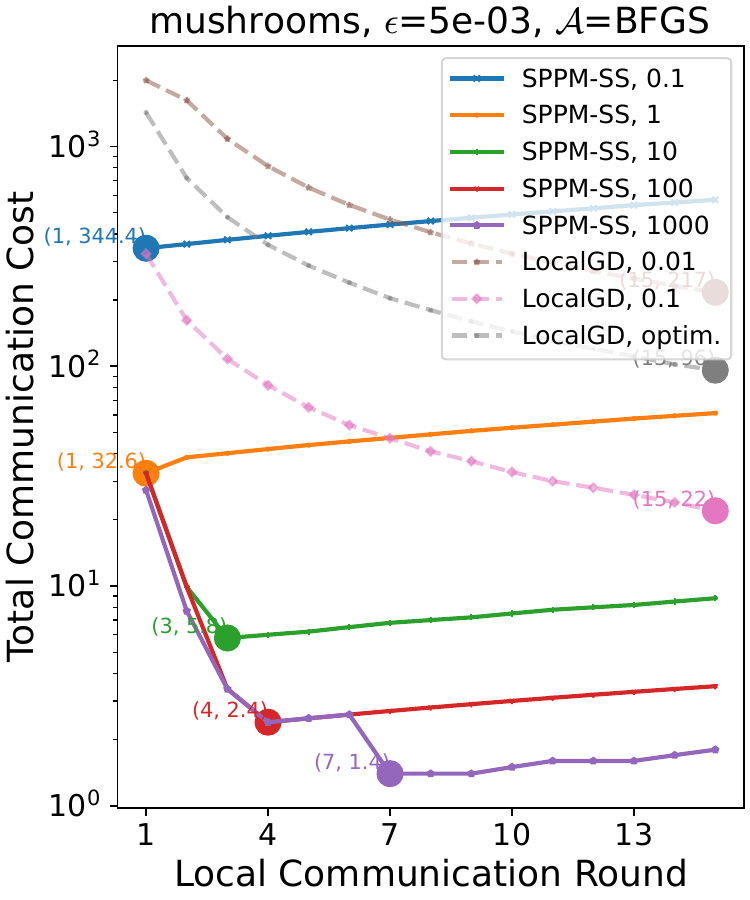}
		\caption{hierarchical FL, $c_1=0.05, c_2=1$}
	\end{subfigure}
	\hfill
	\caption{Total communication cost with respect to the local communication round.}
	\label{fig:tissue11_2}
\end{figure*}

\section{Additional Neural Network Experiments}\label{sec:nn_additional}
\subsection{Experiment Details} For our neural network experiments, we used the \texttt{FEMNIST} dataset \citep{leaf}. Each client was created by uniformly selecting from user from original dataset, inherently introducing heterogeneity among clients. We tracked and reported key evaluation metrics—training and testing loss and accuracy—after every 5 global communication rounds. The test dataset was prepared by dividing each user's data into a 9:1 ratio, following the partitioning approach of the FedLab framework \citep{FedLab}. For the \algname{SPPM-AS} algorithm, we selected \algname{Adam} as the optimizer for the proximal operator. The learning rate was determined through a grid search across the following range: $[0.0001, 0.0005, 0.001, 0.005, 0.01, 0.05, 0.1, 0.5]$. The model architecture comprises a convolutional neural network (CNN) with the following layers: Conv2d(1, 32, 5), ReLU, Conv2d(32, 64, 5), MaxPool2d(2, 2), a fully connected (FC) layer with 128 units, ReLU, and another FC layer with 128 units, as specified in Table \ref{tab:CNN}. Dropout, learning rate scheduling, gradient clipping, etc., were not used to improve the interpretability of results.

We explore various values of targeted training accuracy, as illustrated in \cref{fig:dl_accuracy}. This analysis helps us understand the impact of different accuracy thresholds on the model's performance. For instance, we observe that as the target accuracy changes, \algname{SPPM-NICE} consistently outperforms \algname{LocalGD} in terms of total communication cost. As the target accuracy increases, the performance gap between these two algorithms also widens. Additionally, we perform ablation studies on different values of \(c_1\), as shown in \cref{fig:dl_c_1}, to assess their effects on the learning process. Here, we note that with \(c_2 = 0.2\), \algname{SPPM-NICE} performs similarly to \algname{LocalGD}, suggesting that an increase in \(c_2\) value could narrow the performance gap between \algname{SPPM-NICE} and \algname{LocalGD}.

	
\begin{table}[!tb]
	\centering
	\begin{tabular}{p{0.15\textwidth}p{0.15\textwidth}p{0.15\textwidth}p{0.15\textwidth}p{0.2\textwidth}}
		\toprule
		Layer       & Output Shape   & \# of Trainable Parameters & Activation & Hyperparameters                         \\ \midrule
		Input       & (28, 28, 1)    & 0                          &            &                                          \\ 
		Conv2d      & (24, 24, 32)   & 832                        & ReLU       & kernel size = 5; strides = (1, 1)        \\ 
		Conv2d      & (10, 10, 64)   & 51,264                     & ReLU       & kernel size = 5; strides = (1, 1)        \\ 
		MaxPool2d   & (5, 5, 64)     & 0                          &            & pool size = (2, 2)                       \\ 
		Flatten     & 6400           & 0                          &            &                                          \\
		Dense       & 128            & 819,328                    & ReLU       &                                          \\ 
		Dense       & 62            & 7,998                      & softmax    &                                         \\ \bottomrule
	\end{tabular}
	\caption{Architecture of the CNN model for \texttt{FEMNIST} symbol recognition.}
	\label{tab:CNN}
\end{table}

\begin{figure}[!htbp]
	\centering
	\begin{subfigure}[b]{0.31\textwidth}
		\centering
		\includegraphics[width=1.0\textwidth, trim=0 0 0 0, clip]{{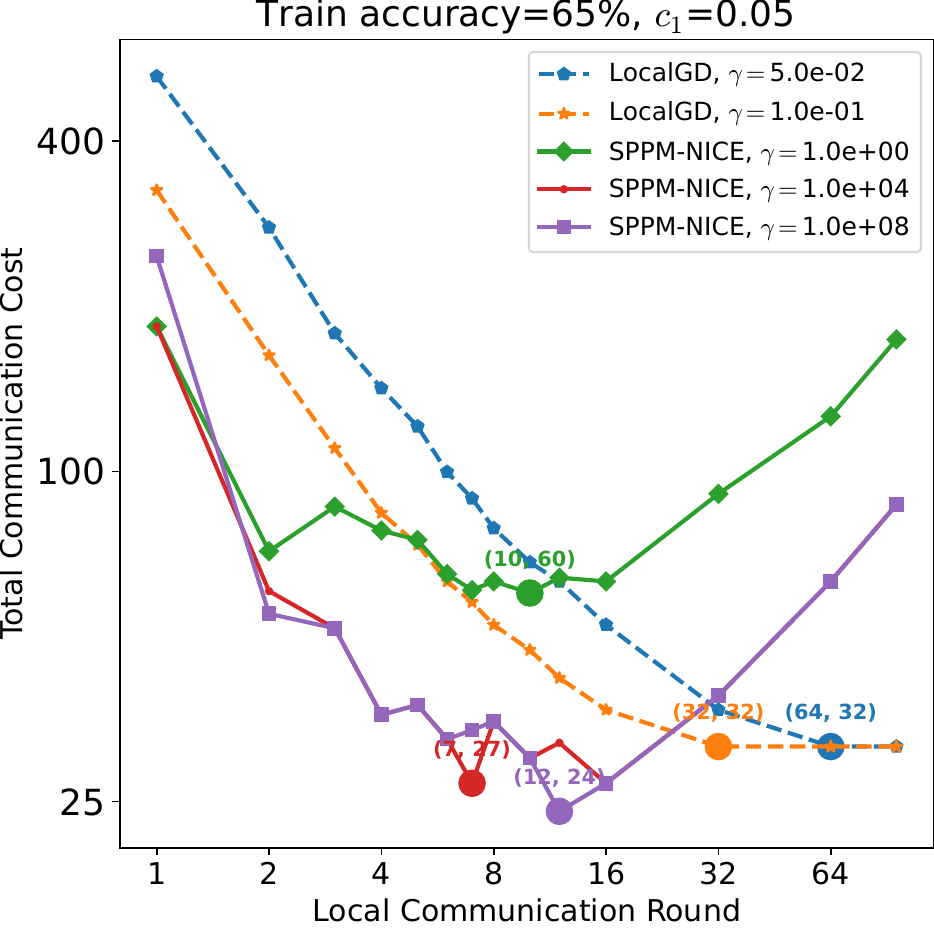}}
	\end{subfigure}
	\hfill  
	\centering
	\begin{subfigure}[b]{0.31\textwidth}
		\centering
		\includegraphics[width=1.0\textwidth, trim=0 0 0 0, clip]{{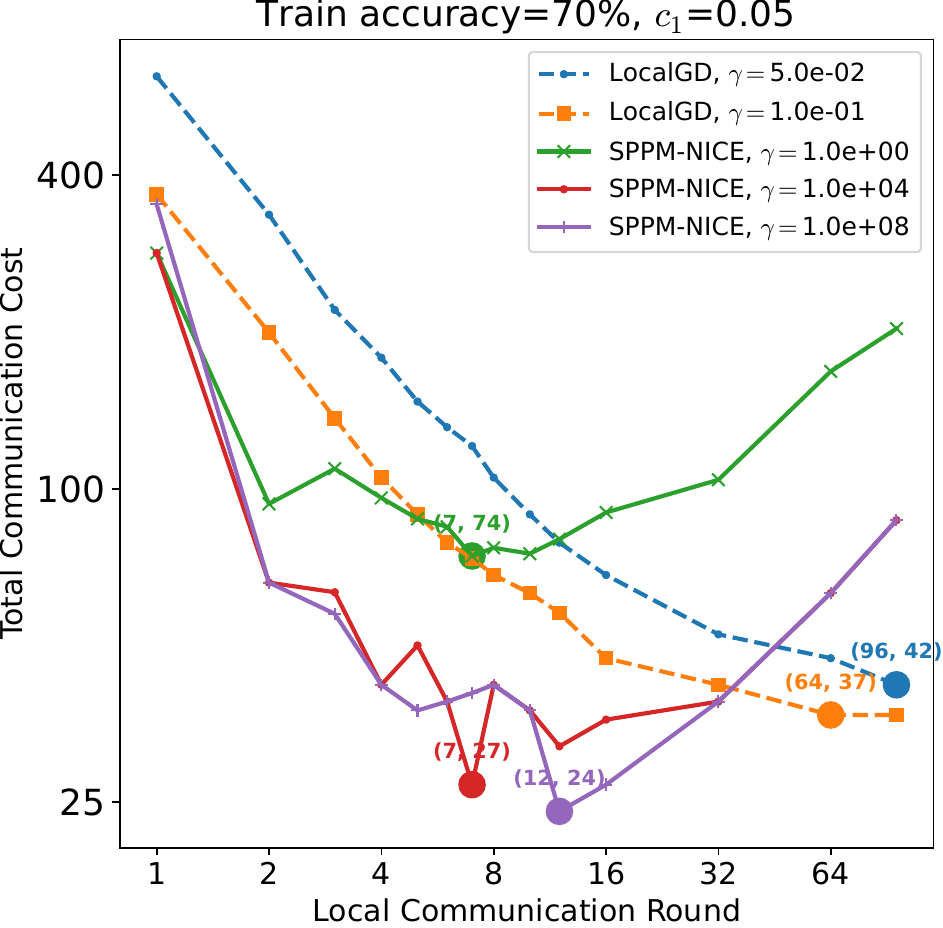}}
	\end{subfigure}
	\hfill
	\begin{subfigure}[b]{0.31\textwidth}
		\centering
		\includegraphics[width=1.0\textwidth, trim=0 0 0 0, clip]{{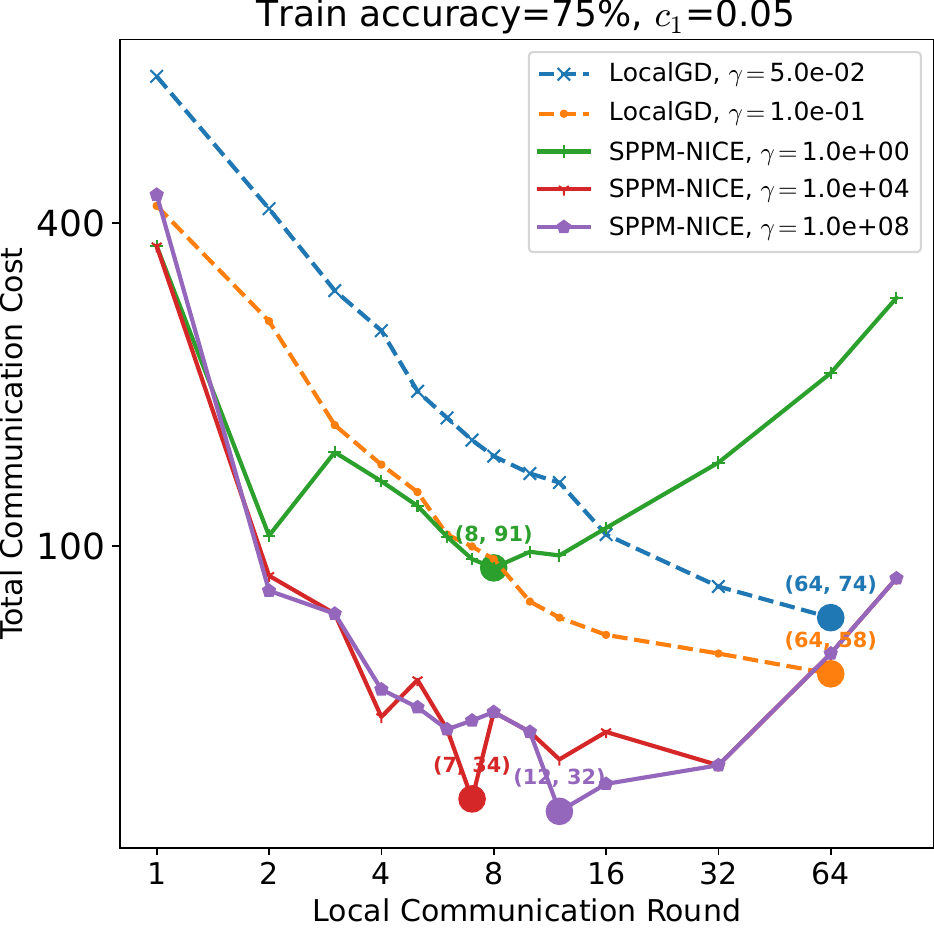}}
	\end{subfigure}
	\hfill
	\caption{Varying targeted training accuracy level for \algname{SPPM-AS}.}
	\label{fig:dl_accuracy}
\end{figure}  

\begin{figure}[!htbp]
	\centering
	\begin{subfigure}[b]{0.31\textwidth}
		\centering
		\includegraphics[width=1.0\textwidth, trim=0 0 0 0, clip]{{img/deep_learning/1-main.pdf}}
	\end{subfigure}
	\hfill
	\centering
	\begin{subfigure}[b]{0.31\textwidth}
		\centering
		\includegraphics[width=1.0\textwidth, trim=0 0 0 0, clip]{{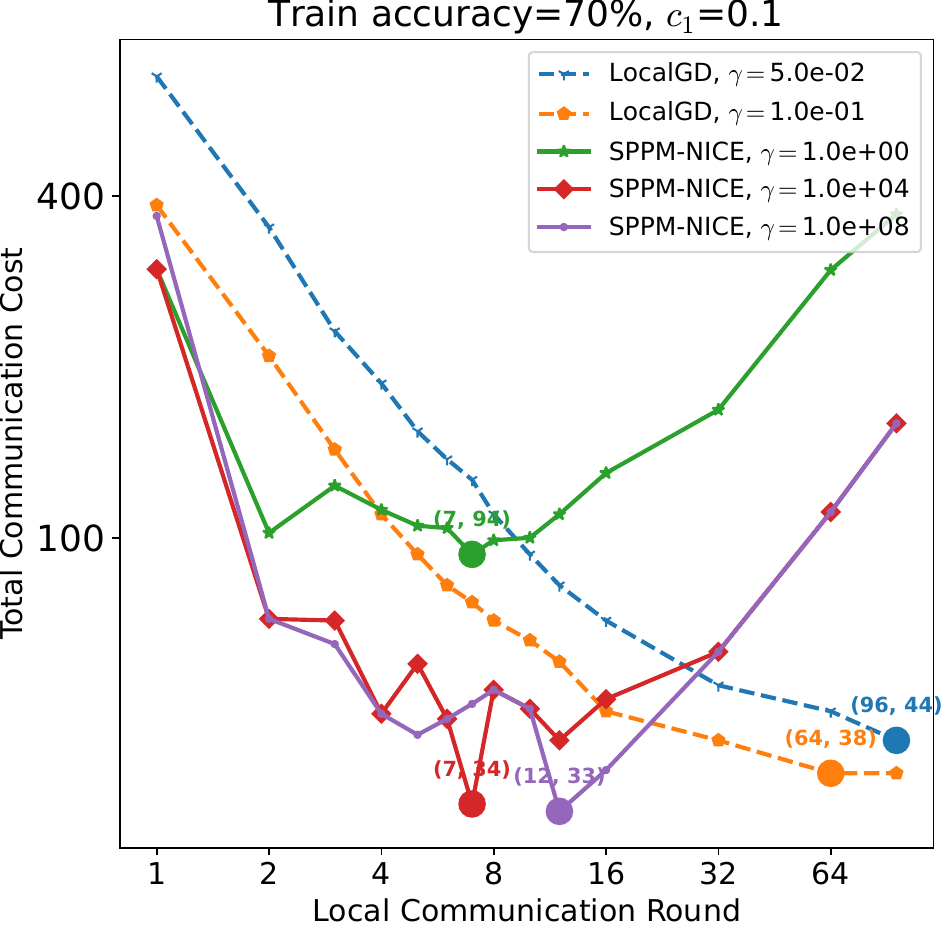}}
		
	\end{subfigure}
	\hfill
	\begin{subfigure}[b]{0.31\textwidth}
		\centering
		\includegraphics[width=1.0\textwidth, trim=0 0 0 0, clip]{{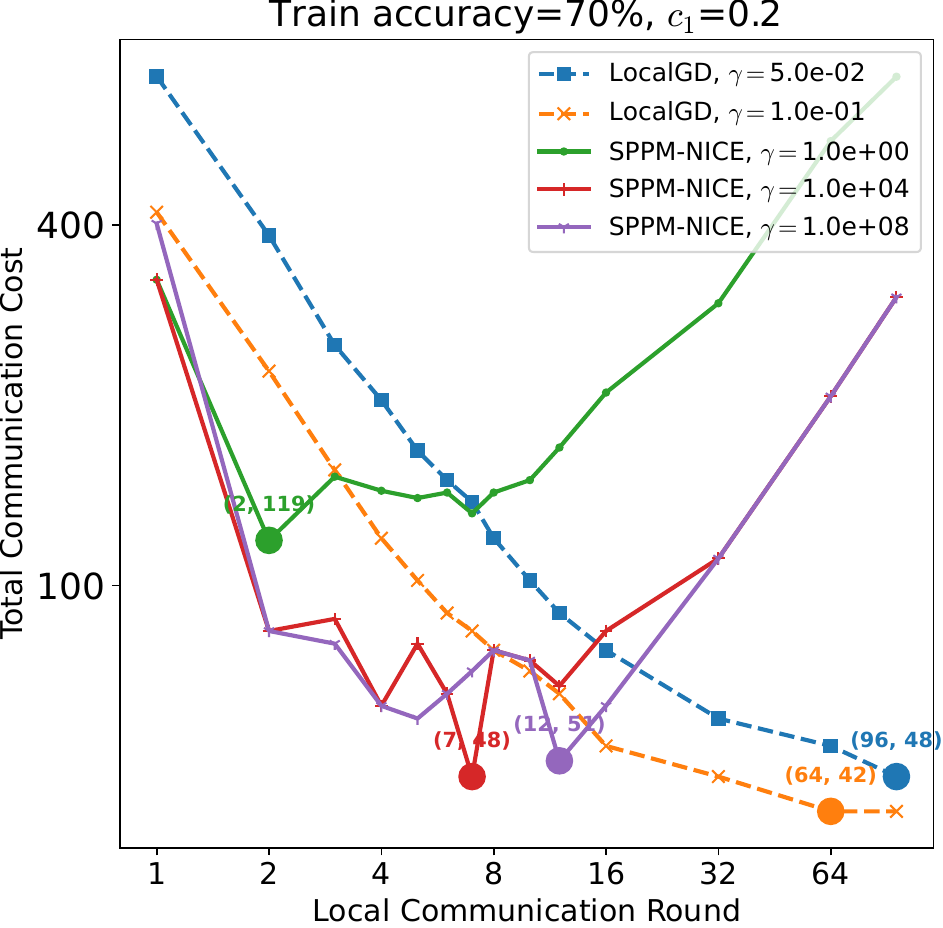}}
	\end{subfigure}
	\hfill
	\caption{Varying $c_1$ cost. }
	\label{fig:dl_c_1}
\end{figure}  
\subsection{Convergence Analysis Compared with Baselines}
Further, we compare \algname{SPPM-AS}, \algname{SPPM}, and \algname{LocalGD} in \cref{fig:dl2}, placing a particular emphasis on evaluating the total computational complexity. This measure gains importance in scenarios where communication rounds are of secondary concern, thereby shifting the focus to the assessment of computational resource expenditure.

\begin{figure}[!htbp]
	\centering
	\begin{subfigure}[b]{0.9\textwidth}
		\centering
		\includegraphics[width=1.0\textwidth, trim=0 0 0 0, clip]{{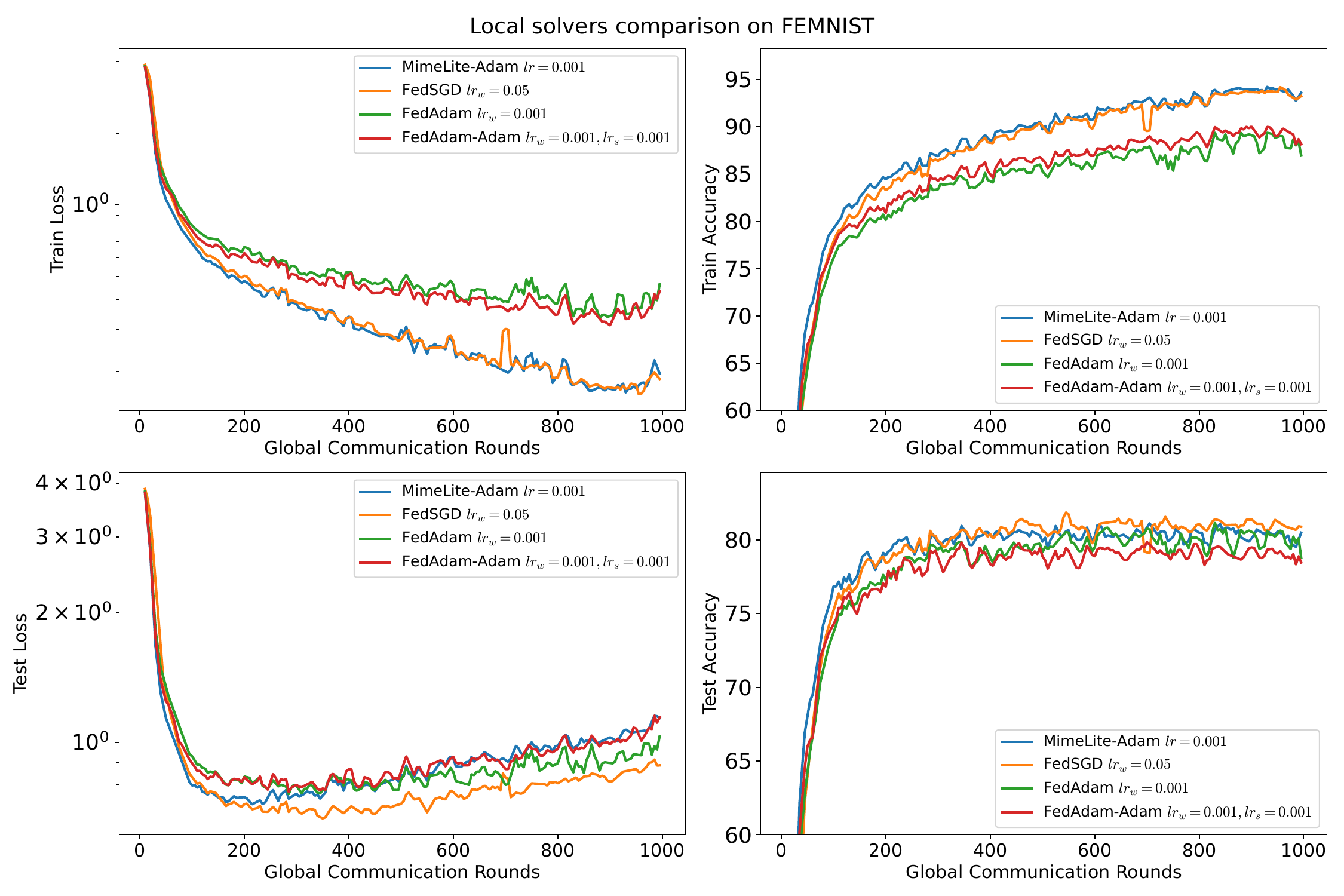}}
	\end{subfigure}
	\hfill
	\caption{Different local solvers for prox baselines for training a CNN model over 100 workers using data from the \texttt{FEMNIST} dataset. The number of local communication rounds is fixed at 3 and the number of worker optimizer steps is fixed at 3. Nice sampling with a minibatch size of 10 is used. $\gamma$ is fixed at 1.0.}
	\label{fig:dl_solvers_comparison}
\end{figure}
\subsection{Prox Solvers Baselines}
We compare baselines from \refeq{sec:local_solvers} for training a CNN model over 100 workers using data from the \texttt{FEMNIST} dataset, as shown in Figure \refeq{fig:dl_solvers_comparison}. The number of local communication rounds and worker optimizer steps is consistent among various solvers for the purpose of fair comparison. All local solvers optimize the local objective, which is prox on the selected cohort. The solvers compared are: \algname{LocalGD} referred as \algname{FedSGD}\citep{FedAvg} - the Federated Averaging algorithm with SGD as the worker optimizer, \algname{FedAdam} - the Federated Averaging algorithm with Adam as the worker optimizer, \algname{FedAdam-Adam} based on the FedOpt framework \citep{reddi2020adaptive}, and finally \algname{MimeLite-Adam}, which is based on the \algname{Mime}~\citep{MIME} framework and the Adam optimizer. The hyperparameter search included a double-level sweep of the optimizer learning rates: $[0.00001, 0.0001, 0.001, 0.01, 0.1]$, followed by $[0.25, 0.5, 1.0, 2.5, 5] * lr_{\text{best}}$. One can see that all methods perform similarly, with \algname{MimeLite-Adam} and \algname{FedSGD} converging better on the test data.

\begin{figure}[!tb]
	\centering
	\begin{subfigure}[b]{0.45\textwidth}
		\centering
		\includegraphics[width=1.0\textwidth, trim=0 350 0 0, clip]{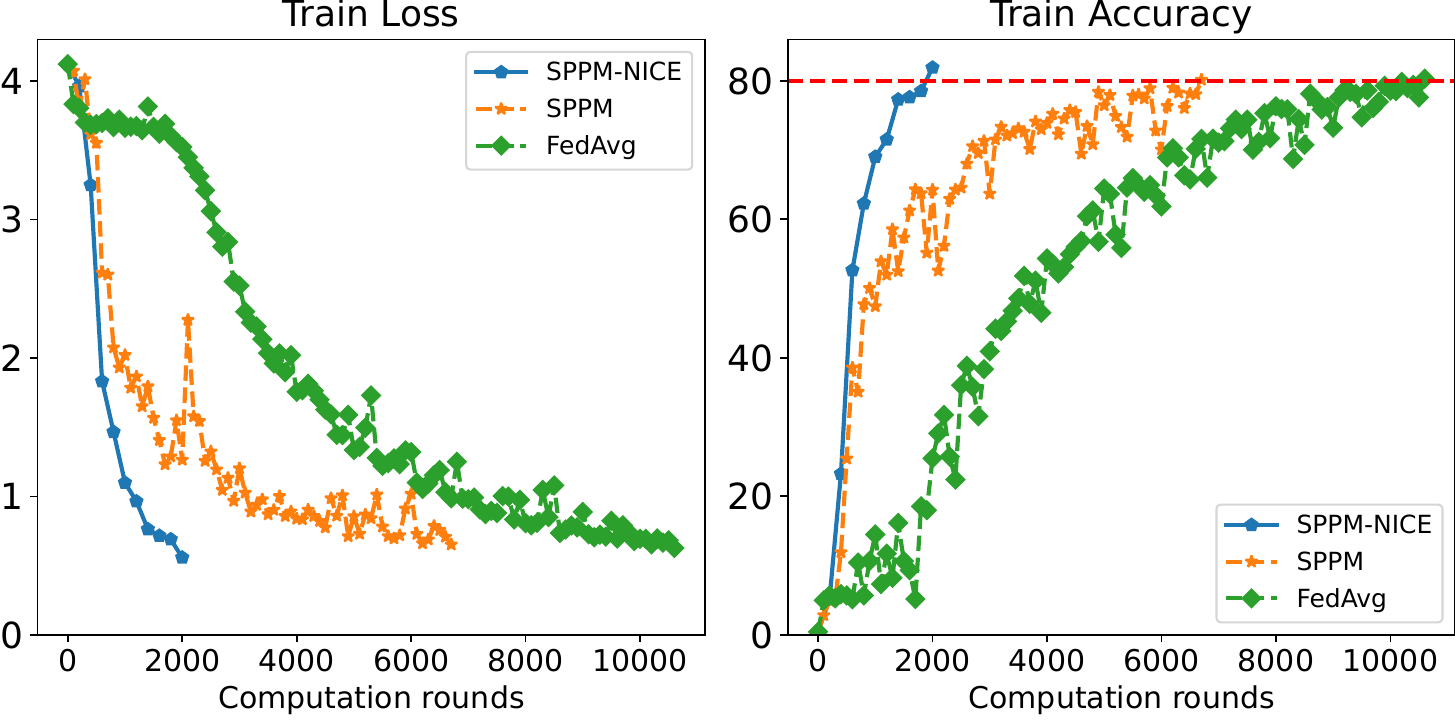}
	\end{subfigure}
	\begin{subfigure}[b]{0.45\textwidth}
		\centering
		\includegraphics[width=1.0\textwidth, trim=0 0 0 350, clip]{img/deep_learning/2-computation_cost.pdf}
	\end{subfigure}
	\caption{Accuracy compared with baselines.}
	\label{fig:dl2}
\end{figure}  

\newpage
\clearpage
\section{Missing Proof and Additional Theoretical Analysis}
\subsection{Facts Used in the Proof}
\begin{fact}[Differentiation of integral with a parameter (theorem 2.27 from \cite{Folland1984RealAM})]\label{fact:dif-parameter}
	Suppose that $f: X \times[a, b] \rightarrow \mathbb{C}(-\infty<a<b<\infty)$ and that $f(\cdot, t): X \rightarrow \mathbb{C}$ is integrable for each $t \in[a, b]$. Let $F(t)=\int_X f(x, t) d \mu(x)$.
	\begin{enumerate}[label=\alph*.]
		\item Suppose that there exists $g \in L^1(\mu)$ such that $|f(x, t)| \leq g(x)$ for all $x, t$. If $\lim _{t \rightarrow t_0} f(x, t)=f\left(x, t_0\right)$ for every $x$, then $\lim _{t \rightarrow t_0} F(t)=F\left(t_0\right)$; in particular, if $f(x, \cdot)$ is continuous for each $x$, then $F$ is continuous.
		\item Suppose that $\partial f / \partial t$ exists and there is a $g \in L^1(\mu)$ such that $|(\partial f / \partial t)(x, t)| \leq$ $g(x)$ for all $x, t$. Then $F$ is differentiable and $F^{\prime}(x)=\int(\partial f / \partial t)(x, t) d \mu(x)$.
	\end{enumerate}
\end{fact}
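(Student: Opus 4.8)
The plan is to obtain both parts from the Dominated Convergence Theorem (DCT), using the elementary reduction that a parameter limit $\lim_{t\to t_0}\phi(t)=L$ holds precisely when $\phi(t_n)\to L$ along every sequence $t_n\to t_0$; this lets me replace the continuous limit in $t$ (or in the increment $h$) by a sequential one, which is the form in which DCT directly applies.

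For part (a), fix $t_0$ and an arbitrary sequence $t_n\to t_0$, and put $\phi_n(x)\eqdef f(x,t_n)$. The hypothesis $\lim_{t\to t_0}f(x,t)=f(x,t_0)$ gives the pointwise convergence $\phi_n(x)\to f(x,t_0)$ for every $x$, while $|f(x,t)|\le g(x)$ with $g\in L^1(\mu)$ supplies a single integrable dominator valid for all $n$. DCT then yields
$$
F(t_n)=\int_X \phi_n\,d\mu \longrightarrow \int_X f(x,t_0)\,d\mu = F(t_0).
$$
As $(t_n)$ was arbitrary, $\lim_{t\to t_0}F(t)=F(t_0)$; the continuity statement is the special case in which this holds at every $t_0\in[a,b]$.

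For part (b), fix $t$ and an arbitrary null sequence $h_n\to0$ (with $t+h_n\in[a,b]$), and set
$$
Q_n(x)\eqdef \frac{f(x,t+h_n)-f(x,t)}{h_n}.
$$
By the very definition of $\partial f/\partial t$ we have $Q_n(x)\to(\partial f/\partial t)(x,t)$ pointwise. The crux is a uniform dominator. Since $f$ is $\mathbb{C}$-valued the mean value theorem cannot be applied to $f$ itself, so I would apply it to the real and imaginary parts of $s\mapsto f(x,s)$ separately: this produces points $s_1,s_2$ between $t$ and $t+h_n$ with the real and imaginary parts of $Q_n(x)$ equal to $\partial_t\mathrm{Re}\,f(x,s_1)$ and $\partial_t\mathrm{Im}\,f(x,s_2)$, each bounded in modulus by $|(\partial f/\partial t)(x,\cdot)|\le g(x)$. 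Hence $|Q_n(x)|\le\sqrt2\,g(x)$ uniformly in $n$, with $\sqrt2\,g\in L^1(\mu)$. DCT now gives $\int_X Q_n\,d\mu\to\int_X(\partial f/\partial t)(x,t)\,d\mu$; by linearity of the integral the left-hand side is exactly $(F(t+h_n)-F(t))/h_n$, so this difference quotient tends to the same limit along every null sequence. Therefore $F$ is differentiable at $t$ with $F'(t)=\int_X(\partial f/\partial t)(x,t)\,d\mu$.

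The main obstacle is precisely the domination in part (b): the naive single-variable mean value theorem fails for complex-valued $f$, so one must either split into real and imaginary parts as above (costing a harmless factor $\sqrt2$) or route through the fundamental theorem of calculus to write $f(x,t+h_n)-f(x,t)=\int_t^{t+h_n}(\partial f/\partial t)(x,s)\,ds$ and bound the modulus by $|h_n|\,g(x)$. Everything else --- measurability of each $Q_n$ and of the limit as a pointwise limit of measurable functions, and the sequential-to-continuous reduction --- is routine.
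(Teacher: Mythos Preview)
The paper does not prove this statement at all: it is recorded as a \emph{Fact} and attributed directly to Folland's textbook (Theorem~2.27), with no argument supplied. So there is no ``paper's own proof'' to compare against.

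That said, your argument is correct and is essentially the standard textbook proof (the one Folland gives): reduce to sequences, then invoke the Dominated Convergence Theorem, with part~(b) hinging on a mean-value bound for the difference quotients. Your handling of the complex-valued case by splitting into real and imaginary parts and picking up a harmless $\sqrt{2}$ factor is fine; the alternative you mention via the fundamental theorem of calculus (which avoids the $\sqrt{2}$ and gives $|Q_n(x)|\le g(x)$ directly) also works, since differentiability of $s\mapsto f(x,s)$ on $[a,b]$ with derivative bounded by the constant $g(x)$ makes that map Lipschitz and hence absolutely continuous.
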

\begin{fact}[Tower Property]\label{fact:tower}
	For any random variables $X$ and $Y$, we have
	\begin{align*}
		\ec{\ec{X|Y}} = \ec{X}.
	\end{align*}	
\end{fact}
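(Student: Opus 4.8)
The plan is to derive the identity directly from the defining (partial-averaging) property of the conditional expectation, which renders the statement almost immediate once the right object is invoked. Recall that for an integrable random variable $X$ on a probability space $(\Omega, \cF, \mbP)$, the conditional expectation $\ec{X|Y}$ is \emph{defined} as the (almost surely unique) $\sigma(Y)$-measurable random variable $Z$ satisfying the averaging identity
\begin{align*}
\int_A Z \, d\mbP = \int_A X \, d\mbP \qquad \text{for every } A \in \sigma(Y).
\end{align*}
First I would observe that integrability of $X$ (i.e. $X \in L^1$) is precisely the hypothesis under which $\ec{X|Y}$ exists and is itself integrable, so that every expectation appearing in the statement is well-defined and finite.

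The key step is then to specialize the averaging identity to the largest admissible event. Since $\Omega \in \sigma(Y)$ and $\mathbf{1}_\Omega \equiv 1$, taking $A = \Omega$ above yields
\begin{align*}
\ec{\ec{X|Y}} = \int_\Omega Z \, d\mbP = \int_\Omega X \, d\mbP = \ec{X},
\end{align*}
which is exactly the claimed equality. This is the entire argument in the general (measure-theoretic) setting.

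For a reader who prefers an elementary verification in the discrete setting, I would alternatively expand both the outer and inner expectations as sums: $\ec{\ec{X|Y}} = \sum_y \ec{X \mid Y=y}\,\mbP(Y=y)$, substitute $\ec{X \mid Y=y} = \sum_x x\,\mbP(X=x \mid Y=y)$, and use $\mbP(X=x \mid Y=y)\,\mbP(Y=y) = \mbP(X=x, Y=y)$ together with the marginalization $\sum_y \mbP(X=x,Y=y) = \mbP(X=x)$ to recover $\ec{X} = \sum_x x\,\mbP(X=x)$. The continuous case is identical with integrals replacing sums and an appeal to Tonelli/Fubini to justify interchanging the order of integration.

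There is no substantive obstacle here, since this is a standard fact; the only points requiring care are to avoid circularity, i.e. to invoke the \emph{defining} partial-averaging property of conditional expectation rather than any consequence of the tower rule itself, and to confirm at the outset that $X \in L^1$ so that all expectations in sight are finite and the interchange of integration order (in the elementary route) is licensed.
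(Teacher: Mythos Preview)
Your proof is correct and entirely standard: invoking the defining partial-averaging property of $\ec{X\mid Y}$ on the event $A=\Omega$ immediately yields the identity, and your elementary discrete/continuous expansions are fine alternative routes. The paper itself does not supply a proof of this fact---it is simply stated as a well-known property and used without argument---so there is nothing to compare against; your write-up would in fact be more complete than what appears in the paper.
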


\begin{fact}[Every point is a fixed point \citep{SPPM}]\label{fact:fact1}
	Let $\varphi: \Rd \to \mbR$  be a convex differentiable function. Then 
	\begin{align*}
		\prox_{\gamma\varphi}(x + \gamma \nabla \varphi(x)) = x, \qquad \forall \gamma >0, \quad \forall x\in \Rd.
	\end{align*}
	In particular, if $x_{\star}$ is a minimizer of $\varphi$, then $\prox_{\gamma \varphi}(x_\star) = x_\star$.
\end{fact}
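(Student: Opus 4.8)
The plan is to read the claim off from the first-order optimality condition characterizing the proximal point, i.e. to recognize the identity as the resolvent relation $\prox_{\gamma\varphi} = (I + \gamma\nabla\varphi)^{-1}$. Recall the definition $\prox_{\gamma\varphi}(z) = \argmin_{u\in\Rd}\left\{\varphi(u) + \frac{1}{2\gamma}\sqn{u-z}\right\}$. First I would note that the objective $h_z(u)\eqdef \varphi(u) + \frac{1}{2\gamma}\sqn{u-z}$ is strongly convex, since $\varphi$ is convex and $u\mapsto\frac{1}{2\gamma}\sqn{u-z}$ is $\tfrac{1}{\gamma}$-strongly convex; hence $h_z$ is coercive and strictly convex and admits a unique minimizer. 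Because $\varphi$, and therefore $h_z$, is differentiable, this minimizer is the unique solution of the stationarity equation $\nabla h_z(u) = \nabla\varphi(u) + \tfrac{1}{\gamma}(u-z) = 0$, equivalently $z = u + \gamma\nabla\varphi(u)$.

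The second step is a direct substitution: taking $z = x + \gamma\nabla\varphi(x)$, the point $u = x$ satisfies $u + \gamma\nabla\varphi(u) = x + \gamma\nabla\varphi(x) = z$, so $x$ solves the stationarity equation; by uniqueness it is the unique minimizer, i.e. $\prox_{\gamma\varphi}(x + \gamma\nabla\varphi(x)) = x$. Since $\gamma > 0$ and $x\in\Rd$ were arbitrary, this is the first claim.

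For the ``in particular'' part I would use the fact that a differentiable convex function attains its global minimum at $x_\star$ if and only if $\nabla\varphi(x_\star) = 0$. Substituting this into the identity already established (with $x = x_\star$) yields $\prox_{\gamma\varphi}(x_\star) = \prox_{\gamma\varphi}(x_\star + \gamma\nabla\varphi(x_\star)) = x_\star$.

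I do not expect a genuine obstacle here; the only points requiring care are (i) confirming existence and uniqueness of the prox minimizer, which follows from strong convexity of $h_z$, and (ii) justifying that stationarity is both necessary and sufficient for optimality, which follows from convexity and differentiability of $h_z$. As the statement is quoted from \citep{SPPM}, this sketch merely reproduces its short argument for completeness.
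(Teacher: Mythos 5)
Your proposal is correct and follows essentially the same route as the paper: both characterize $\prox_{\gamma\varphi}(z)$ via the first-order optimality condition $\nabla\varphi(u)+\tfrac{1}{\gamma}(u-z)=0$ of the strongly convex subproblem and observe that $z=x+\gamma\nabla\varphi(x)$ is solved by $u=x$. Your treatment of the ``in particular'' part via $\nabla\varphi(x_\star)=0$ is the same immediate consequence the paper intends.
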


\begin{proof}
	Evaluating the proximity operator is equivalent to 
	\begin{align*}
		\prox_{\gamma \varphi}(y) = \argmin_{x\in \Rd}\left(\varphi(x) + \frac{1}{2\gamma}\sqn{x -y} \right).
	\end{align*}
	This is a strongly convex minimization problem for any $\gamma>0$, hence the (necessarily unique) minimizer $x=\prox_{\gamma \varphi} (y)$ of this problem satisfies the first-order optimality condition 
	\begin{align*}
		\nabla \varphi(x) + \frac{1}{\gamma} (x - y) = 0.
	\end{align*}
	Solving for $y$, we observe that this holds for $y=x + \gamma \nabla \phi(x)$. Therefore, $x = \prox_{\gamma \varphi}(x + \gamma \nabla \varphi (x))$.
\end{proof}

\begin{fact}[Contractivity of the prox \citep{mishchenko2022proximal}]\label{fact:fact2}
	If $\varphi$ is differentiable and $\mu$-strongly convex, then for all $\gamma >0$ and for any $x, y\in \Rd$ we have 
	\begin{align*}
		\norm{\prox_{\gamma\varphi}(x) - \prox_{\gamma\varphi}(y)}^2 \leq \frac{1}{(1+\gamma \mu)^2}\norm{x - y}^2.
	\end{align*}
\end{fact}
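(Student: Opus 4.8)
The plan is to reduce the inequality to a short computation built on the first-order optimality conditions of the two proximal subproblems. Write $u \eqdef \prox_{\gamma\varphi}(x)$ and $v \eqdef \prox_{\gamma\varphi}(y)$. Since $\varphi$ is $\mu$-strongly convex with $\mu>0$, each subproblem $\min_{z}\,\varphi(z)+\frac{1}{2\gamma}\norm{z-\cdot}^2$ is strongly convex, hence has a unique minimizer, and by differentiability of $\varphi$ that minimizer is characterized by its stationarity condition — exactly the argument already carried out in the proof of \cref{fact:fact1}. This yields $x = u + \gamma\nabla\varphi(u)$ and $y = v + \gamma\nabla\varphi(v)$, and therefore
\[
x-y = (u-v) + \gamma\big(\nabla\varphi(u)-\nabla\varphi(v)\big).
\]

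Next I would expand the squared norm of this identity,
\[
\norm{x-y}^2 = \norm{u-v}^2 + 2\gamma\,\langle u-v,\ \nabla\varphi(u)-\nabla\varphi(v)\rangle + \gamma^2\norm{\nabla\varphi(u)-\nabla\varphi(v)}^2,
\]
and lower-bound the last two terms using two consequences of $\mu$-strong convexity of $\varphi$: (i) the monotonicity estimate $\langle u-v,\ \nabla\varphi(u)-\nabla\varphi(v)\rangle \geq \mu\norm{u-v}^2$, obtained by adding the strong-convexity inequality for the pair $(u,v)$ to the one with $u$ and $v$ swapped; and (ii) the implied bound $\norm{\nabla\varphi(u)-\nabla\varphi(v)} \geq \mu\norm{u-v}$, obtained from (i) via Cauchy--Schwarz. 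Substituting both gives $\norm{x-y}^2 \geq (1+2\gamma\mu+\gamma^2\mu^2)\norm{u-v}^2 = (1+\gamma\mu)^2\norm{u-v}^2$; dividing by $(1+\gamma\mu)^2>0$ and recalling $u-v = \prox_{\gamma\varphi}(x)-\prox_{\gamma\varphi}(y)$ yields the claim.

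The computation is short, and the only point that needs care is extracting the full quadratic factor $(1+\gamma\mu)^2$ rather than the weaker $(1+2\gamma\mu)$: the cross term alone delivers only the latter, so one must retain the $\gamma^2\norm{\nabla\varphi(u)-\nabla\varphi(v)}^2$ term and bound it below by $\gamma^2\mu^2\norm{u-v}^2$ using (ii). The remaining ingredients — well-posedness of the prox and its first-order characterization — are immediate from strong convexity and differentiability and are already recorded in the proof of \cref{fact:fact1}; notably, no smoothness of $\varphi$ is used, in line with the paper's standing assumptions.
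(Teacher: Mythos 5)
Your proof is correct. The paper itself offers no proof of this fact, simply citing \citet{mishchenko2022proximal}, and your argument is the standard resolvent one from that literature: writing $x=u+\gamma\nabla\varphi(u)$, $y=v+\gamma\nabla\varphi(v)$ via the same first-order characterization used in the proof of \cref{fact:fact1}, expanding $\norm{x-y}^2$, and lower-bounding the cross term by strong monotonicity and the gradient-difference term by $\gamma^2\mu^2\norm{u-v}^2$ to recover the full factor $(1+\gamma\mu)^2$. All steps check out, including your observation that dropping the quadratic gradient term would only give the weaker constant $1+2\gamma\mu$, and no smoothness of $\varphi$ is needed, consistent with the paper's assumptions.
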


\begin{fact}[Recurrence (\citealp{SPPM}, Lemma 1)]\label{fact:fact3}
	Assume that a sequence $\{s_t\}_{t\geq 0}$ of positive real numbers for all $t\geq 0$ satisfies 
	\begin{align*}
		s_{t+1} \leq as_t + b,
	\end{align*}
	where $0<a<1$ and $b\geq 0$. Then the sequence for all $t\geq 0$ satisfies 
	\begin{align*}
		s_t \leq a^t s_0 + b\min\left\{t, \frac{1}{1 -a} \right\}.
	\end{align*}
\end{fact}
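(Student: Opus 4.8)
The plan is to unroll the one-step recurrence into a closed form and then control the accumulated constant term by two complementary bounds, the tighter of which depends on the magnitude of $t$.

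First I would prove by induction on $t$ that
$$s_t \leq a^t s_0 + b\sum_{k=0}^{t-1} a^k.$$
The base case $t=0$ is immediate, since the empty sum is zero and the bound reduces to $s_0 \leq s_0$. For the inductive step, I would insert the inductive hypothesis into the assumed inequality $s_{t+1} \leq a s_t + b$: the factor $a$ shifts every power up by one, while the additive $b$ supplies the missing $k=0$ term, so the sum closes up to $\sum_{k=0}^{t} a^k$, which is exactly the claimed bound at index $t+1$.

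The remaining work is to bound the geometric partial sum $\Sigma_t := \sum_{k=0}^{t-1} a^k$ from above in two distinct ways. Because $0 < a < 1$ forces $a^k \leq 1$ for every $k \geq 0$, I obtain $\Sigma_t \leq t$. Summing the geometric series exactly gives $\Sigma_t = \frac{1 - a^t}{1-a} \leq \frac{1}{1-a}$, using $a^t > 0$. Taking the smaller of the two yields $\Sigma_t \leq \min\{t, \frac{1}{1-a}\}$, and since $b \geq 0$ multiplication preserves the direction of the inequality; combining this with the closed form from the induction delivers exactly $s_t \leq a^t s_0 + b\min\{t, \frac{1}{1-a}\}$.

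There is no genuine obstacle in this argument, as it is a routine linear-recurrence estimate. The only points meriting care are treating the empty sum correctly in the base case and observing that the two conditions $0<a<1$ and $a^t>0$ are precisely what validate the $t$-bound and the $\frac{1}{1-a}$-bound on $\Sigma_t$, respectively.
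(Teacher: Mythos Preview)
Your proof is correct and follows essentially the same approach as the paper: unroll the recurrence to obtain $s_t \leq a^t s_0 + b\sum_{k=0}^{t-1} a^k$, then bound the partial geometric sum by both $t$ and $\frac{1}{1-a}$ and take the minimum. The only cosmetic difference is that you phrase the unrolling as an explicit induction and compute the finite geometric sum exactly before bounding, whereas the paper unrolls informally and bounds the partial sum directly by the full infinite series.
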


\begin{proof}
	Unrolling the recurrence, we get 
	\begin{align*}
		s_t \leq as_{t - 1} + b \leq a(as_{t -2} + b) + b \leq \cdots \leq a^t s_0 + b\sum_{i=0}^{t-1} a^i.
	\end{align*}
	
	We can now bound the sum $\sum_{i=0}^{t-1} a^i$ in two different ways. First, since $a<1$, we get the estimate 
	\begin{align*}
		\sum_{i=0}^{t-1} a^i \leq \sum^{t-1}_{i=0} 1 = t.
	\end{align*}
	Second, we sum a geometic series
	\begin{align*}
		\sum_{i=0}^{t-1} a^i\leq \sum_{i=0}^{\inf} a^i = \frac{1}{1-a}.
	\end{align*}
	
	Note that either of these bounds can be better. So, we apply the best of these bounds. Substituing the above two bounds gived the target inequality. 
\end{proof}

\subsection{Simplified Proof of {SPPM}}\label{sec:proof_simple}
We provide a simplified proof of \algname{SPPM}~\citep{SPPM} in this section. 
Using the fact that $x_\star = \prox_{\gamma f_{\xi_t}}(x_\star + \gamma \nabla f_{\xi_t}(x_\star))$ (see \cref{fact:fact1}) and then applying contraction of the prox (\cref{fact:fact2}), we get 

\begin{align*}
	\sqn{x_{t+1} - x_\star} &= \sqn{\prox_{\gamma f_{\xi_t}} - x_\star}\\
	&\stackrel{(\cref{fact:fact1})}{=} \sqn{\prox_{\gamma f_{\xi_t}}(x_t) - \prox_{\gamma f_{\xi_t}}(x_\star + \gamma \nabla f_{\xi_t}(x_\star))}\\
	&\stackrel{(\cref{fact:fact2})}{\leq} \frac{1}{(1+\gamma \mu)^2} \sqn{x_t - (x_\star + \gamma \nabla f_{\xi_t}(x_\star))}\\
	&= \frac{1}{(1+\gamma \mu)^2}\left(\sqn{x_t - x_\star} - 2\gamma \ev{\nabla f_{\xi_t}(x_\star), x_t - x_\star} + \gamma^2 \sqn{\nabla f_{\xi_t}(x_\star)} \right).
\end{align*} 

Taking expectation on both sides, conditioned on $x_t$, we get 

\begin{align*}
	\ec{\sqn{x_{t+1} - x_\star}|x_t} &\leq \frac{1}{(1+\gamma \mu)^2} \left(\sqn{x_t - x_\star} - 2\gamma \ev{\ec{\nabla f_{\xi_t}(x_\star)}, x_t -x_\star} + \gamma^2 \ec{\sqn{\nabla f_{\xi_t}(x_\star)}} \right)\\
	&=\frac{1}{(1+\gamma \mu)^2} \left(\sqn{x_t - x_\star} + \gamma^2 \sigma^2_{\star} \right),
\end{align*}

where we used the fact that $\ec{\nabla f_{\xi_t} (x_\star)} = \nabla f(x_\star) = 0$ and $\sigma^2_{\star} \eqdef \ec{\sqn{\nabla f_{\xi_t}(x_\star)}}$. Taking expectation again and applying the tower property (\cref{fact:tower}), we get 

\begin{align*}
	\ec{\sqn{x_{t+1} - x_\star}} \leq \frac{1}{(1+\gamma\mu)^2}\left(\sqn{x_t - x_\star} + \gamma^2 \sigma^2_{\star} \right).
\end{align*}

It only remains to solve the above recursion. Luckily, that is exactly what \cref{fact:fact3} does. In particular, we use it with $s_t = \ec{\sqn{x_t - x_\star}}, a = \frac{1}{(1+\gamma\mu)^2}$ and $b = \frac{\gamma^2 \sigma_{\star}^2}{(1+\gamma\mu)^2}$ to get 

\begin{align*}
	\ec{\sqn{x_t - x_\star}} &\stackrel{(\cref{fact:fact3})}{\leq} \left(\frac{1}{1+\gamma\mu} \right)^{2t} \sqn{x_0 - x_\star} + \frac{\gamma^2 \sigma^2_{\star}}{(1+\gamma\mu)^2} \min\left\{t, \frac{(1+\gamma\mu)^2}{(1+\gamma\mu)^2 - 1} \right\}\\
	&\leq \left(\frac{1}{1+\gamma\mu} \right)^{2t} \sqn{x_0 - x_\star} + \frac{\gamma^2\sigma^2_{\star}}{(1+\gamma\mu)^2 - 1}\\
	&\leq \left(\frac{1}{1+\gamma\mu} \right)^{2t} \sqn{x_0 - x_\star} + \frac{\gamma\sigma^2_{\star}}{\gamma\mu^2 + 2\mu}.
\end{align*}

\subsection{Missing Proof of Theorem \ref{thm:sppm_as}}
We first prove the following useful lemma. 

\begin{lemma}\label{lem:0002}
	Let $\phi_\xi: \mathbb{R}^d \rightarrow \mathbb{R}$ be differentiable functions for almost all $\xi \sim\mathcal{D}$, with $\phi_\xi$ being $\mu_\xi$-strongly convex for almost all $\xi \sim\mathcal{D}$. Further, let $w_\xi$ be positive scalars. Then the function $\phi:=\ec[\xi\sim\mathcal{D}]{ w_\xi \phi_\xi}$ is $\mu$-strongly convex with $\mu=\ec[\xi\sim\mathcal{D}]{w_\xi \mu_\xi}$.
\end{lemma}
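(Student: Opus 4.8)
The plan is to obtain the strong convexity inequality for $\phi$ by taking a weighted expectation of the corresponding inequalities for the $\phi_\xi$. Recall that $\mu_\xi$-strong convexity of $\phi_\xi$ (in the first-order form used in \cref{asm:strongly_convex}) means that for all $x,y\in\Rd$,
\[
\phi_\xi(y) + \ev{\nabla\phi_\xi(y), x-y} + \frac{\mu_\xi}{2}\sqn{x-y} \le \phi_\xi(x).
\]
Since $w_\xi>0$, multiplying through by $w_\xi$ preserves the inequality, and applying $\ec[\xi\sim\mathcal{D}]{\cdot}$ to both sides together with linearity of expectation yields
\[
\ec[\xi\sim\mathcal{D}]{w_\xi\phi_\xi(y)} + \ev{\ec[\xi\sim\mathcal{D}]{w_\xi\nabla\phi_\xi(y)},\, x-y} + \frac{\ec[\xi\sim\mathcal{D}]{w_\xi\mu_\xi}}{2}\sqn{x-y} \le \ec[\xi\sim\mathcal{D}]{w_\xi\phi_\xi(x)}.
\]
The outer terms are $\phi(y)$ and $\phi(x)$ by definition of $\phi$, and the coefficient of $\tfrac12\sqn{x-y}$ is exactly the claimed constant $\mu=\ec[\xi\sim\mathcal{D}]{w_\xi\mu_\xi}$.

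The only nontrivial step is identifying $\ec[\xi\sim\mathcal{D}]{w_\xi\nabla\phi_\xi(y)}$ with $\nabla\phi(y)$, i.e.\ justifying that differentiation commutes with the expectation defining $\phi$. Here I would invoke \cref{fact:dif-parameter} (differentiation under the integral sign), applied coordinatewise to the partial derivatives $\partial\phi/\partial y_k$: it gives $\nabla\phi(y)=\ec[\xi\sim\mathcal{D}]{w_\xi\nabla\phi_\xi(y)}$ provided a local integrable-domination hypothesis holds on a neighbourhood of $y$. Under the standing integrability that makes $\phi=\ec[\xi\sim\mathcal{D}]{w_\xi\phi_\xi}$ finite-valued, together with differentiability of each $\phi_\xi$, this interchange is valid; I would either record this domination as an implicit assumption or state it explicitly before applying the fact.

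Once the displayed three-term inequality holds for all $x,y\in\Rd$, it is precisely the definition of $\mu$-strong convexity of the differentiable function $\phi$ with parameter $\mu=\ec[\xi\sim\mathcal{D}]{w_\xi\mu_\xi}$, finishing the proof. I expect the differentiation-expectation interchange to be the main (essentially the only) obstacle, the convexity manipulation itself being a one-line weighted average. As a consistency check, the finite-sum specialization $w_i=v_i(C)/n$, $\phi_i=f_i$ recovers the formula $\mu_{\mathrm{AS}}=\sum_{i\in C}\mu_i/(np_i)$ appearing in \cref{eqn:main_8003}, and it is this lemma that will be used to control the strong convexity of the random functions $f_{S_t}$ in the proof of \cref{thm:sppm_as}.
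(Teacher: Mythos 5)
Your proof is correct and follows essentially the same route as the paper's: multiply the first-order strong convexity inequality for each $\phi_\xi$ by $w_\xi>0$, take expectations, and read off the constant $\ec[\xi\sim\mathcal{D}]{w_\xi\mu_\xi}$. The only difference is that you explicitly flag and justify the interchange $\nabla\phi(y)=\ec[\xi\sim\mathcal{D}]{w_\xi\nabla\phi_\xi(y)}$ via \cref{fact:dif-parameter}, a step the paper performs implicitly (relying on the standing swap assumption stated in \cref{sec:exp-formulation}), so your version is if anything slightly more careful.
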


\begin{proof}
	By assumption,
	$$
	\phi_\xi(y)+\left\langle\nabla \phi_\xi(y), x-y\right\rangle+\frac{\mu_\xi}{2}\|x-y\|^2 \leq \phi_\xi(x), \quad \text{for almost all }\xi\in\mathcal{D},\forall x, y \in \mathbb{R}^d .
	$$
	
	This means that
	$$
	\ec[\xi\sim\mathcal{D}]{w_\xi\left(\phi_\xi(y)+\left\langle\nabla \phi_\xi(y), x-y\right\rangle+\frac{\mu_\xi}{2}\|x-y\|^2\right)} \leq \ec[\xi\sim\mathcal{D}]{w_\xi \phi_\xi(x)}, \quad \forall x, y \in \mathbb{R}^d,
	$$
	which is equivalent to
	$$
	\phi(y)+\langle\nabla \phi(y), x-y\rangle+\frac{\ec[\xi\sim\mathcal{D}]{w_\xi \mu_\xi}}{2}\|x-y\|^2 \leq \phi(x), \quad \forall x, y \in \mathbb{R}^d,
	$$
	
	So, $\phi$ is $\mu$-strongly convex.
\end{proof}

Now, we are ready to prove our main Theorem \ref{thm:sppm_as}.

\begin{proof}
	Let $C$ be any (necessarily nonempty) subset of $[n]$ such that $p_C>0$. Recall that in view of \cref{eqn:8004} we have
	$$
	f_C(x)=\ec[\xi\sim\mathcal{D}]{\frac{I\left(\xi\in C\right)}{p_\xi} f_\xi(x)}
	$$
	i.e., $f_C$ is a conic combination of the functions $\left\{f_\xi: \xi \in C\right\}$ with weights $w_\xi=\frac{I\left(\xi\in C\right)}{p_\xi}$. Since each $f_\xi$ is $\mu_\xi$-strongly convex, \cref{lem:0002} says that $f_C$ is  $\mu_C$-strongly convex with
	$$
	\mu_C:=\ec[\xi\sim\mathcal{D}]{\frac{I\left(\xi\in C\right)\mu_\xi}{p_\xi}} .
	$$
	
	So, every such $f_C$ is $\mu$-strongly convex with
	$$
	\mu=\mu_{\mathrm{AS}}:=\min _{C \subseteq[n], p_C>0}\ec[\xi\sim\mathcal{D}]{\frac{I\left(\xi\in C\right)\mu_\xi}{p_\xi}}.
	$$
	
	Further, the quantity $\sigma_{\star}^2$ from (2.3) is equal to
	$$
	\sigma_{\star}^2:=\mathrm{E}_{\xi \sim \mathcal{D}}\left[\left\|\nabla f_{\xi}\left(x_{\star}\right)\right\|^2\right] \stackrel{Eqn.~(\ref{eqn:8010})}{=} \sum_{C \subseteq[n], p_C>0} p_C\left\|\nabla f_C\left(x_{\star}\right)\right\|^2:=\sigma_{\star, \mathrm{AS}}^2 .
	$$
	
	Incorporating \cref{sec:proof_simple} into the above equation, we prove the theorem.
\end{proof}

\subsection{Theory for Expectation Formulation}\label{sec:exp-formulation}
We will formally define our optimization objective, focusing
on minimization in expectation form. We consider
\begin{align}\label{eqn:obj_1_exp}
	\min_{x\in \Rd} {f(x)\eqdef \ec[\xi\sim \mathcal{D}]{f_{\xi}(x)} },
\end{align}
where $f_{\xi}: \Rd \to \mbR$, $\xi\sim \mathcal{D}$ is a random variable following distribution $\mathcal{D}$. 
\begin{assumption}\label{asm:differential_exp}
	Function $f_{\xi}: \Rd \to \mbR$  is differentiable for almost all samples $\xi \sim \mathcal{D}$.
\end{assumption}

This implies that $f$ is differentiable. We will implicitly assume that the order of differentiation and expectation can be swapped \footnote{This assumption satisfies the conditions required for the theorem about differentiating an integral with a parameter (\cref{fact:dif-parameter}).}, which means that 
\begin{align*}
	\nabla f(x) \stackrel{Eqn.~(\ref{eqn:obj_1})}{=} \nabla \ec[\xi\sim \mathcal{D}]{f_{\xi}(x)} = \ec[\xi\sim \mathcal{D}]{\nabla f_{\xi}(x)}.
\end{align*}

\begin{assumption}\label{asm:strongly_convex_exp}
	Function $f_\xi: \Rd \to \mbR$  is $\mu$-strongly convex for almost all samples $\xi\sim \mathcal{D}$, where $\mu > 0$. That is 
	\begin{align*}
		f_{\xi}(y) + \ev{\nabla f_{\xi}, x - y} + \frac{\mu}{2}\sqn{x - y} \leq f_{\xi}(x),
	\end{align*} for all $x, y \in \Rd$.
\end{assumption}

This implies that $f$ is $\mu$-strongly convex, and hence $f$ has a unique minimizer, which we denote by $x_\star$. We know that $\nabla f(x_\star) = 0$. Notably, we do \emph{not} assume $f$ to be $L$-smooth.

Let $\mathcal{S}$ be a probability distribution over all \emph{finite} subsets of $\mathbb{N}$. Given a random set $S\sim \mathcal{S}$, we define 
\begin{align*}
	p_i \eqdef \operatorname{Prob}(i \in {S}), \quad i \in \mathbb{N}.
\end{align*}

We will restrict our attention to proper and nonvacuous random sets. 

\begin{assumption}\label{asm:valid_sampling_exp}
	$S$ is proper (i.e., $p_i > 0$ for all $i\in \mathbb{N}$) and nonvacuous (i.e., $\operatorname{Prob}({S} = \emptyset) = 0$).
\end{assumption}

Let $C$ be the selected cohort. Given $\emptyset \neq C \subset\mathbb{N}$ and $i \in\mathbb{N}$, we define
\begin{align}\label{eqn:8001}
	v_i(C):= \begin{cases}\frac{1}{p_i} & i \in C \\ 0 & i \notin C,\end{cases}
\end{align}
and
\begin{align}\label{eqn:8004}
	f_C(x):=\ec[\xi\sim\mathcal{D}]{v_{\xi}(C) f_{\xi}(x)}\stackrel{Eqn.~(\ref{eqn:8001})}{=}\ec[\xi\sim\mathcal{D}]{\frac{I\left(\xi\in C\right)}{p_{\xi}} f_{\xi}(x)} .
\end{align}

Note that $v_i(S)$ is a random variable and $f_S$ is a random function. By construction, $\mathrm{E}_{S \sim \mathcal{S}}\left[v_i(S)\right]=1$ for all $i \in\mathbb{N}$, and hence
{
	\begin{align*}
		&\ec[{S} \sim \mathcal{S}]{f_{{S}}(x)} =\ec[{S} \sim \mathcal{S}]{\ec[\xi\sim\mathcal{D}]{v_{\xi}(C) \nabla f_{\xi}(x)}}\\
		&\qquad =\ec[\xi\sim\mathcal{D}]{\ec[{S}\sim \mathcal{S}]{v_{\xi}(S)} \nabla f_{\xi}(x)}=\ec[\xi\sim\mathcal{D}]{f_\xi(x)}=f(x).
\end{align*}}

Therefore, the optimization problem in \cref{eqn:obj_1} is equivalent to the stochastic optimization problem
\begin{align}\label{eqn:obj_4}
	\min _{x \in \mathbb{R}^d}\left\{f(x):=\mathrm{E}_{S \sim \mathcal{S}}\left[f_S(x)\right]\right\} .
\end{align}

Further, if for each $C \subset\mathbb{N}$ we let $p_C:=\operatorname{Prob}(S=C), f$ can be written in the equivalent form
{\small
	\begin{align}\label{eqn:8010}
		f(x)=\ec[S \sim \mathcal{S}]{f_S(x)}=\sum_{C \subset\mathbb{N}} p_C f_C(x)=\sum_{C \subset\mathbb{N}, p_C>0} p_C f_C(x).
\end{align}}
\begin{theorem}[Main Theorem]\label{thm:sppm_as_exp}
	Let \cref{asm:differential} (diferentiability) and \cref{asm:strongly_convex} (strong convexity) hold. Let $S$ be a random set satisfying \cref{asm:valid_sampling}, and define
	{
		\begin{align}\label{eqn:8003}
			\mu_{\mathrm{AS}}&:=\min _{C \subset\mathbb{N}, p_C>0} \ec[\xi\sim\mathcal{D}]{\frac{I\left(\xi\in C\right)\mu_\xi}{p_\xi}}, \notag\\ 
			\sigma_{\star, \mathrm{AS}}^2 &:=\sum_{C \subset\mathbb{N}, p_C>0} p_C\left\|\nabla f_C\left(x_{\star}\right)\right\|^2 .
	\end{align}}
	
	Let $x_0 \in \mathbb{R}^d$ be an arbitrary starting point. Then for any $t \geq 0$ and any $\gamma>0$, the iterates of \algname{SPPM-AS} (\cref{alg:sppm_as}) satisfy
	{\small
		$$
		\mathrm{E}\left[\left\|x_t-x_{\star}\right\|^2\right] \leq\left(\frac{1}{1+\gamma \mu_{\mathrm{AS}}}\right)^{2t}\left\|x_0-x_{\star}\right\|^2+\frac{\gamma \sigma_{\star, \mathrm{AS}}^2}{\gamma \mu_{\mathrm{AS}}^2+2 \mu_{\mathrm{AS}}} .
		$$
	}
\end{theorem}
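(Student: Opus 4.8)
The plan is to reduce Theorem~\ref{thm:sppm_as_exp} to the single-function analysis of \algname{SPPM} already carried out in \cref{sec:proof_simple}, mirroring the proof of \cref{thm:sppm_as}. First I would observe that, by the reformulation \eqref{eqn:obj_4}, \algname{SPPM-AS} is exactly \algname{SPPM} applied to the stochastic problem $\min_x \ec[S\sim\mathcal{S}]{f_S(x)}$, whose ``sample'' functions are the random functions $f_S$. Hence it suffices to identify the two structural quantities that feed the \algname{SPPM} recursion — a strong convexity modulus valid for \emph{every} realization $f_S$, and the gradient variance at the optimum — with $\mu_{\mathrm{AS}}$ and $\sigma_{\star,\mathrm{AS}}^2$ from \eqref{eqn:8003}.

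For the strong convexity step, fix any $C\subset\mathbb{N}$ with $p_C>0$. By \eqref{eqn:8004}, $f_C=\ec[\xi\sim\mathcal{D}]{w_\xi f_\xi}$ with nonnegative weights $w_\xi=I(\xi\in C)/p_\xi$, and each $f_\xi$ is $\mu_\xi$-strongly convex (under \cref{asm:strongly_convex_exp} one has $\mu_\xi\equiv\mu$, but keeping $\mu_\xi$ general costs nothing). \cref{lem:0002} then gives that $f_C$ is $\mu_C$-strongly convex with $\mu_C=\ec[\xi\sim\mathcal{D}]{I(\xi\in C)\mu_\xi/p_\xi}$; taking the minimum over all $C$ with $p_C>0$ shows every realization $f_S$ is $\mu_{\mathrm{AS}}$-strongly convex. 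For the variance step I would use $\ec[S\sim\mathcal{S}]{\nabla f_S(x_\star)}=\nabla f(x_\star)=0$ (the interchange of differentiation and expectation being justified by \cref{fact:dif-parameter}), and then $\ec[S\sim\mathcal{S}]{\sqn{\nabla f_S(x_\star)}}=\sum_{C\subset\mathbb{N},\,p_C>0}p_C\sqn{\nabla f_C(x_\star)}=\sigma_{\star,\mathrm{AS}}^2$ via the discrete decomposition \eqref{eqn:8010}.

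With these two facts in hand I would run the \algname{SPPM} argument of \cref{sec:proof_simple} verbatim with $\xi_t$ replaced by $S_t$ and $\mu$ replaced by $\mu_{\mathrm{AS}}$: use \cref{fact:fact1} to write $x_\star=\prox_{\gamma f_{S_t}}(x_\star+\gamma\nabla f_{S_t}(x_\star))$, apply the prox contraction \cref{fact:fact2} (valid since $f_{S_t}$ is $\mu_{\mathrm{AS}}$-strongly convex) to bound $\sqn{x_{t+1}-x_\star}$, expand the square, take expectation conditioned on $x_t$ so the cross term vanishes by unbiasedness while $\ec{\sqn{\nabla f_{S_t}(x_\star)}}$ becomes $\sigma_{\star,\mathrm{AS}}^2$, then apply the tower property (\cref{fact:tower}) and solve the recursion $s_{t+1}\le a s_t+b$ with $a=(1+\gamma\mu_{\mathrm{AS}})^{-2}$ and $b=\gamma^2\sigma_{\star,\mathrm{AS}}^2(1+\gamma\mu_{\mathrm{AS}})^{-2}$ using \cref{fact:fact3}. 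The elementary simplification $\tfrac{\gamma^2\sigma_{\star,\mathrm{AS}}^2}{(1+\gamma\mu_{\mathrm{AS}})^2-1}\le\tfrac{\gamma\sigma_{\star,\mathrm{AS}}^2}{\gamma\mu_{\mathrm{AS}}^2+2\mu_{\mathrm{AS}}}$ then produces the claimed bound.

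The genuinely delicate points are measure-theoretic bookkeeping rather than inequalities: ensuring $f_C$ is a well-defined, finite-valued, differentiable function with $\nabla f_C=\ec[\xi\sim\mathcal{D}]{w_\xi\nabla f_\xi}$ (again via \cref{fact:dif-parameter}), and ensuring that the modulus fed into \cref{fact:fact2} is a \emph{single} constant valid for every sampled function — which is precisely why $\mu_{\mathrm{AS}}$ is defined as a minimum over $C$. Everything else is a direct transcription of the \algname{SPPM} proof.
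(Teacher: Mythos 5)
Your proposal is correct and follows essentially the same route as the paper: it identifies the strong convexity modulus of each realization $f_C$ via \cref{lem:0002} (yielding $\mu_{\mathrm{AS}}$ as the minimum over cohorts with $p_C>0$), identifies $\sigma_{\star,\mathrm{AS}}^2$ through the decomposition \eqref{eqn:8010} together with $\ec[S\sim\mathcal{S}]{\nabla f_S(x_\star)}=0$, and then transcribes the single-sample \algname{SPPM} argument of \cref{sec:proof_simple} with $\xi_t$ replaced by $S_t$ and $\mu$ by $\mu_{\mathrm{AS}}$, exactly as the paper does. Your explicit attention to the measure-theoretic interchange of differentiation and expectation (\cref{fact:dif-parameter}) is a point the paper only notes in passing, but it does not change the argument.
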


\subsection{Missing Proof of Iteration Complexity of {SPPM-AS}}\label{sec:proof_iteration_complexity}
We have seen above that accuracy arbitrarily close to (but not reaching) $\nicefrac{\sigma^2_{\star, \mathrm{AS}}}{\mu_{\mathrm{AS}}^2}$ can be achieved via a single step of the method, provided the stepsize $\gamma$ is large enough. Assume now that we aim for $\epsilon$ accuracy where $\epsilon \leq \nicefrac{\sigma^2_{\star, \mathrm{AS}}}{\mu_{\mathrm{AS}}^2}$. Using the inequality $1-k\leq \exp(-k)$ which holds for all $k>0$, we get 

\begin{align*}
	\left(\frac{1}{1+\gamma \mu_{\mathrm{AS}}}\right)^{2 t}=\left(1-\frac{\gamma \mu}{1+\gamma \mu_{\mathrm{AS}}}\right)^{2 t} \leq \exp \left(-\frac{2 \gamma \mu_{\mathrm{AS}}t}{1+\gamma \mu_{\mathrm{AS}}}\right)
\end{align*}

Therefore, provided that
$$
t \geq \frac{1+\gamma \mu_{\mathrm{AS}}}{2 \gamma \mu_{\mathrm{AS}}} \log \left(\frac{2\left\|x_0-x_{\star}\right\|^2}{\varepsilon}\right),
$$
we get $\left(\frac{1}{1+\gamma \mu_{\mathrm{AS}}}\right)^{2 t}\left\|x_0-x_{\star}\right\|^2 \leq \frac{\varepsilon}{2}$. Furthermore, as long as $\gamma \leq \frac{2 \varepsilon \mu_{\mathrm{AS}}}{2 \sigma_{\star, \mathrm{AS}}^2-\varepsilon \mu_{\mathrm{AS}}^2}$ (this is true provided that the more restrictive but also more elegant-looking condition $\gamma \leq \nicefrac{\varepsilon \mu_{\mathrm{AS}}}{\sigma_{\star, \mathrm{AS}}^2}$ holds), we get
$
\frac{\gamma \sigma_{\star, \mathrm{AS}}^2}{\gamma \mu_{\mathrm{AS}}^2+2 \mu_{\mathrm{AS}}} \leq \frac{\varepsilon}{2} .
$
Putting these observations together, we conclude that with the stepsize $\gamma= \nicefrac{\varepsilon\mu_{\mathrm{AS}}}{\sigma_{\star, \mathrm{AS}}^2}$, we get
$
\mathrm{E}\left[\left\|x_t-x_{\star}\right\|^2\right] \leq \varepsilon
$
provided that
\begin{align*}
	&t \geq \frac{1+\gamma \mu_{\mathrm{AS}}}{2 \gamma \mu_{\mathrm{AS}}} \log \frac{2\left\|x_0-x_{\star}\right\|^2}{\varepsilon} =\left(\frac{\sigma_{\star, \mathrm{AS}}^2}{2 \varepsilon \mu_{\mathrm{AS}}^2}+\frac{1}{2}\right) \log \left(\frac{2\left\|x_0-x_{\star}\right\|^2}{\varepsilon}\right) .
\end{align*}

\subsection{$\sigma_{\star, \mathrm{NICE}}^2(\tau)$ and $\mu_{\mathrm{NICE}}(\tau)$ are Monotonous Functions of $\tau$}\label{sec:proof_nice}
\begin{lemma}
	For all $0\leq\tau\leq n-1$: 
	\begin{enumerate}
		\item $\mu_{\mathrm{NICE}}(\tau + 1)\geq\mu_{\mathrm{NICE}}(\tau)$, 
		\item $\sigma_{\star, \mathrm{NICE}}^2(\tau)=\frac{\frac{n}{\tau}-1}{n-1}\sigma_{\star, \mathrm{NICE}}^2(1)\leq\frac{1}{\tau}\sigma_{\star, \mathrm{NICE}}^2(1)$. 
	\end{enumerate}
	\begin{proof}
		\begin{enumerate}
			\item Pick any $1 \leq \tau<n$, and consider a set $C$ for which the minimum is attained in
			$$
			\mu_{\mathrm{NICE}}(\tau+1)=\min _{C \subseteq[n],|C|=\tau+1} \frac{1}{\tau+1} \sum_{i \in C} \mu_i .
			$$
			
			Let $j=\arg \max _{i \in C} \mu_i$. That is, $\mu_j \geq \mu_i$ for all $i \in C$. Let $C_j$ be the set obtained from $C$ by removing the element $j$. Then $\left|C_j\right|=\tau$ and
			$$
			\mu_j=\max _{i \in C} \mu_i \geq \max _{i \in C_j} \mu_i \geq \frac{1}{\tau} \sum_{i \in C_j} \mu_i.
			$$
			
			By adding $\sum_{i \in C_j} \mu_i$ to the above inequality, we obtain
			$$
			\mu_j+\sum_{i \in C_j} \mu_i \geq \frac{1}{\tau} \sum_{i \in C_j} \mu_i+\sum_{i \in C_j} \mu_i .
			$$
			
			Observe that the left-hand side is equal to $\sum_{i \in C} \mu_i$, and the right-hand side is equal to $\frac{\tau+1}{\tau} \sum_{i \in C_j} \mu_i$. If we divide both sides by $\tau+1$, we obtain
			$$
			\frac{1}{\tau+1} \sum_{i \in C} \mu_i \geq \frac{1}{\tau} \sum_{i \in C_j} \mu_i.
			$$
			
			Since the left-hand side is equal to $\mu_{\mathrm{NICE}}(\tau+1)$, and the right hand side is an upper bound on $\mu_{\mathrm{NICE}}(\tau)$, we conclude that $\mu_{\mathrm{NICE}}(\tau+1) \geq \mu_{\mathrm{NICE}}(\tau)$.
			\item
			In view of \eqref{eqn:8004} we have
			\begin{eqnarray}
				f_C(x) = \sum_{i \in C} \frac{1}{n p_i} f_i(x) .
			\end{eqnarray}
			
			\begin{eqnarray}
				\sigma_{\star, \mathrm{AS}}^2 
				&=& \mathrm{E}_{S \sim \mathcal{S}} \sb{\sqnorm{\sum_{i \in S} \frac{1}{n p_i} \nabla f_i(x_{\star})}}
				= \mathrm{E}_{S \sim \mathcal{S}} \sb{\sqnorm{\sum_{i \in S} \frac{1}{\tau} \nabla f_i(x_{\star})}}\notag\\
			\end{eqnarray}
			
			Let $\chi_{i}$ be the random variable defined by
			\begin{eqnarray}
				\chi_{j}=\left\{\begin{array}{ll}
					1 & j \in S \\
					0 & j \notin S.
				\end{array}\right.
			\end{eqnarray}
			It is easy to show that
			\begin{eqnarray}
				\Exp{\chi_{j}} = \operatorname{Prob}(j \in S)=\fr{\tau}{n}.
			\end{eqnarray}
			Let fix the cohort S. Let $\chi_{i j}$ be the random variable defined by
			\begin{eqnarray}
				\chi_{ij}=\left\{\begin{array}{ll}
					1 & i \in S \text { and } j \in S \\
					0 & \text { otherwise}.
				\end{array}\right.
			\end{eqnarray}
			Note that
			\begin{eqnarray}
				\chi_{ij}=\chi_{i} \chi_{j}.
			\end{eqnarray}
			Further, it is easy to show that
			\begin{eqnarray}
				\Exp{\chi_{ij}}=\operatorname{Prob}(i \in S, j \in S)=\fr{\tau(\tau-1)}{n(n-1)}.
			\end{eqnarray}
			Denote $a_i := \nabla f_i(x_{\star}).$
			
			\begin{eqnarray*}
				\Exp{\sqnorm{\fr{1}{\tau} \sum_{i \in S} a_i}} 
				&=&\fr{1}{\tau^{2}} \Exp{\sqnorm{\sum_{i \in S} a_{i}}}\\
				&=&\fr{1}{\tau^{2}} \Exp{\sqnorm{\sumin \chi_{i} a_{i}}} \\
				&=& \fr{1}{\tau^{2}} \Exp{\sumin \sqnorm{\chi_{i} a_{i}}+\sum_{i \neq j}\left\langle\chi_{i} a_{i}, \chi_{j} a_{j}\right\rangle} \\
				&=& \fr{1}{\tau^{2}} \Exp{\sumin\sqnorm{\chi_{i} a_{i}}+\sum_{i \neq j} \chi_{ij}\left\langle a_{i}, a_{j}\right\rangle} \\
				&=& \fr{1}{\tau^{2}} \sumin \Exp{\chi_{i}}\sqnorm{a_{i}}+\sum_{i \neq j} \Exp{\chi_{ij}}\left\langle a_{i}, a_{j}\right\rangle\\
				&=& \fr{1}{\tau^{2}}\left(\fr{\tau}{n} \sumin\sqnorm{a_{i}}+\fr{\tau(\tau-1)}{n(n-1)} \sum_{i \neq j}\left\langle a_{i}, a_{j}\right\rangle\right) \\
				&=&\fr{1}{\tau n} \sumin\sqnorm{a_{i}}+\fr{\tau-1}{\tau n(n-1)} \sum_{i \neq j}\left\langle a_{i}, a_{j}\right\rangle \\
				&=& \fr{1}{\tau n} \sumin\sqnorm{a_{i}}+\fr{\tau-1}{\tau n(n-1)}\left(\sqnorm{\sumin a_{j}}-\sumin \sqnorm{a_{i}}\right) \\
				&=&\fr{n-\tau}{\tau(n-1)} \fr{1}{n} \sumin\sqnorm{a_{i}}+\fr{n(\tau-1)}{\tau(n-1)}\sqnorm{\fr{1}{n} \sumin a_{i}}\\
				&=&\fr{n-\tau}{\tau(n-1)} \fr{1}{n} \sumin\sqnorm{\nabla f_i(x_{\star})}+\fr{n(\tau-1)}{\tau(n-1)}\sqnorm{\fr{1}{n} \sumin \nabla f_i(x_{\star})}\\
				&=&\fr{n-\tau}{\tau(n-1)} \fr{1}{n} \sumin\sqnorm{\nabla f_i(x_{\star})}\\
				&\le&\fr{1}{\tau} \fr{1}{n} \sumin\sqnorm{\nabla f_i(x_{\star})}
			\end{eqnarray*}
			
		\end{enumerate}
	\end{proof}
\end{lemma}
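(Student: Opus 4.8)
The statement bundles two independent claims, which I would handle by different techniques: the first, $\mu_{\mathrm{NICE}}(\tau+1)\geq\mu_{\mathrm{NICE}}(\tau)$, is a combinatorial monotonicity fact, while the second is an exact second-moment computation. For the monotonicity, I would exploit the variational characterization $\mu_{\mathrm{NICE}}(\tau)=\min_{|C|=\tau}\frac{1}{\tau}\sum_{i\in C}\mu_i$, i.e.\ $\mu_{\mathrm{NICE}}(\tau)$ is the smallest attainable average of $\tau$ of the constants $\mu_i$. The plan is to take a set $C$ with $|C|=\tau+1$ attaining the minimum that defines $\mu_{\mathrm{NICE}}(\tau+1)$, and delete from it the index $j$ maximizing $\mu_j$. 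The residual set $C_j:=C\setminus\{j\}$ has size $\tau$, and since $\mu_j$ is the maximum over $C$ it dominates the average over $C_j$, giving $\mu_j\geq\frac{1}{\tau}\sum_{i\in C_j}\mu_i$. Adding $\sum_{i\in C_j}\mu_i$ to both sides and dividing by $\tau+1$ rewrites the left-hand side as $\frac{1}{\tau+1}\sum_{i\in C}\mu_i=\mu_{\mathrm{NICE}}(\tau+1)$ and the right-hand side as $\frac{1}{\tau}\sum_{i\in C_j}\mu_i\geq\mu_{\mathrm{NICE}}(\tau)$, the last inequality holding because $C_j$ is an admissible set of size $\tau$. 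Chaining the two yields the claim.

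For the second claim, I would compute $\sigma_{\star,\mathrm{NICE}}^2(\tau)=\mathrm{E}_S\left[\left\|\frac{1}{\tau}\sum_{i\in S}a_i\right\|^2\right]$ directly, writing $a_i:=\nabla f_i(x_\star)$ and introducing the inclusion indicators $\chi_i=\mathbf{1}[i\in S]$ together with $\chi_{ij}=\chi_i\chi_j$. Expanding the square yields diagonal terms $\sum_i\chi_i\|a_i\|^2$ and off-diagonal terms $\sum_{i\neq j}\chi_{ij}\langle a_i,a_j\rangle$; taking expectations and substituting the uniform-$\tau$-subset probabilities $\mathrm{E}[\chi_i]=\tau/n$ and $\mathrm{E}[\chi_{ij}]=\frac{\tau(\tau-1)}{n(n-1)}$ reduces everything to two sums. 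The crucial simplification is the optimality identity $\sum_{i=1}^n a_i=n\nabla f(x_\star)=0$, which lets me replace $\sum_{i\neq j}\langle a_i,a_j\rangle=\left\|\sum_i a_i\right\|^2-\sum_i\|a_i\|^2$ by $-\sum_i\|a_i\|^2$, collapsing the cross terms onto the diagonal. Collecting the coefficient of $\frac{1}{n}\sum_i\|a_i\|^2=\sigma_{\star,\mathrm{NICE}}^2(1)$ then simplifies to $\frac{n-\tau}{\tau(n-1)}=\frac{n/\tau-1}{n-1}$, which is the stated equality; the concluding inequality $\frac{n/\tau-1}{n-1}\leq\frac{1}{\tau}$ is equivalent to $n-\tau\leq n-1$, i.e.\ to $\tau\geq 1$.

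I expect the only genuinely delicate step to be the max-dominates-average trick in the first claim: one must delete the index with the \emph{largest} $\mu_j$, since this is precisely what makes $\mu_j$ exceed the average over the remaining $\tau$ indices and hence preserves the inequality in the correct direction. The second claim is essentially a bookkeeping computation; the one place to be careful is that $\chi_i^2=\chi_i$, so the diagonal contributes the first moment $\mathrm{E}[\chi_i]=\tau/n$, and that the pairwise inclusion probability reflects sampling without replacement, which is exactly what produces the $(\tau-1)/(n-1)$ factor responsible for the variance reduction as $\tau$ grows.
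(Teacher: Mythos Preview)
Your proposal is correct and follows essentially the same route as the paper's proof: for part 1 you delete the maximal $\mu_j$ from the minimizing $(\tau+1)$-set and use max-dominates-average exactly as the paper does, and for part 2 you expand via the indicators $\chi_i,\chi_{ij}$ with the same inclusion probabilities and collapse the cross terms using $\sum_i\nabla f_i(x_\star)=0$, which is precisely the paper's computation.
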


\subsection{Missing Proof of Lemma \ref{lem:vr_ss}} 
For ease of notation, let $a_i=\nabla f_i\left(x_{\star}\right)$ and $\hat{z}_j=\left|C_j\right| a_{\xi_j}$, and recall that
\begin{align}\label{eqn:412}
	\sigma_{\star, \mathrm{SS}}^2=\mathrm{E}_{\xi_1, \ldots, \xi_b}\left[\left\|\frac{1}{n} \sum_{j=1}^b \hat{z}_j\right\|^2\right].
\end{align}
where $\xi_j \in C_j$ is chosen uniformly at random. Further, for each $j \in[b]$, let $z_j=\sum_{i \in C_j} a_i$. Observe that $\sum_{j=1}^b z_j=\sum_{j=1}^b \sum_{i \in C_j} a_i=\sum_{i=1}^n a_i=\nabla f\left(x_{\star}\right)=0$. Therefore,
\begin{align}\label{eqn:413}
	\left\|\frac{1}{n} \sum_{j=1}^b \hat{z}_j\right\|^2 & =\frac{1}{n^2}\left\|\sum_{j=1}^b \hat{z}_j-\sum_{j=1}^b z_j\right\|^2\notag \\
	& =\frac{b^2}{n^2}\left\|\frac{1}{b} \sum_{j=1}^b\left(\hat{z}_j-z_j\right)\right\|^2\notag \\
	& \leq \frac{b^2}{n^2} \frac{1}{b} \sum_{j=1}^b\left\|\hat{z}_j-z_j\right\|^2\notag \\
	& =\frac{b}{n^2} \sum_{j=1}^b\left\|\hat{z}_j-z_j\right\|^2,
\end{align}

where the inequality follows from convexity of the function $u \mapsto\|u\|^2$. Next,
\begin{align}\label{eqn:414}
	\left\|\hat{z}_j-z_j\right\|^2=\left\|\left|C_j\right| a_{\xi_j}-\sum_{i \in C_j} a_i\right\|^2=\left|C_j\right|^2\left\|a_{\xi_j}-\frac{1}{\left|C_j\right|} \sum_{i \in C_j} a_i\right\|^2 \leq\left|C_j\right|^2 \sigma_j^2 .
\end{align}

By combining \cref{eqn:412}, \cref{eqn:413} and \cref{eqn:414}, we get
$$
\begin{aligned}
	& \sigma_{\star, \mathrm{SS}}^2 \stackrel{Eqn.~(\ref{eqn:412})}{=} \mathrm{E}_{\xi_1, \ldots, \xi_b}\left[\left\|\frac{1}{n} \sum_{j=1}^b \hat{z}_j\right\|^2\right] \\
	& \stackrel{Eqn.~(\ref{eqn:413})}{\leq} \quad \mathrm{E}_{\xi_1, \ldots, \xi_b}\left[\frac{b}{n^2} \sum_{j=1}^b\left\|\hat{z}_j-z_j\right\|^2\right] \\
	& \stackrel{Eqn.~(\ref{eqn:414})}{\leq} \quad \mathrm{E}_{\xi_1, \ldots, \xi_b}\left[\frac{b}{n^2} \sum_{j=1}^b\left|C_j\right|^2 \sigma_j^2\right] \\
	& =\frac{b}{n^2} \sum_{j=1}^b\left|C_j\right|^2 \sigma_j^2 .
\end{aligned}
$$

The last expression can be further bounded as follows:
$$
\frac{b}{n^2} \sum_{j=1}^b\left|C_j\right|^2 \sigma_j^2 \leq \frac{b}{n^2}\left(\sum_{j=1}^b\left|C_j\right|^2\right) \max _j \sigma_j^2 \leq \frac{b}{n^2}\left(\sum_{j=1}^b\left|C_j\right|\right)^2 \max _j \sigma_j^2=b \max _j \sigma_j^2,
$$
where the second inequality follows from the relation $\|u\|_2 \leq\|u\|_1$ between the $L_2$ and $L_1$ norms, and the last identity follows from the fact that $\sum_{j=1}^b\left|C_j\right|=n$.

\subsection{Stratified Sampling Against Block Sampling and Nice Sampling}\label{sec:ss_vs_bs_nice}
In this section, we present a theoretical comparison of block sampling and its counterparts, providing a theoretical justification for selecting block sampling as the default clustering method in future experiments. Additionally, we compare various sampling methods, all with the same sampling size, $b$: $b$-nice sampling, block sampling with $b$ clusters, and block sampling, where all clusters are of uniform size $b$.

\begin{assumption}\label{asm:uniform-clustering}
	For simplicity of comparison, we assume $b$ clusters, each of the same size, $b$:
	\[
	\left|C_1\right|=\left|C_2\right|=\ldots=\left|C_b\right|=b.
	\]
\end{assumption}
It is crucial to acknowledge that, without specific assumptions, the comparison of different sampling methods may not provide meaningful insights. For instance, the scenario described in Lemma \ref{lem:vr_ss}, characterized by complete inter-cluster homogeneity, demonstrates that block sampling achieves a variance term, denoted as \(\sigma_{\star, \mathrm{SS}}^2\), which is lower than the variance terms associated with both block sampling and nice sampling. However, a subsequent example illustrates examples in which the variance term for block sampling surpasses those of block sampling and nice sampling.
\begin{example}\label{example:SS_worse_than_BS_NICE}
	Without imposing any additional clustering assumptions, there exist examples for any arbitrary $n$, such that $\sigma^2_{\star, \mathrm{SS}} \geq \sigma^2_{\star, \mathrm{BS}}$ and $\sigma^2_{\star, \mathrm{SS}} \geq \sigma^2_{\star, \mathrm{NICE}}$.
	\begin{proof}
		\textbf{Counterexample when SS is worse in neighborhood than BS}\\
		Assume we have such clustering and $\nabla f_i(x_\star)$ such that the centroids of each cluster are equal to zero: $\forall i \in [b]$, $\frac{1}{|C_i|}\sum_{j \in C_i}\nabla f_j(x_\star) = 0$. For instance, this can be achieved in the following case: The dimension is $d=2$, all clusters are of equal size $m$, then assign $\forall i \in [b]$, $\forall j \in C_i$, $\nabla f_j(x_\star) = \left(Re\left(\omega^{mj + i}\right), Im\left(\omega^{mj + i}\right)\right)$ where $\omega = \sqrt[n]{1} \in \mathbb{C}$. Let us calculate $\sigma_{\star, \mathrm{BS}}^2$:
		\begin{align*}
			&\sigma_{\star, \mathrm{BS}}^2 := \sum_{j=1}^b q_j \left\| \sum_{i \in C_j} \frac{1}{np_i} \nabla f_i(x_\star) \right\|^2 = \\
			&= \frac{1}{n^2} \sum_{j=1}^b \frac{|C_j|^2}{q_j} \left\| \frac{1}{|C_j|} \sum_{i \in C_j} \nabla f_i(x_\star) \right\|^2 = 0.
		\end{align*}
		As a result:
		\[\sigma_{\star, \mathrm{BS}}^2 = 0 \leq \sigma_{\star, \mathrm{SS}}^2.\]
		\textbf{Counterexample when SS is worse in neighborhood than NICE}\\
		Here, we employ a similar proof technique as in the proof of Lemma \ref{lem:SS_vs_NICE}.
		Let us choose such clustering $\mathcal{C}_{b, \mathrm{SS}, \max} = \argmax_{\mathcal{C}_b} \sigma^2_{\star, \mathrm{SS}}(\mathcal{C}_b)$. Denote $\mathbf{i}_b \eqdef (i_1, \cdots, i_b)$, $\mathbf{C}_b \eqdef C_1 \times \cdots \times C_b$, and $S_{\mathbf{i}_b} \eqdef \left\| \frac{1}{\tau} \sum_{i \in \mathbf{i}_b} \nabla f_i(x_\star) \right\|$. 
		\begin{align*}
			&\sigma_{\star, \mathrm{NICE}}^2 = \frac{1}{C(n, \tau)} \sum_{C \subseteq [n], |C| = \tau} \left\| \frac{1}{\tau} \sum_{i \in C} \nabla f_i(x_\star) \right\|^2 \\
			&= \frac{1}{C(n, b)} \sum_{\mathbf{i}_b \subseteq [n]} S_{\mathbf{i}_b} \\
			&\stackrel{1}{=} \frac{1}{\#_{\text{clusterizations}}} \sum_{\mathcal{C}_b} \frac{1}{b^b} \sum_{\mathbf{i}_b \in \mathbf{C}_b} S_{\mathbf{i}_b} \\
			&= \frac{1}{\#_{\text{clusterizations}}} \sum_{\mathcal{C}_b} \sigma^2_{\star, \mathrm{SS}}(\mathcal{C}_b) \\
			&\stackrel{2}{\leq} \sigma^2_{\star, \mathrm{SS}}(\mathcal{C}_{b, \mathrm{SS}, \max}).
		\end{align*}
		Equation 1 holds because, in every clusterization $\mathcal{C}_b$, there are $\frac{1}{b^b}$ possible sample combinations $\mathbf{i}_b$. Due to symmetry, one can conclude that each combination $S_{\mathbf{i}_b}$ is counted the same number of times. Equation 2 follows from the definition of $\mathcal{C}_{b, \mathrm{SS}, \max}$. \\
		For illustrative purposes, we can demonstrate this effect with a specific example. Let $n=4$ and define $\forall i~ a_i = \nabla f_i(x^*) \in \mathbb{R}^2$. Let $a_1 = (0, 1)^T$, $a_2 = (1, 0)^T$, $a_3 = (0, -1)^T$, and $a_4 = (-1, 0)^T$. Then fix clustering $\mathcal{C}_b = \left\{C_1 = \{a_1, a_3\}, C_2 = \{a_2, a_4\}\right\}$. Then:
		\begin{align*}
			&\sigma_{\star, \mathrm{SS}}^2 = \frac{1}{4} \sum_{\mathbf{i}_b \in \mathcal{C}_b} \left\| \frac{a_{i_1} + a_{i_2}}{2} \right\|^2 \\
			&= \frac{1}{4} \sum_{\mathbf{i}_b \in \mathcal{C}_b} \left\| (\pm\frac{1}{2}, \pm\frac{1}{2}) \right\|^2 \\
			&= \frac{1}{2}.
		\end{align*}
		\begin{align*}
			&\sigma_{\star, \mathrm{NICE}}^2 = \frac{1}{C(4, 2)} \sum_{i < j} \left\| \frac{a_{i} + a_{j}}{2} \right\|^2 \\
			&= \frac{1}{6} \sum_{i<j} \left\| \frac{a_{i} + a_{j}}{2} \right\|^2 \\
			&= \frac{1}{6} \left( \left[ \left\| \frac{a_1 + a_3}{2} \right\|^2 + \left\| \frac{a_2 + a_4}{2} \right\|^2 \right] + 2 \times \left\| \frac{a_{i_1} + a_{i_2}}{2} \right\|^2 \right) \\
			&= \frac{1}{6} \left( 0 + 2 \times 2 \times \frac{1}{2} \right) \\
			&= \frac{1}{3} \\
			&= \frac{2}{3} \times \sigma_{\star, \mathrm{SS}}^2 \\
			&\leq \sigma_{\star, \mathrm{SS}}^2
		\end{align*}
		
	\end{proof}
\end{example}
To select the optimal clustering, we will choose the clustering that minimizes $\sigma^2_{\star, \mathrm{SS}}$.
\begin{definition}[Stratified sampling optimal clustering]\label{def:SS-clustering}
	Denote the clustering of workers into blocks as $\mathcal{C}_b := \{C_1, C_2, \ldots, C_b\}$, such that the disjoint union of all clusters $C_1 \cup C_2 \cup \ldots \cup C_b = [n]$. Define \emph{block sampling Optimal Clustering} as the clustering configuration that minimizes $\sigma_{\star, \mathrm{SS}}^2$, formally given by:
	\[
	\mathcal{C}_{b, \mathrm{SS}} := \argmin_{\mathcal{C}_b} \sigma^2_{\star, \mathrm{SS}}(\mathcal{C}_b).
	\]
\end{definition}
\begin{restatable}{lemma}{lemma5}
		Given \cref{asm:uniform-clustering}, the following holds: $\sigma_{\star, \mathrm{SS}}^2\left(\mathcal{C}_{b, \mathrm{SS}}\right) \leq \sigma_{\star, \mathrm{NICE}}^2$ ~for arbitrary $b$.
		\begin{proof}
			\item Denote $\mathbf{i}_b \eqdef (i_1, \cdots, i_b)$, $\mathbf{C}_b \eqdef C_1 \times \cdots \times C_b$, and $S_{\mathbf{i}_b} \eqdef \left\|\frac{1}{\tau}\sum_{i \in \mathbf{i}_b} \nabla f_i(x_\star)\right\|$. 
			\begin{align*}
				&\sigma_{\star, \mathrm{NICE}}^2 = \frac{1}{C(n, \tau)} \sum_{C \subseteq [n], |C| = \tau} \left\|\frac{1}{\tau}\sum_{i \in C} \nabla f_i(x_\star)\right\|^2 \\
				&= \frac{1}{C(n, b)} \sum_{\mathbf{i}_b \subseteq [n]} S_{\mathbf{i}_b} \\
				&\stackrel{1}{=} \frac{1}{\#_{\text{clusterizations}}} \sum_{\mathcal{C}_b} \frac{1}{b^b} \sum_{\mathbf{i}_b \in \mathbf{C}_b} S_{\mathbf{i}_b} \\
				&= \frac{1}{\#_{\text{clusterizations}}} \sum_{\mathcal{C}_b} \sigma^2_{\star, \mathrm{SS}}(\mathcal{C}_b) \\
				&\stackrel{2}{\geq} \sigma^2_{\star, \mathrm{SS}}(\mathcal{C}_{b, \mathrm{SS}, \min})
			\end{align*}
			Equation 1 holds because, in every clusterization $\mathcal{C}_b$, there are $\frac{1}{b^b}$ possible sample combinations $\mathbf{i}_b$. Due to symmetry, one can conclude that each combination $S_{\mathbf{i}_b}$ is counted the same number of times. Equation 2 follows from the definition of $\mathcal{C}_{b, \mathrm{SS}, \min}$ as the clustering that minimizes $\sigma^2_{\star, \mathrm{SS}}$, according to \cref{def:SS-clustering}.
		\end{proof}
	\end{restatable}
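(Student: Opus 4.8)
\emph{Proof strategy.} The plan is to realize both $\sigma_{\star,\mathrm{NICE}}^2$ and $\sigma_{\star,\mathrm{SS}}^2(\cdot)$ as averages over the same family of numbers, and then use that $\mathcal{C}_{b,\mathrm{SS}}$ is by definition a minimizer. Fix $\tau=b$ in nice sampling so that both samplings draw exactly $b$ workers. Write $a_i\eqdef\nabla f_i(x_\star)$ and, for a $b$-element subset $C\subseteq[n]$, $S_C\eqdef\sqn{\frac{1}{b}\sum_{i\in C}a_i}$. Under \cref{asm:uniform-clustering} we have $n=b^2$, $|C_j|=b$ for all $j$, and $p_i=\tfrac1b$; substituting into \cref{eqn:main_8003} gives $\sigma_{\star,\mathrm{NICE}}^2=\binom{n}{b}^{-1}\sum_{|C|=b}S_C$, and for any clustering $\mathcal{C}_b=(C_1,\dots,C_b)$ of $[n]$ into $b$ labeled blocks of size $b$ the stratified neighborhood collapses to $\sigma_{\star,\mathrm{SS}}^2(\mathcal{C}_b)=b^{-b}\sum_{(i_1,\dots,i_b)}S_{\{i_1,\dots,i_b\}}$, the sum running over the $b^b$ transversals $(i_1,\dots,i_b)$ with $i_j\in C_j$. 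Since the blocks are disjoint, every transversal consists of $b$ distinct indices, so each summand is genuinely an instance of $S_C$.

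Next I would average $\sigma_{\star,\mathrm{SS}}^2(\mathcal{C}_b)$ over all clusterings of $[n]$ into $b$ labeled blocks of size $b$, of which there are $N\eqdef n!/(b!)^b$. The crux is a symmetry/double-counting claim: by invariance under relabeling of $[n]$, every $b$-subset $C$ is a transversal of exactly the same number $M$ of such clusterings, so counting (clustering, transversal) pairs in two ways yields $N\cdot b^b=M\cdot\binom{n}{b}$. Consequently $\frac1N\sum_{\mathcal{C}_b}\sigma_{\star,\mathrm{SS}}^2(\mathcal{C}_b)=\frac{1}{N b^b}\sum_{\mathcal{C}_b}\sum_{\text{transversals}}S_{\{i_1,\dots,i_b\}}=\frac{M}{N b^b}\sum_{|C|=b}S_C=\binom{n}{b}^{-1}\sum_{|C|=b}S_C=\sigma_{\star,\mathrm{NICE}}^2$, where the third equality uses that each $b$-subset is counted $M$ times and the last uses $M\binom{n}{b}=Nb^b$. (Equivalently, one may phrase this as averaging $\sigma_{\star,\mathrm{SS}}^2$ of $\pi\cdot\mathcal{C}_b^0$ over all permutations $\pi$ of $[n]$, for any fixed starting clustering $\mathcal{C}_b^0$.)

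Finally, because $\mathcal{C}_{b,\mathrm{SS}}=\argmin_{\mathcal{C}_b}\sigma_{\star,\mathrm{SS}}^2(\mathcal{C}_b)$ by \cref{def:SS-clustering}, the minimum is at most the average just computed, so $\sigma_{\star,\mathrm{SS}}^2(\mathcal{C}_{b,\mathrm{SS}})\le\frac1N\sum_{\mathcal{C}_b}\sigma_{\star,\mathrm{SS}}^2(\mathcal{C}_b)=\sigma_{\star,\mathrm{NICE}}^2$, which is the assertion. I expect the only genuine obstacle to be making the symmetry step airtight — precisely, that every $b$-subset serves as a transversal of equally many size-$b$-block clusterings, with careful bookkeeping of ordered blocks versus unordered subsets; everything else is substitution plus the trivial ``minimum $\le$ average'' inequality. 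Note that the same argument does \emph{not} yield the analogous comparison with $\sigma_{\star,\mathrm{BS}}^2$, since block sampling is not an average of stratified sampling over clusterings — indeed \cref{example:SS_worse_than_BS_NICE} shows $\sigma_{\star,\mathrm{SS}}^2$ can exceed $\sigma_{\star,\mathrm{BS}}^2$ for a bad clustering — which is why that comparison is only established for $b=2$ in \cref{sec:ss_vs_bs_nice}.
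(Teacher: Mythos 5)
Your proposal is correct and follows essentially the same route as the paper's proof: write $\sigma_{\star,\mathrm{NICE}}^2$ (with $\tau=b$) as the average over all size-$b$-block clusterings of $\sigma_{\star,\mathrm{SS}}^2(\mathcal{C}_b)$ via a symmetry/double-counting argument, then conclude with ``minimum $\le$ average'' using \cref{def:SS-clustering}. Your explicit identity $N b^b = M\binom{n}{b}$ merely makes rigorous the symmetry step the paper states informally (and you correctly use the squared norm in $S_C$, fixing a small typo in the paper's definition of $S_{\mathbf{i}_b}$), so there is no substantive difference in approach.
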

	\begin{example}
		Consider the number of clusters and the size of each cluster, with \(b=2\), under \cref{asm:uniform-clustering}. Then, \(\sigma_{\star, \mathrm{SS}}^2\left(\mathcal{C}_{b, \mathrm{SS}}\right) \leq \sigma^2_{\star, \mathrm{BS}}\).
		\begin{proof}
			Let \(n=4\), \(b=2\). Denote \(\forall i \enspace a_i = \nabla f_i(x_*)\). Define \(S^2 := \sum_{i<j} \left\| \frac{a_i + a_j}{2} \right\|^2\).
			\begin{align*}
				&\sigma^2_{\star, \text{SS}} = \frac{1}{4} \left(S^2 - \left\| \frac{a_{C_1^1} + a_{C_1^2}}{2} \right\|^2 - \left\| \frac{a_{C_2^1} + a_{C_2^2}}{2} \right\|^2\right)\\
				&= \frac{1}{4} \left(S^2 - 2\sigma^2_{\star, \text{BS}}\right)
			\end{align*}
			$\mathcal{C}_{b, \mathrm{SS}}$ clustering minimizes \(\sigma^2_{\star, \text{SS}}\), thereby maximizing \(\sigma^2_{\star, \text{BS}}\). Thus,
			\begin{align*}
				&\sigma^2_{\star, \text{SS}} = \frac{1}{4} \left( \left[ \left\| \frac{a_{C_1^1} + a_{C_2^1}}{2} \right\|^2 + \left\| \frac{a_{C_1^2} + a_{C_2^2}}{2} \right\|^2 \right] + \left[ \left\| \frac{a_{C_1^1} + a_{C_2^2}}{2} \right\|^2 + \left\| \frac{a_{C_1^2} + a_{C_2^1}}{2} \right\|^2 \right] \right)\\
				&= \frac{1}{4} \left( 2\sigma^2_{\star, \text{BS}}\left((C_1^1, C_2^1), (C_1^2, C_2^2) \right) + 2\sigma^2_{\star, \text{BS}}\left((C_1^1, C_2^2), (C_1^2, C_2^1) \right) \right)\\
				&= \frac{1}{2} \left( \sigma^2_{\star, \text{BS}}\left((C_1^1, C_2^1), (C_1^2, C_2^2) \right) + \sigma^2_{\star, \text{BS}}\left((C_1^1, C_2^2), (C_1^2, C_2^1) \right) \right)\\
				&\leq \sigma^2_{\star, \text{BS}}.
			\end{align*}
		\end{proof}
	\end{example}
	
	However, it is possible that this relationship might hold more generally. Empirical experiments for different configurations, such as \(b=3\), support this possibility. For example, with \(n=9\), \(b=3\), and \(d=10\), Python simulations where gradients \(\nabla f_i\) are sampled from \(\mathcal{N}(0, 1)\) and \(\mathcal{N}(e, 1)\) across \(1000\) independent trials, show that \(\sigma^2_{\star, \text{SS}} \leq \sigma^2_{\star, \text{BS}}\).
	Question of finding theoretical proof for arbitraty $n$ remains open and has yet to be addressed in the existing literature.

	\subsection{Different Approaches of Federated Averaging}\label{sec:fedavg-sppm}
	Proof of Theorem \ref{thm:FedProx-SPPM-AS}:
	\begin{proof}
		\begin{align*}
			\sqn{x_{t} - x_\star} &= \sqn{\sum_{i\in S_{t}}\frac{1}{\left|S_{t}\right|}\prox_{\gamma f_{i}}(x_{t-1}) - \frac{1}{\left|S_{t}\right|}\sum_{i\in S_{t}} x_\star}\\
			&\stackrel{(\cref{fact:fact1})}{=} \sqn{\sum_{i\in S_{t}}\frac{1}{\left|S_{t}\right|}\left[\prox_{\gamma f_{i}}(x_{t-1}) - \prox_{\gamma f_{i}}(x_\star + \gamma \nabla f_{i}(x_\star))\right]}\\
			&\stackrel{Jensen}{\leq}\sum_{i\in S_{t}}\frac{1}{\left|S_{t}\right|}\sqn{\left[\prox_{\gamma f_{i}}(x_{t-1}) - \prox_{\gamma f_{i}}(x_\star + \gamma \nabla f_{i}(x_\star))\right]}\\
			&\stackrel{(\cref{fact:fact2})}{\leq}\sum_{i\in S_{t}}\frac{1}{\left|S_{t}\right|}\frac{1}{(1+\gamma \mu_i)^2} \sqn{x_{t-1} - (x_\star + \gamma \nabla f_{i}(x_\star))}
		\end{align*} 
		\begin{align*}
			&\ec[S_t\sim\mathcal{S}]{\sqn{x_{t}-x_\star}|x_{t-1}}\\
			&\leq\ec[S_t\sim\mathcal{S}]{\sum_{i\in S_{t}}\frac{1}{\left|S_{t}\right|}\frac{1}{(1+\gamma \mu_{i})^2} \sqn{\left(x_{t-1} - x_\star\right) - \gamma \nabla f_{i}(x_\star))}|x_{t-1}}\\
			&\stackrel{\text{Young, }\alpha_i>0}{\leq}\ec[S_t\sim\mathcal{S}]{\sum_{i\in S_{t}}\frac{1}{\left|S_{t}\right|}\frac{1}{(1+\gamma \mu_{i})^2} \left(\left(1+\alpha_i\right)\sqn{x_{t-1} - x_\star} + \left(1+\alpha_i^{-1}\right)\sqn{\gamma \nabla f_{i}(x_\star))}\right)|x_{t-1}}\\
			&\stackrel{\alpha_i=\gamma\mu_i}{=}\ec[S_t\sim\mathcal{S}]{\sum_{i\in S_{t}}\frac{1}{\left|S_{t}\right|}\frac{1}{(1+\gamma \mu_{i})^2} \left(\left(1+\gamma\mu_i\right)\sqn{x_{t-1} - x_\star} + \left(1+\frac{1}{\gamma\mu_i}\right)\sqn{\gamma \nabla f_{i}(x_\star))}\right)|x_{t-1}}\\
			&=\ec[S_t\sim\mathcal{S}]{\sum_{i\in S_{t}}\frac{1}{\left|S_{t}\right|}\left(\frac{1}{1+\gamma \mu_{i}} \sqn{x_{t-1} - x_\star} + \frac{\gamma}{(1 + \gamma\mu_i)\mu_i}\sqn{\nabla f_{i}(x_\star))}\right)|x_{t-1}}\\
			&=\ec[S_t\sim\mathcal{S}]{\frac{1}{\left|S_{t}\right|}\sum_{i\in S_{t}}\frac{1}{1+\gamma \mu_{i}}|x_{t-1}}\sqn{x_{t-1} - x_\star} + \ec[S_t\sim\mathcal{S}]{\frac{1}{\left|S_{t}\right|}\sum_{i\in S_{t}}\frac{\gamma}{(1 + \gamma\mu_i)\mu_i}\sqn{\nabla f_{i}(x_\star))}|x_{t-1}}\\
		\end{align*}
		By applying tower property one can get the following: 
		\begin{align*}
			&\ec[S_t\sim\mathcal{S}]{\sqn{x_{t}-x_\star}}\\
			&=\ec[S_t\sim\mathcal{S}]{\frac{1}{\left|S_{t}\right|}\sum_{i\in S_{t}}\frac{1}{1+\gamma \mu_{i}}}\sqn{x_{t-1} - x_\star} + \ec[S_t\sim\mathcal{S}]{\frac{1}{\left|S_{t}\right|}\sum_{i\in S_{t}}\frac{\gamma}{(1 + \gamma\mu_i)\mu_i}\sqn{\nabla f_{i}(x_\star))}}\\
			&=A_\mathcal{S}\sqn{x_{t-1} - x_\star} + B_\mathcal{S}.
		\end{align*}
		where  $A_\mathcal{S}\eqdef\ec[S_t\sim\mathcal{S}]{\frac{1}{\left|S_{t}\right|}\sum_{i\in S_{t}}\frac{1}{1+\gamma \mu_{i}}}$ and $B_\mathcal{S}\eqdef\ec[S_t\sim\mathcal{S}]{\frac{1}{\left|S_{t}\right|}\sum_{i\in S_{t}}\frac{\gamma}{(1 + \gamma\mu_i)\mu_i}\sqn{\nabla f_{i}(x_\star))}}$.
		By directly applying \cref{fact:fact3}:
		\[
		\ec[S_t\sim\mathcal{S}]{\sqn{x_{t}-x_\star}}\leq A_\mathcal{S}^t\sqn{x_0-x_\star}+\frac{B_\mathcal{S}}{1-A_\mathcal{S}}.
		\]
	\end{proof}

		\begin{restatable}[Inexact formulation of \algname{SPPM-AS}]{lemma}{lemma2}\label{lemma:sppm-as-inexact}
			Let $b > 0\in \mathbb{R}$ and define $\widetilde{\prox}_{\gamma f}(x)$ such that $\forall x\sqn{\widetilde{\prox}_{\gamma f}(x) - \prox_{\gamma f}(x)}\leq b$. Let \cref{asm:differential} and \cref{asm:strongly_convex} hold. Let $x_0\in\mathbb{R}^d$ be an arbitrary starting point. Then for any $t\geq0$ and any $\gamma>0$, $s>0$, the iterates of \algname{SPPM-AS} satisfy
			$$
			\ec{\sqn{x_{t} - x_\star}} \leq \left(\frac{1+s}{(1+\gamma\mu)^2}\right)^t\sqn{x_0-x_\star}+\frac{\left(1+s\right)\left(\gamma^2 \sigma^2_{\star} + s^{-1}b(1+\gamma\mu)^2\right)}{\gamma^2\mu^2+2\gamma\mu-s}.
			$$
		\end{restatable}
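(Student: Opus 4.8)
The plan is to follow the simplified \algname{SPPM-AS} analysis (\cref{sec:proof_simple} and the proof of \cref{thm:sppm_as}) almost verbatim, inserting one extra step at the start that absorbs the oracle inexactness. Write the inexact iterate as $x_{t+1} = \widetilde{\prox}_{\gamma f_{S_t}}(x_t) = \prox_{\gamma f_{S_t}}(x_t) + e_t$, where by hypothesis $\sqn{e_t}\le b$ holds deterministically (uniformly over the realized function $f_{S_t}$). For any $s>0$ the elementary inequality $\sqn{u+v}\le (1+s)\sqn{u} + (1+s^{-1})\sqn{v}$ gives
\[
\sqn{x_{t+1}-x_\star} \le (1+s)\sqn{\prox_{\gamma f_{S_t}}(x_t) - x_\star} + (1+s^{-1})b .
\]
So all that is left is to bound the first term on the right, which is exactly what the exact theory already does.

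Next I would reproduce the exact-case estimate. By \cref{fact:fact1}, $x_\star = \prox_{\gamma f_{S_t}}\big(x_\star + \gamma\nabla f_{S_t}(x_\star)\big)$; by \cref{lem:0002} the random function $f_{S_t}$ is strongly convex with constant at least $\mu_{\mathrm{AS}}$, so \cref{fact:fact2} (contractivity of the prox) yields
\[
\sqn{\prox_{\gamma f_{S_t}}(x_t) - x_\star} \le \tfrac{1}{(1+\gamma\mu_{\mathrm{AS}})^2}\Big(\sqn{x_t-x_\star} - 2\gamma\ev{\nabla f_{S_t}(x_\star), x_t-x_\star} + \gamma^2\sqn{\nabla f_{S_t}(x_\star)}\Big).
\]
Taking $\ec[S_t\sim\mathcal{S}]{\cdot\,|\,x_t}$, using $\ec[S_t\sim\mathcal{S}]{\nabla f_{S_t}(x_\star)} = \nabla f(x_\star) = 0$ and $\ec[S_t\sim\mathcal{S}]{\sqn{\nabla f_{S_t}(x_\star)}} = \sigma_{\star,\mathrm{AS}}^2$ (as in the proof of \cref{thm:sppm_as}), and then the tower property (\cref{fact:tower}), I arrive at the scalar recursion $\ec{\sqn{x_{t+1}-x_\star}} \le a\,\ec{\sqn{x_t-x_\star}} + c$ with $a = \tfrac{1+s}{(1+\gamma\mu_{\mathrm{AS}})^2}$ and $c = \tfrac{(1+s)\gamma^2\sigma_{\star,\mathrm{AS}}^2}{(1+\gamma\mu_{\mathrm{AS}})^2} + (1+s^{-1})b$.

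Finally I would invoke the recurrence lemma \cref{fact:fact3} with the $\tfrac{1}{1-a}$ branch, which needs $0<a<1$, i.e. $1+s < (1+\gamma\mu_{\mathrm{AS}})^2$, equivalently $\gamma^2\mu_{\mathrm{AS}}^2 + 2\gamma\mu_{\mathrm{AS}} - s > 0$ — exactly the positivity of the denominator in the claim. This gives $\ec{\sqn{x_t-x_\star}} \le a^t\sqn{x_0-x_\star} + \tfrac{c}{1-a}$, and it remains to simplify: $1-a = \tfrac{\gamma^2\mu_{\mathrm{AS}}^2 + 2\gamma\mu_{\mathrm{AS}} - s}{(1+\gamma\mu_{\mathrm{AS}})^2}$, while $c = \tfrac{1+s}{(1+\gamma\mu_{\mathrm{AS}})^2}\big(\gamma^2\sigma_{\star,\mathrm{AS}}^2 + s^{-1}b(1+\gamma\mu_{\mathrm{AS}})^2\big)$, so $\tfrac{c}{1-a} = \tfrac{(1+s)(\gamma^2\sigma_{\star,\mathrm{AS}}^2 + s^{-1}b(1+\gamma\mu_{\mathrm{AS}})^2)}{\gamma^2\mu_{\mathrm{AS}}^2 + 2\gamma\mu_{\mathrm{AS}} - s}$, which is the stated neighborhood term (here $\mu$ and $\sigma_\star^2$ in the statement are read as $\mu_{\mathrm{AS}}$ and $\sigma_{\star,\mathrm{AS}}^2$). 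I do not expect a genuine obstacle: the one point requiring care is the uniformity of the bound $\sqn{e_t}\le b$, which is what lets the Young split produce a deterministic constant $(1+s^{-1})b$ while leaving a bona fide prox term for the exact analysis; the rest is bookkeeping — matching $\tfrac{c}{1-a}$ to the advertised fraction and tracking the feasibility window $s < \gamma^2\mu_{\mathrm{AS}}^2 + 2\gamma\mu_{\mathrm{AS}}$.
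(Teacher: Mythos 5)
Your proposal is correct and follows essentially the same route as the paper's proof: a Young split with parameter $s$ that absorbs the oracle error into the deterministic term $(1+s^{-1})b$, the exact \algname{SPPM-AS} contraction argument (fixed point plus prox contractivity, then expectations) for the remaining term, and the recurrence lemma (\cref{fact:fact3}) with $A = \frac{1+s}{(1+\gamma\mu_{\mathrm{AS}})^2}$, followed by the same algebraic simplification of $\frac{B}{1-A}$. Your explicit observation that the $\frac{1}{1-A}$ branch needs $s < \gamma^2\mu_{\mathrm{AS}}^2 + 2\gamma\mu_{\mathrm{AS}}$ (positivity of the stated denominator) is a condition the paper leaves implicit, but otherwise the two arguments coincide.
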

	\begin{proof}[Proof of Lemma \ref{lemma:sppm-as-inexact}]
		We provide more general version of \algname{SPPM} proof
		\begin{align*}
			\sqn{x_{t+1} - x_\star} &= \sqn{\widetilde{\prox}_{\gamma f_{\xi_t}(x_t)} - \prox_{\gamma f_{\xi_t}}(x_t)  + \prox_{\gamma f_{\xi_t}}(x_t) - x_\star}\\
			&\stackrel{Young, s>0}{\leq}(1+s^{-1})\sqn{\widetilde{\prox}_{\gamma f_{\xi_t}}(x_t) - \prox_{\gamma f_{\xi_t}}}(x_t) + (1+s)\sqn{\prox_{\gamma f_{\xi_t}}(x_t) - x_\star}\\
			&\leq (1 + s^{-1}) b + (1+s)\sqn{\prox_{\gamma f_{\xi_t}}(x_t) - x_\star}.
		\end{align*} 
		Then proof follows same path as proof Theorem \ref{thm:sppm_as} and we get
		\begin{align*}
			&\ec{\sqn{x_{t+1} - x_\star}} \leq(1+s^{-1})b + (1+s) \frac{1}{(1+\gamma\mu)^2}\left(\sqn{x_t - x_\star} + \gamma^2 \sigma^2_{\star} \right)\\
			&=\frac{1+s}{(1+\gamma\mu)^2}\left(\sqn{x_t - x_\star}  + \left[\gamma^2 \sigma^2_{\star} + s^{-1}b(1+\gamma\mu)^2\right]\right).
		\end{align*}
azc		
		It only remains to solve the above recursion. Luckily, that is exactly what \cref{fact:fact3} does. In particular, we use it with $s_t = \ec{\sqn{x_t - x_\star}}, A = \frac{1+s}{(1+\gamma\mu)^2}$ and $B = \frac{\left(1+s\right)\left(\gamma^2 \sigma^2_{\star} + s^{-1}b(1+\gamma\mu)^2\right)}{(1+\gamma\mu)^2}$ to get 
		\begin{align*}
			\ec{\sqn{x_t - x_\star}}
			&\leq A^t\sqn{x_0 - x_\star} + B\frac{1}{1-A}\\
			&\leq A^t\sqn{x_0 - x_\star} + B\frac{(1+\gamma\mu)^2}{(1+\gamma\mu)^2-1-s}\\
			&\leq A^t\sqn{x_0-x_\star}+\frac{\left(1+s\right)\left(\gamma^2 \sigma^2_{\star} + s^{-1}b(1+\gamma\mu)^2\right)}{(1+\gamma\mu)^2-1-s}\\
			&=\left(\frac{1+s}{(1+\gamma\mu)^2}\right)^t\sqn{x_0-x_\star}+\frac{\left(1+s\right)\left(\gamma^2 \sigma^2_{\star} + s^{-1}b(1+\gamma\mu)^2\right)}{\gamma^2\mu^2+2\gamma\mu-s}.
		\end{align*}
	\end{proof}

\end{document}